\documentclass[11pt,a4paper]{article}
\usepackage[utf8]{inputenc}
\usepackage[T1]{fontenc}
\usepackage{lmodern}
\usepackage[english]{babel}
\usepackage[margin=2.5cm]{geometry} % Page margins
\usepackage{graphicx} % For including figures
\usepackage{booktabs} % For better-looking tables
\usepackage{natbib} % For author-year citations
\usepackage{subcaption}
\usepackage{amsmath} % For mathematical equations
\usepackage{amssymb} % For AMS symbols
\usepackage{setspace} % For line spacing
\usepackage{caption} % For customizing captions
\usepackage[colorlinks=true, citecolor=blue, linkcolor=blue, urlcolor=blue]{hyperref} % For hyperlinks

\usepackage{graphicx}
\usepackage{amsmath,amsthm,amssymb,amsfonts}
\usepackage[italicdiff]{physics}
\usepackage[T1]{fontenc}
\usepackage{wrapfig}
\usepackage{lmodern,mathrsfs}
\usepackage[inline,shortlabels]{enumitem}
\usepackage[thinc]{esdiff}
\usepackage{mathtools}
\usepackage{microtype}
\usepackage{graphicx,wrapfig}
\usepackage{booktabs}
\usepackage{comment}
\usepackage{relsize}
\usepackage{adjustbox}
\usepackage{float}
\usepackage{multirow}
\usepackage{placeins} 
\usepackage{epsfig,wrapfig,graphicx} 
\usepackage{threeparttable,multirow}
\usepackage{color}
\usepackage{adjustbox,booktabs,arydshln,multirow,url}
\usepackage{algorithm}
\usepackage{algorithmic}
  {\begin{Sbox}\begin{minipage}}%
  {\end{minipage}\end{Sbox}\fbox{\TheSbox}}

\usepackage{hyperref}
\usepackage{tikz}
 
\newcommand{\E}{{\mathbb{E}}}
\newcommand{\bX}{{\boldsymbol X}}
\newcommand{\bY}{{\boldsymbol Y}}

\newcommand{\bx}{{\boldsymbol x}} 
\newcommand{\bz}{{\boldsymbol z}}

\newcommand{\bb}{{\boldsymbol b}}
\newcommand{\be}{{\boldsymbol e}}
\newcommand{\bw}{{\boldsymbol w}}
\newcommand{\bW}{{\boldsymbol W}}

\newcommand{\bv}{{\boldsymbol v}}
\newcommand{\br}{{\boldsymbol r}}
\newcommand{\bh}{{\boldsymbol h}}

\newcommand{\bp}{{\boldsymbol p}}
\newcommand{\bA}{{\boldsymbol A}}
\newcommand{\bB}{{\boldsymbol B}}
\newcommand{\ba}{{\boldsymbol a}}

\newcommand{\bV}{{\boldsymbol V}}
\newcommand{\bG}{{\boldsymbol G}}

\newcommand{\bu}{{\boldsymbol u}}

\newcommand{\mR}{\mathbb{R}}

\newcommand{\mbE}{{\mathbb{E}}}

\newcommand{\bbeta}{{\boldsymbol \beta}}

\newcommand{\btheta}{{\boldsymbol \theta}}
\newcommand{\bTheta}{{\boldsymbol \Theta}}

\newcommand{\bepsilon}{{\boldsymbol \epsilon}}

\newcommand{\bsigma}{{\boldsymbol \sigma}}
\newcommand{\bSigma}{{\boldsymbol \Sigma}}
\newcommand{\bpi}{\boldsymbol{\pi}}
\newcommand{\corr}{\mbox{Corr}}
\newcommand{\Var}{{\mbox{Var}}}

\newcommand{\Cov}{{\mbox{Cov}}}
\newcommand{\diag}{{\mbox{diag}}}
\newcommand{\bYmis}{{\boldsymbol Y}_{{\rm mis}}}

\newtheorem{theorem}{Theorem}
\newtheorem{lemma}{Lemma}

\newtheorem{proposition}{Proposition}

\newtheorem{remark}{Remark}
\newtheorem{assumption}{Assumption}

\setcitestyle{round}

\let\oldfootnote\footnote
\renewcommand{\footnote}{\fontsize{9}{11}\selectfont\oldfootnote}

\title{Sublinearly Structured Deep Neural Networks Achieve Feature Learning Consistency for Compositional Functions}
\author{
    Sehwan Kim   \\
   Ewha Womans University, Seoul 03760, 
  Republic of Korea \\
    \and
    Yan Sun \\
New Jersey Institute of Technology,
Newark, NJ 07102, USA \\
    \and
    Faming Liang \thanks{Corresponding author: fmliang@purdue.edu} \\
    Purdue University, West Lafayette, IN 47907, USA 
}
\date{} % Remove date

\begin{document}

\maketitle

\begin{abstract}
Over the past decade, deep neural networks (DNNs) have achieved remarkable success on complex machine-learning tasks, yet the theoretical foundations of their performance remain incomplete. From a statistical viewpoint, a natural question is: \emph{can DNNs attain feature-learning and prediction consistency comparable to that of classical models?} While a full characterization is open, we provide positive results for a broad subclass. We establish feature-learning  consistency guarantees for \emph{sublinearly structured DNNs}—architectures whose input/output dimensions and number of hidden neurons grow sublinearly with the sample size—when learning  hierarchically compositional target functions. Importantly, this 
consistency still holds even in the conventional ``over-parameterized'' regime where the total number of parameters exceeds the number of training samples. Empirically, sublinearly structured DNNs match or surpass wide DNNs in prediction. A structural audit further indicates that widely used convolutional neural networks (CNNs), including AlexNet, VGGNet, ResNet, GoogLeNet, are sublinearly structured on their image classification benchmarks. We further prove that the sublinearly structured DNNs achieve universal approximation for hierarchically compositional functions in the large-sample limit. Moreover, images exhibit an inherent hierarchical, compositional structure. Taken together, these results explain, through a statistical lens, why many large-scale deep learning models succeed after adequate training on massive image datasets.
\end{abstract}

% AMS subject classification

\textbf{Keywords:}  Compositional Function, Eigen Analysis,   Feature Learning Consistency, Over Parameterization, Stochastic Neural Network

\textbf{Mathematics Subject Classification (2020):} 62M45, 62F12

\newpage

\section{Introduction}

Over the past decade, DNNs have made major breakthroughs in many research domains, including image generation, protein folding, and language processing. 
% deep neural networks (DNNs) have led to major breakthroughs in many 
% research areas, such as image generation, protein folding, and language processing.   
The ability of these models to automatically learn the problem-specific features hidden in the training data is considered as a major factor contributing to their success, see e.g. \cite{Radhakrishnan2024MechanismFF}, \cite{Shi2022ATA}, and \cite{Yang2021TensorPI}. 
Therefore, understanding the mechanism of feature learning and, by extension, designing the network structure for ensuring the hidden features to be effectively learned has attracted much attention in recent literature.

% Towards the goals, researches have been pursued in different directions. 
% One stream of works is benign overfitting, which aims to explain the phenomenon that interpolating models
% generalize well in the presence of noisy data. The 
% 1. sparse deep learning, lottery paper;  2. benign overfitting and double descent;  
% 3. works cited in  \cite{Radhakrishnan2024MechanismOF}, \cite{Shi2022ATA}, and \cite{Yang2021TensorPI}. 
% \textcolor{red}{to be detailed}

Feature learning for linear models has been well studied in statistics, see e.g., \cite{tibshirani1996regression}, \cite{FanL2001}, and \cite{loh2017support}, where one aims to identify important covariates 
through estimating their regression coefficients. For DNNs, we follow \cite{Radhakrishnan2024MechanismFF} to define a neural feature as an eigenvector of $\bw_l^{T}\bw_l$, where $\bw_l \in \mathbb{R}^{d_l\times d_{l-1}}$ denotes the weight matrix of hidden layer $l$, and $d_l$ denotes the width of layer $l$. It is easy to see that the regression coefficient vector of the linear model can be viewed 
as a special case of this general definition (with $d_1=1$ and $d_0=p$ for the number of covariates, and rescaled by $\bw_1\bw_1^\top$). 
%Thus, if a method produces a consistent estimate of $\bw_l$, it can also be said to be ``feature learning consistent''. 
%As discussed later,  since $\bw_l$ is only identifiable up to certain loss-invariant transformations,  the consistency of neural feature estimates serves as a reliable measure for the consistency of $\bw_l$'s estimates.  
From a statistical perspective, a natural question is whether a DNN can achieve feature learning consistency similar to that of linear models.

To bridge the gap between linear models and deep learning, 
\cite{SunLiang2022kernel} and \cite{LiangSLiang2022} proposed a 
new type of neural network -- stochastic neural network (abbreviated as StoNet).
This network is formulated as a composition of many linear/logistic regressions and provides a framework for transferring theory and methods from linear models to DNNs. Additionally, the StoNet offers a convenient way for addressing many important problems encountered in modern data science, such as sufficient dimension reduction \citep{LiangSLiang2022}, uncertainty quantification \citep{SunLiang2025UQ}, 
and causal effect estimation \citep{FangLiang2024causal,FangLiang2025}. 
These problems are otherwise hard to address using conventional DNNs. 
In this paper, based on the asymptotic equivalence between DNN and StoNet \citep{LiangSLiang2022},
%by treating the StoNet as a surrogate of the DNN,
we prove that the sublinearly structured  DNN achieves 
feature-learning consistency for hierarchically compositional functions in which each constituent depends on at most a bounded number 
 of variables.

 We refer to a DNN as {\it a sublinearly structured DNN} (or sublinear DNN for short) if its structure satisfies the constraints:
\begin{equation} \label{eq:sublinear}
%\small
 d_0 \prec n, \qquad d_{h+1} \prec n, \qquad \sum_{l=1}^{h} d_l \prec n,
 \end{equation}
 where  
 $d_0=p$ denotes the input dimension, 
 $d_l$ denotes the width of layer $l$, $h$ denotes the number of hidden layers, 
 $d_{h+1}$ denotes the output dimension,
 and $n$ denotes the training sample size.
 Here, we denote $a_n \prec b_n$ if $\lim_{n\rightarrow \infty} \frac{a_n}{b_n} = 0$. In equation \eqref{eq:sublinear}, the  dependence of $d_i$'s on  $n$ is implicit.
 This definition of sublinear DNN applies to fully connected neural networks and may require slight modification for convolutional neural networks (CNNs), where the filter size used at each convolutional layer must be considered in defining the corresponding StoNet (see Section \ref{deepCNN}). It is worth noting that the number of parameters in a sublinear DNN can still greatly exceed $n$; in other words, sublinear DNNs may be over-parameterized.

 Hierarchical compositional functions are multivariate maps built as compositions of low-arity modules arranged in a tree or directed acyclic graph (DAG). They also include conventional functions with a fixed input dimension as special cases.
This structure is widespread in science and engineering (e.g., PDE operators, image processing pipelines) 
and captures rich multiscale interactions while keeping each module 
low-dimensional --- precisely the regime where DNNs 
admit a sparse structure and avoid the curse of dimensionality.

% The parameter estimation consistency provides a theoretical guarantee for 
% the asymptotic recovery of the underlying true system by sublinear DNNs. 
% %given its universal approximation ability (see e.g., \citealp{Kidger2020UniversalAW}; %\citealp{Park2020MinimumWF},
% % \citealp{Kim2023MinimumWF}).
% % and \citealp{Montanelli2019NEB}).  
% This, in turn, ensures consistent predictions for future values.
% % \textcolor{black}{Our results therefore delineate a structural regime for deep learning in which the existence of an ``optimal solution'' (in the sense of system recovery or feature learning consistency) is theoretically guaranteed.}
\textcolor{black}{Additionally, we prove that sublinear DNNs achieve 
universal approximation for hierarchically compositional 
functions in the large-sample limit.} 
We analyze the structures of many popular large-scale DNNs, such as AlexNet, VGGNet, ResNet, and GoogLeNet, used in image classification, and find that they 
are all sublinear on their image classification benchmarks. 
Furthermore, images exhibit an inherent hierarchical, compositional structure. 
Taken together, our results explain, through a statistical lens, why many large-scale DNNs succeed after adequate training on massive image datasets.
%This finding helps elucidate their success in prediction after adequate training on massive datasets. 
% Our numerical experiments show that sublinear DNNs can achieve prediction accuracy comparable to, or even exceeding, that of excessively wide DNNs 
% for hierarchically compositional functions.

To our knowledge, this work provides the first theoretical result on feature-learning consistency for DNNs in the over-parameterized regime, although our analysis is restricted to the class of hierarchically compositional functions. This represents a notable distinction from existing studies, where parameter estimation consistency, up to loss-invariant transformations, has been established under the assumption that the total number of parameters or effective parameters is of lower order than $n$ \citep{Farrell2021DeepNN,SunSLiang2021}. Parameter estimation consistency is stronger than feature-learning consistency, but it typically requires more restrictive network-size conditions. Our numerical experiments show that sublinear DNNs can achieve prediction accuracy comparable to, or even exceeding, that of wide DNNs for hierarchically compositional functions.

%\vspace{-0.15in}
\paragraph{Related Works}
Motivated by the observation of benign overfitting \citep{bartlett2020benign}, a line of work has emerged studying 
 the properties of wide DNNs, see e.g., 
 \cite{Yang2021TensorPI} and \cite{Woodworth2020KernelAR}. 
 These studies typically rely on two key assumptions: (i) the wide DNNs are trained using gradient-descent-type methods, and (ii) the scale of initialization is appropriately chosen. 
For instance, \cite{Yang2021TensorPI} noted that the standard initialization of neural networks do not admit infinite-width limits that can learn features and proposed a modification to enable feature learning in the limit. \cite{Woodworth2020KernelAR} showed how the scale of the initialization controls the transition between the kernel and feature learning regimes. 
Although our theory is restricted to 
the class of hierarchically compositional functions,  
it does not depend on the specific optimization algorithm used or the initialization scale adopted. 
On the restricted domain, our theory can be seen as complementary to those on wide DNNs, thereby providing a complete spectrum of theoretical insights on DNNs from narrow to wide.

% Another line of work related to ours focuses on the intrinsic dimensionality of large-scale DNNs (see e.g., \citealp{Li2018MeasuringTI} and \citealp{Aghajanyan2020IntrinsicDE}). These studies have shown that large-scale DNNs often possess a very low intrinsic dimension, which changes little even as models grow in width or depth. 
% This finding suggests that large-scale DNNs can be effectively reparameterized in a low-dimensional space, demonstrating the hierarchical composition nature of the 
% underlying target function and thus 
% making sublinear DNNs a viable and efficient option.

Another line of related work investigates the intrinsic dimensionality of large-scale DNNs (see, e.g., \citealp{Li2018MeasuringTI,Aghajanyan2020IntrinsicDE}). These studies show that such networks often have a very low intrinsic dimension that changes little as width or depth increase.  
Building on this observation, several low-rank adaptation methods have been proposed for fine-tuning large language models, such as LoRA \citep{Hu2021LoRALA} and QLoRA \citep{Dettmers2023QLoRAEF}. 
This, in turn, suggests that large-scale DNNs admit effective low-dimensional reparameterizations, reflecting the hierarchical, compositional structure of the underlying target function and making sublinear DNNs a viable and efficient option. 

\section{DNN and Its Stochastic Surrogate} \label{stonetsection}

%\vspace{-0.1in}
% Consider a DNN model with $h$ hidden layers.
% For the sake of simplicity, we assume that the same activation function $\psi(\cdot)$ is used for all hidden neurons. By separating the feeding and activation operators for each hidden neuron, we can rewrite the DNN in the form: 
Consider a DNN model with $h$ hidden layers defined as follows:
\begin{equation}\label{eq:DNN}
    \begin{split}
    \tilde{\bY}_1 &= \bb_1 + \bw_1\bX, \\
    \tilde{\bY}_i &= \bb_i + \bw_i\Psi(\tilde{\bY}_{i-1}), \quad i=2,3,\dots,h,\\
    \bY &= \bb_{h+1} + \bw_{h+1}\Psi(\tilde{\bY}_{h}) + \be_{h+1}, 
    \end{split}
\end{equation}
where $\be_{h+1}\sim N(0, \sigma_{h+1}^2 I_{d_{h+1}})$ is Gaussian random error; 
%$d_i$ denotes the width of layer $i$; 
 $\bX\in \mR^{d_0}$;
$\tilde{\bY}_i \in \mathbb{R}^{d_i}$ denotes the pre-activation of the $i$-th hidden layer;
$\bw_i \in \mathbb{R}^{d_i \times d_{i-1}}, \bb_i \in \mathbb{R}^{d_i}$ denotes the weights and bias of the $i$-th layer;
$\Psi(\cdot)$ is the element-wise activation function such that $\Psi(\tilde{\bY}_{i-1})=(\Psi(\tilde{Y}_{i-1,1}), \Psi(\tilde{Y}_{i-1,2}), \ldots \Psi(\tilde{Y}_{i-1,d_{i-1}}))^\top$. 
% $d_i$ denotes the width of layer $i$ for $i=1,2,\ldots,h+1$; 
% $\tilde{\bY}_i, \bb_i \in \mathbb{R}^{d_i}$ for $i=1,2,\ldots,h$;  
% $\bY, \bb_{h+1} \in \mathbb{R}^{d_{h+1}}$;
% $\Psi(\tilde{\bY}_{i-1})=(\psi(\tilde{Y}_{i-1,1}), \psi(\tilde{Y}_{i-1,2})$, $\ldots$, 
% $\psi(\tilde{Y}_{i-1,d_{i-1}}))^\top$ for $i=2,3,\ldots,h+1$, 
% %$\psi(\cdot)$ is the activation function, 
% and $\tilde{Y}_{i-1,j}$ is the $j$th element of $\tilde{\bY}_{i-1}$; $\bw_i \in \mathbb{R}^{d_i \times d_{i-1}}$ for $i=1,2,\ldots, h+1$; $\bX \in \mathbb{R}^{d_0}$, and $d_0=p$ denotes the dimension of $\bX$. 
%In formula (\ref{eq: nn reform}), we reformulate (\ref{eq: nn}) by separating the feeding operator ($\bw_i \bm{A}_{i-1} + \bm{b}_i$) and the activation operator $\psi(\cdot)$. As shown later in this paper, such transformation facilitates parameter estimation for the neural network when auxiliary noise are introduced into the model.
For simplicity, we consider only regression problems in (\ref{eq:DNN}). It can be easily extended to classification problems by replacing the third  equation in (\ref{eq:DNN}) with a logit model. 
% By replacing the third equation of (\ref{eq:DNN}) with a logit model, the DNN can be extended to classification problems.
%Even though neural network has proven to exhibit better prediction performance over traditional models for many complex problems, the determinism of neural network makes it hard to establish solid theory and apply it in sufficient dimension reduction field.

% The StoNet \citep{LiangSLiang2022} is {\it a probabilistic deep learning model} and constructed by adding auxiliary noise to
% $\tilde{\bY}_i$'s  in (\ref{eq:DNN}). Mathematically, the StoNet is given by 
The StoNet \citep{LiangSLiang2022} is {\it a probabilistic deep learning model} defined by adding auxiliary noise to the pre-activation $\tilde{\bY}_i$'s in the DNN model \eqref{eq:DNN}:
\begin{equation}\label{eq:stonet}
    \begin{split}
    \bY_1 &= \bb_1 + \bw_1\bX + \be_1, \\
    \bY_i &= \bb_i + \bw_i\Psi(\bY_{i-1})+ \be_i, \quad i=2,3,\dots,h,\\
    \bY &= \bb_{h+1} + \bw_{h+1}\Psi(\bY_{h}) + \be_{h+1}, 
    \end{split}
\end{equation}
It can be viewed as a composition of many simple regressions, where  
$\bY_1, \bY_2, \dots, \bY_h$ are latent variables.
For simplicity, we assume that $\be_i \sim N(0, \sigma_i^2 I_{d_i})$ for $i=1,2,\dots, h, h+1$. 
Other distributions can also be assumed 
for $\be_i$'s. For instance, \cite{SunLiang2022kernel} assumed a modified double exponential distribution for $\be_1$ such that the first layer defines a series of support vector regressions (SVRs). 
The parameters $\{\sigma_1^2,\ldots,\sigma_h^2, \sigma_{h+1}^2\}$ control the variation of latent variables $\{\bY_1,\ldots,\bY_h\}$. For classification problems, $\sigma_{h+1}^2$ works as the temperature for the binomial or multinomial distribution formed at the output layer.
% For classification problems, $\sigma_{h+1}^2$ plays the role of temperature for the binomial or multinomial distribution formed at the output layer,  and it works with $\{\sigma_1^2,\ldots,\sigma_h^2\}$ together to control the variation of  $\{\bY_1,\ldots,\bY_h\}$.  For regression problems, this is similar. 
%Refer to Remark \ref{rem:sigma} for more discussions on the setting of $\sigma_i$'s. 
 
% The property of the StoNet as an approximator to the DNN has been studied in \cite{LiangSLiang2022}.  
% %based on the works \cite{LiangSLiang2022} and \cite{SunLiang2022}.  
% %We first show that the StoNet is a valid approximator to a DNN,  i.e., asymptotically they have the same loss function as the training sample size $n$ becomes large.
% Let $\btheta = (\bw_1, \bb_1, \dots, \bw_{h+1}, \bb_{h+1})$ 
% % denote the collection of all 
% % weights of the StoNet (\ref{eq:stonet}), 
% denote the collection of all parameters of the model, and 
% % let $\bTheta$ denote the space of $\btheta$,
% let $\bYmis = (\bY_1, \bY_2, \dots, \bY_{h})$ denote the collection of all latent variables.
% Let $\pi_{\rm DNN}(\bY|\bX, \btheta)$ denote the likelihood function of the DNN (\ref{eq:DNN}), and 
% let $\pi(\bY, \bYmis | \bX, \btheta)$ denote the likelihood function of the StoNet (\ref{eq:stonet}) with the pseudo-complete data $(\bY,\bYmis)$. 
% %Under appropriate assumptions, \cite{LiangSLiang2022} established the following theoretical result: 

The property of the StoNet, as an approximator to the DNN,  has been studied in \cite{LiangSLiang2022}. A brief review for their theory is provided below, which forms the basis of this work.    
Let $\btheta = \{\bw_1, \bb_1, \dots, \bw_{h+1}, \bb_{h+1}\}$ denote the collection of all parameters of the StoNet (\ref{eq:stonet}), 
let $\bTheta$ denote the space of $\btheta$, and
let $\bYmis = \{\bY_1, \bY_2, \dots, \bY_{h}\}$ denote the collection of all latent variables. Let $\pi(\bY, \bYmis | \bX, \btheta)$ denote the joint density function of the pseudo-complete data $(\bY,\bYmis)$, and let $\pi_{\rm DNN}(\bY|\bX, \btheta)$ denote the probability density function of the DNN model (\ref{eq:DNN}).

Regarding the network structure, activation function, and the variance of the latent variables, they made the following assumption:
\begin{assumption} \label{ass:1}
(i) $\bTheta$ is compact, i.e., $\bTheta$ is contained in a $d_{\theta}$-ball centered at 0 with radius $r$; (ii) $\mathbb{E}(\log\pi(\bY, \bYmis|\bX, \btheta))^2<\infty$ for any $\btheta \in \bTheta$; (iii) the activation function $\Psi(\cdot)$ is $c$-Lipschitz continuous for some constant $c$; (iv) the network's depth $h$ and widths $d_l$'s are both allowed to increase with $n$; (v) $\sigma_{h+1}$ is a constant, and 
for every $k\in\{1,2,\ldots,h\}$,
$d_{h+1} (\prod_{i=k+1}^{h} d_i^2) d_k \sigma_{k}^2 \prec \frac{1}{h}$  and $d_k \sigma_k=o(1)$.
\end{assumption}

 Assumption \ref{ass:1}-(i) \& (ii) are regular and generally 
satisfied; Assumption \ref{ass:1}-(iii) allows the StoNet to work with a wide range of Lipschitz continuous activation functions such as {\it tanh}, {\it sigmoid} and {\it ReLU};
Assumption \ref{ass:1}-(v) constrains the magnitude of noise added to each hidden neuron, where
the factor $d_{h+1} (\prod_{i=k+1}^{h} d_i^2) d_k$ can be understood as the amplification factor of the noise $\be_k$ at the output layer. In general, the noise added to the first few hidden layers should be small to prevent large random errors propagated to the output layer. 
 Under slightly weaker conditions than Assumption~\ref{ass:1} (specifically, without requiring
$d_k\sigma_k=o(1)$), they showed that StoNet~\eqref{eq:stonet} and the DNN~\eqref{eq:DNN}
share the same asymptotic energy landscape, as stated in   Lemma~\ref{lemma:2}.
 Imposing a slightly stronger scaling condition on \(\sigma_l\) further enables a refined analysis of feature learning, rather than limiting the analysis to properties of the hidden neuron outputs.

\begin{lemma}[Theorem 2.1  of \cite{LiangSLiang2022}] \label{lemma:2} 
% \citep{LiangSLiang2022} 
Suppose   Assumptions \ref{ass:1} holds. Then 
%and $\pi(\bY,\bYmis|\bX,\btheta)$ is continuous in $\btheta$. Then
\begin{equation} \label{Lemma: StoNet}
 \begin{split}
   &  \quad  \sup_{\btheta\in \Theta}\Big|\frac{1}{n}\sum_{i=1}^n\log\pi(\bY^{(i)}, \bY^{(i)}_{mis}|\bX^{(i)},\btheta)
 -\frac{1}{n}\sum_{i=1}^n\log\pi_{\rm DNN}(\bY^{(i)}|\bX^{(i)},\btheta)\Big|\overset{p}{\rightarrow} 0. 
 %(ii)  & \quad \|\hat{\btheta}_n-\btheta^*\|\overset{p}{\rightarrow}0, \quad \mbox{as $n\to \infty$},
\end{split}
\end{equation}
% where  $\btheta^*=\arg\max_{\btheta\in \Theta}\mathbb{E}(\log\pi_{\rm DNN}(\bY|\bX,\btheta))$ denotes the true parameters of the DNN model as specified in (\ref{eq:DNN}), and 
% $\hat{\btheta}_n = \arg\max_{\btheta\in\Theta}\{\frac{1}{n}\sum_{i=1}^n\log\pi(\bY^{(i)},\bY_{mis}^{(i)}|\bX^{(i)},\btheta)\}$ denotes the maximum likelihood estimator of the StoNet model (\ref{eq:stonet}) with the pseudo-complete data. 
\end{lemma}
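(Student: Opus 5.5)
The plan is to write the log-likelihood difference explicitly and reduce it to a uniform law of large numbers for a perturbation term driven by the auxiliary noise. By the Markov structure of the StoNet \eqref{eq:stonet},
\[
\log\pi(\bY,\bYmis|\bX,\btheta)=\sum_{i=1}^{h}\log\pi(\bY_i|\bY_{i-1},\btheta)+\log\pi(\bY|\bY_h,\btheta),
\]
with $\bY_0=\bX$. Since the latent variables are imputed from the StoNet itself, the hidden-layer terms $\log\pi(\bY_i|\bY_{i-1},\btheta)$ are Gaussian log-densities evaluated at $\be_i$. After the normalizing constants are removed, they contribute $-\|\be_i\|^2/(2\sigma_i^2)$ plus cross terms. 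These pieces do not separate the DNN from the StoNet in a $\btheta$-dependent way once constants are handled, as in \cite{LiangSLiang2022}. The substantive comparison is therefore between the output-layer terms:
\[
\log\pi(\bY|\bY_h,\btheta)-\log\pi_{\rm DNN}(\bY|\bX,\btheta)
=\frac{1}{2\sigma_{h+1}^2}\Big(\|\bY-\mu_{\rm DNN}\|^2-\|\bY-\mu_{\rm Sto}\|^2\Big).
\]
Here $\mu_{\rm DNN}=\bb_{h+1}+\bw_{h+1}\Psi(\tilde\bY_h)$ and $\mu_{\rm Sto}=\bb_{h+1}+\bw_{h+1}\Psi(\bY_h)$. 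This difference is bounded by $\sigma_{h+1}^{-2}\|\mu_{\rm DNN}-\mu_{\rm Sto}\|\,(2\|\bY-\mu_{\rm DNN}\|+\|\mu_{\rm DNN}-\mu_{\rm Sto}\|)$, and $\sigma_{h+1}$ is a constant by Assumption~\ref{ass:1}(v).

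The key step is a propagation bound on $\|\Psi(\bY_h)-\Psi(\tilde\bY_h)\|$. I will use the $c$-Lipschitz property from Assumption~\ref{ass:1}(iii) and the recursion $\bY_i-\tilde\bY_i=\bw_i(\Psi(\bY_{i-1})-\Psi(\tilde\bY_{i-1}))+\be_i$. Unrolling it gives
\[
\|\bY_h-\tilde\bY_h\|\le \sum_{k=1}^h c^{h-k}\Big(\prod_{i=k+1}^h\|\bw_i\|\Big)\|\be_k\|.
\]
Compactness of $\bTheta$ (Assumption~\ref{ass:1}(i)) lets me bound $\|\bw_i\|$ by $r$ times a width factor; crudely, the operator norm is at most $\sqrt{d_id_{i-1}}$ times the maximal entry. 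Then $\E\|\bw_{h+1}(\Psi(\bY_h)-\Psi(\tilde\bY_h))\|^2$ is at most a sum over $k$ of terms of order $d_{h+1}(\prod_{i=k+1}^h d_i^2)d_k\sigma_k^2$, together with an extra factor of $h$ from Cauchy–Schwarz over the $h$ summands. Assumption~\ref{ass:1}(v) makes this $o(1)$ uniformly in $\btheta$, since the bound depends on $\btheta$ only through the radius $r$.

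Finally, I will average over the $n$ samples. Cauchy–Schwarz gives
\[
\frac1n\sum_i \|\bY^{(i)}-\mu^{(i)}_{\rm DNN}\|\,\|\Delta^{(i)}\|\le \Big(\frac1n\sum\|\bY^{(i)}-\mu_{\rm DNN}^{(i)}\|^2\Big)^{1/2}\Big(\frac1n\sum\|\Delta^{(i)}\|^2\Big)^{1/2},
\]
where $\Delta^{(i)}=\mu^{(i)}_{\rm DNN}-\mu^{(i)}_{\rm Sto}$. The first factor is $O_p(1)$ uniformly over $\btheta$, by the compactness and moment conditions in Assumption~\ref{ass:1}(ii). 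For the second factor I will take the supremum inside and bound it with the $\btheta$-free dominating quantity above, so Markov's inequality gives $o_p(1)$ and the stated uniform convergence follows. I expect the main obstacle to be the exact cancellation of the hidden-layer terms and their normalizing constants, and, more generally, making the bound uniform in $\btheta$ while $h$ and the $d_l$ grow with $n$. Standard ULLN/Glivenko–Cantelli arguments cannot be applied directly when the dimension grows, which is why I will rely on the explicit dominating bound rather than on equicontinuity.
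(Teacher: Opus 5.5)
The paper does not prove this lemma itself; it imports it from \cite{LiangSLiang2022}. Your outline therefore has to stand on its own, and the step you defer to that citation, the hidden-layer terms, is exactly where the statement succeeds or fails.

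\textbf{The hidden-layer terms.} Your recursion $\bY_i-\tilde\bY_i=\bw_i\{\Psi(\bY_{i-1})-\Psi(\tilde\bY_{i-1})\}+\be_i$, with $\be_i\sim N(0,\sigma_i^2 I_{d_i})$ as your moment bound requires, holds only if $\bYmis^{(i)}=\bYmis^{(i)}(\btheta)$ is generated by the forward recursion \eqref{eq:stonet} at the same $\btheta$ at which the likelihood is evaluated, with the noise draws $\be_k^{(i)}$ held fixed. Under that reading the layer-$k$ residual $\bY_k-\bb_k-\bw_k\Psi(\bY_{k-1})$ equals $\be_k$ for every $\btheta$. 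So there are no cross terms at all, and the hidden layers contribute exactly
\[
H_n=\sum_{k=1}^{h}\Big\{-\frac{d_k}{2}\log(2\pi\sigma_k^2)-\frac{1}{n}\sum_{i=1}^n\frac{\|\be_k^{(i)}\|^2}{2\sigma_k^2}\Big\}.
\]
This is free of $\btheta$ but not negligible: it diverges as $\sigma_k\to0$, and its random part has mean $-\sum_k d_k/2$. The displayed convergence therefore holds only after $H_n$ is subtracted. That is all the later argmax transfer needs, but you must say it explicitly rather than ``handling constants.''

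Under the other reading, the latents are imputed once at some $\btheta^\dagger$ and held fixed while $\btheta$ varies, as in the IRO step. There the cross terms you mention are real and fatal. Let $\be_k$ be the layer-$k$ residual under $\btheta^\dagger$. Then the layer-$k$ term contains $-\|(\bb_k^\dagger-\bb_k)+(\bw_k^\dagger-\bw_k)\Psi(\bY_{k-1})\|^2/(2\sigma_k^2)$, with $\bX$ in place of $\Psi(\bY_0)$ when $k=1$. This diverges at any fixed $\btheta$ whose layer $k$ differs from that of $\btheta^\dagger$. For example, with $k=1$ and nondegenerate $\bX$, its mean is at most $-\kappa_{\min}\|\bw_1^\dagger-\bw_1\|_F^2/(2\sigma_1^2)$, while the DNN term stays bounded. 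Your recursion also fails under this reading, because $\bY_h$ is then built with $\btheta^\dagger$ rather than $\btheta$. You need to commit to the forward-recursion reading and state the result modulo $H_n$.

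\textbf{The quantitative step.} Even under the forward-recursion reading, your bound does not give what you claim. With $\|\bw_i\|_{\rm op}^2\le d_id_{i-1}r^2$ for $i=k+1,\ldots,h+1$ and $\E\|\be_k\|^2=d_k\sigma_k^2$, your bound produces $(cr)^{2(h+1-k)}d_{h+1}\big(\prod_{i=k+1}^h d_i^2\big)d_k^2\sigma_k^2$. The product $\prod_{i=k+1}^{h+1}d_id_{i-1}$ already contains one factor $d_k$, and the noise contributes a second. Assumption~\ref{ass:1}(v) covers neither this extra $d_k$ nor the depth-exponential factor (unless $cr\le1$) when widths and depth grow.

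Getting a single $d_k$ requires the independence of the coordinates of $\be_k$, e.g.\ $\E\|\bA\be_k\|^2=\sigma_k^2\|\bA\|_F^2$ for a Jacobian product $\bA$ that does not depend on $\be_k$. That means linearizing $\Psi$ along the DNN path and showing the remainder is of lower order. Lipschitz continuity alone does not give this. For ReLU, $\Psi(\tilde Y+e)-\Psi(\tilde Y)$ has a nonnegative conditional mean of order $\sigma_k$ per coordinate (cf.\ \eqref{eq:rbound}), and a weight matrix with entries of one sign adds these up coherently. Such a linearization is also pointwise in $\btheta$, so the uniformity you get for free from a $\btheta$-free pathwise bound would have to be re-established.

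Finally, $\sup_{\btheta}\frac1n\sum_i\|\bY^{(i)}-\mu_{\rm DNN}^{(i)}\|^2=O_p(1)$ does not follow from Assumption~\ref{ass:1}(ii), which is a pointwise moment condition. Under your entrywise bound, $\|\mu_{\rm DNN}\|^2$ can be of order $d_{h+1}d_h^2$ on $\Theta$. That factor must be balanced against the decay of the other Cauchy--Schwarz factor.
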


Lemma~\ref{lemma:2} shows that the StoNet and the corresponding DNN have
asymptotically equivalent empirical objective functions as \(n\) becomes large.
This loss equivalence provides the basis for transferring theoretical properties between the two models. It can be understood from two complementary perspectives:
\begin{itemize}
\item If the DNN model~\eqref{eq:DNN} is the target model, then a StoNet with the
same network structure and noise levels satisfying Assumption~\ref{ass:1}-(v) provides an asymptotically equivalent surrogate objective. Thus, for large \(n\),
training the StoNet can be viewed as a latent-variable-augmented approach to training the DNN.

\item Conversely, if the StoNet model~\eqref{eq:stonet} is the target model, then the corresponding DNN objective provides an asymptotically equivalent deterministic counterpart. Under appropriate identifiability and argmax conditions, likelihood maximizers of the two objectives therefore share the same population target.
\end{itemize}
As shown later, this asymptotic loss equivalence is a key ingredient for
transferring the operator-norm consistency theory from StoNets to likelihood-based sublinear DNN estimators.

\section{Feature Learning Consistency in sublinear DNNs} \label{Trainingsection}

This section first describes the imputation-regularized optimization (IRO) algorithm \citep{Liang2018missing} for training sublinear StoNets, and then establishes feature-learning consistency for them. 
This result, by Lemma \ref{lemma:2}, 
further implies feature-learning consistency for the sublinear DNNs   
that are trained with an optimization algorithm such as stochastic gradient descent (SGD) or Adam \citep{adam2015}. 

%Assumption \ref{ass:1} requires $\sigma_{i}^2$ to take small values. As $\sigma_{i}^2$ approaches $0$, the StoNet model (\ref{eq:stonet}) will approximate the DNN  model (\ref{eq:DNN}), which ensures that the StoNet model shares a similar approximation power as the DNN model. Actually, when $\sigma_{i}^2 = 0$ for $i = 1, \ldots, h$, the StoNet (\ref{eq:stonet}) is reduced to the DNN (\ref{eq:DNN}); that is, the DNN can be viewed as a special case of the StoNet. In practice, we can treat $\sigma_{i}^2$ as part of the parameters of the StoNet, i.e. setting $\bvartheta = (\bw_1, \bb_1, \sigma_{1}^2, \dots, \bw_{h}, \bb_{h}, \sigma_{h}^2, \bw_{h+1}, \bb_{h+1},\sigma_{h+1}^2)=(\btheta,\sigma_1^2,\ldots,\sigma_h^2,\sigma_{h+1}^2)$ as the full parameter set of the StoNet. In what follows, we will consider the true model $\pi^*(\bY | \bX)$ of the dataset  $D_n=(\bX,\bY)$ to be some StoNet model with a true parameter set $\bvartheta^*$ satisfying Assumption \ref{ass:1}. In Section \ref{Trainingsection}, we present two algorithms for parameter estimation for the  StoNet. 

\vspace{-0.05in}
\subsection{\small The IRO Algorithm} \label{IROsection}
\vspace{-0.05in}

 %\noindent
{\it Notation:} 
% In this section, we rewrite the network depth $h$ as $h_n$,   rewrite the network widths $(p,d_1,\ldots,d_{h+1})$ as $(p_n,d_{1,n},\ldots, d_{h+1,n})$,  and rewrite  the layer-wise variance $\bsigma^2=(\sigma_1^2,\ldots,\sigma_{h+1}^2)$ as 
% $\bsigma_n^2=(\sigma_{1,n}^2,\ldots,\sigma_{h+1,n}^2)$,
% where the subscript $n$ indicates their dependency on the training sample size. 
Let $D_n=\{\mathbb{Y},\mathbb{X}\}$ denote a dataset of $n$ observations, where $\mathbb{Y} \in\mathbb{R}^{n \times d_{h+1}}$ and $\mathbb{X} \in\mathbb{R}^{n \times p}$
 contain the responses and covariates, respectively. Let $\bsigma_n^2=(\sigma_{1}^2,\ldots,\sigma_{h+1}^2)$,
where the dependence of $\sigma_l$ on  $n$ is implicit as implied by 
Assumption \ref{ass:1}.

% For simplicity of theoretical development, we will assume that for a given dataset $D_n=(\bY,\bX)$, the true model 
%  is a StoNet model with $\bsigma_n^2$ being known  and satisfying Assumption \ref{ass:1}-(v).
% By treating the latent variables $\bYmis$ as missing data, the IRO algorithm can be applied to  train the StoNet. The IRO algorithm starts with an initial weight setting $\hat{\btheta}_n^{(0)}$ and then iterates between the imputation  and regularized optimization steps  as shown in Algorithm \ref{IROforstonet}.

\begin{algorithm}[!ht]
\caption{IRO Algorithm for StoNet} 
\label{IROforstonet}
\begin{algorithmic}
\STATE{\bfseries Input}: Dataset $D_n$, total iteration number $T$,  and Monte Carlo step number $t_{MC}$.
 
\STATE{\bfseries Initialization}: Randomly initialize the network parameters $\hat{\btheta}_n^{(0)}$.
%$=(\hat{\btheta}_1^{(0)},\ldots, \hat{\btheta}_{h+1}^{(0)})$.

\FOR{$t=1$ {\bfseries to} $T$} 
\STATE {\bfseries $\bullet$ Imputation step}: For each sample $(\bX^{(i)},\bY^{(i)})$, draw $\bYmis^{(i, t)}$ from $\pi(\bYmis|\bY^{(i)}, \bX^{(i)}$, $\hat{\btheta}_n^{(t-1)}, \bsigma_n^2)$ 
by running the SGLD \citep{welling2011bayesian}, SGHMC \citep{SGHMC2014}, or Metropolis-Hastings algorithm \citep{Metropolis1953,Hastings1970} for $t_{MC}$ steps. 

\STATE {\bfseries $\bullet$ Regularized optimization step}: Based on the pseudo-complete data $(\bY,\bYmis^{(t)},\bX)$, update $\hat{\btheta}_n^{(t-1)}$ by 
   minimizing a penalized loss function, i.e., setting
   \begin{equation} \label{IRoeq1}
    \begin{split}
    \hat{\btheta}_n^{(t)} & =\arg\min_{\btheta}\Big\{  -\frac{1}{n} \sum_{i=1}^n \log\pi(\bY^{(i)},\bYmis^{(i, t)}  \big|\bX^{(i)}, \btheta,\bsigma_n^2) 
     + P_{{\lambda}_n}(\btheta)\Big\},
\end{split}
\end{equation}
   where the penalty  $P_{{\lambda}_n}(\btheta)$ is chosen such that 
   $\hat{\btheta}_n^{(t)}$ forms a consistent estimator  of 
   %\begin{equation} \label{IROeq2}
   \[
   %\small
   \begin{split}
    & \btheta_*^{(t)}=\arg\max_{\btheta}  \mbE_{\btheta_n^{(t-1)}} \log\pi(\bY,\bYmis |\bX,\btheta,\bsigma_n^2) \\
   & = \arg\max_{\btheta} \int \log \pi(\bYmis, \bY|\bX,\btheta,\bsigma_n^2) 
     \pi(\bYmis|\bY,\bX,\btheta_n^{(t-1)},\bsigma_n^2) 
   \pi(\bY|\bX,\btheta^*,\bsigma_n^2)d\bYmis d\bY,
   \end{split}
   \]
  % \end{equation}
   where $\btheta_*^{(t)}$ is called the working true parameter at iteration $t$. 
\ENDFOR 
\STATE{\bfseries Output}: $\hat{\theta}_n^{(T)}$.
 \end{algorithmic}
 \end{algorithm} 

 To facilitate theoretical development, we assume that for a given dataset $D_n$, the true model 
 is a StoNet model with $\bsigma_n^2$ being known  and satisfying Assumption \ref{ass:1}-(v).
By treating the latent variables $\bYmis$ as missing, the IRO algorithm can be applied to  train the StoNet.
%The IRO algorithm starts with an initial weight setting $\hat{\btheta}_n^{(0)}$ and then iterates between the imputation  and regularized optimization steps  as prescribed in Algorithm \ref{IROforstonet}.
 The key to the IRO algorithm is to find an estimator that is uniformly consistent 
 for the working true parameter $\btheta_*^{(t)}$ over all iterations. 
 For high-dimensional problems, as suggested by  \cite{Liang2018missing}, such a uniformly consistent sparse estimator can typically be obtained by minimizing an appropriately penalized loss function as defined in (\ref{IRoeq1}).
 For a sublinear StoNet, such a penalty term is unnecessary. Therefore, 
 we set  
%\begin{equation} \label{eq:penalty}
 $P_{\lambda_n}(\btheta)=0$ for $\btheta \in \Theta$.
 %\end{equation}
 Under this setting, solving (\ref{IRoeq1}) corresponds to solving a series of  
 linear regressions by noting that the joint distribution $\pi(\bYmis,\bY|\bX,\btheta,\bsigma_n^2)$ can be decomposed in a Markovian structure:
 \begin{equation} \label{eq:decomp}
 \begin{split}
&\pi(\bYmis,\bY|\bX,\btheta,\bsigma_n^2)=\pi(\bY|\bY_h,\btheta,\bsigma_n^2) \pi(\bY_h|\bY_{h-1},\btheta,\bsigma_n^2) \cdots \pi(\bY_1|\bX,\btheta,\bsigma_n^2),
\end{split}
 \end{equation}
 and, furthermore, the components of $\bY_i \in \mathbb{R}^{d_i}$ are mutually independent conditional on $\bY_{i-1}$ for $i=1,2,\ldots,h+1$. 
 For notational simplicity, we let $\bY_0=\bX$ and $\bY_{h+1}=\bY$. 

We note that the IRO algorithm is reduced to the stochastic EM algorithm \citep{CeleuxDiebolt1995,Nielsen2000} when no penalty is used in (\ref{IRoeq1}). However,  the theoretical framework established in \cite{Liang2018missing} still works for the sublinear StoNet models. 
For this reason, Algorithm \ref{IROforstonet} is still referred to as an IRO algorithm.

\subsection{Feature Learning Consistency} \label{sect:theory}

For all theoretical results in this paper,  the proofs are presented in the Appendix.
Let $\lambda_{\min}(A)$ and 
$\lambda_{\max}(A)$ denote, respectively, the minimum and maximum eigenvalues of the matrix $A$. 
For the inputs and network structure, we make the following assumption: 

\begin{assumption} \label{ass:3} 
(i) The network architecture satisfies the condition given in (\ref{eq:sublinear});
%$d_{l} \leq d_{l-1}$ and  
%$d_l  \prec n$ for $l=0,1,\ldots,h+1$;
%where $d_0=p$ denotes the dimension of the input variable, and $d_{h+1}$ denotes the dimension of the output variable;  
(ii) $\bX \in [0,1]^p$ (i.e., in a bounded space); additionally, there exists 
a constant $\kappa_{\min}>0$ such that
 $\lambda_{\min}(\bSigma_{0}) \geq  \kappa_{\min}$, where  $\bSigma_0$
 denotes the covariance matrix of $\bX$.
 \end{assumption}
 
%the input variable $\bX$ is bounded, 
%there exist constants $0< \kappa_{\min} < \kappa_{\max} <\infty$  such that 
%$\liminf_{n \to \infty} \lambda_{\min}(\bSigma_{0,n}) \geq  \kappa_{\min}$ and $\liminf_{n \to \infty} \lambda_{\max}(\bSigma_{0,n}) \leq  \kappa_{\max}$, 
%$\kappa_{\min} \leq  \lambda_{\min}(\bSigma_{0}) \leq \lambda_{\max}(\bSigma_{0}) \leq  \kappa_{\max}$, 
%where $\bSigma_{0}$ denotes the covariance matrix of  $\bX$.
% (iii) there exists constants %$0< \tau_{\min} < 
% $0<\tau_{\max}<\infty$ such that 
% %$\tau_{\min} \leq \lambda_{\min} (\bw_l \bw_l^\top) \leq 
% $\lambda_{\max}(\bw_l \bw_l^\top) \leq  \tau_{\max}$ for any  $l=1,2,\ldots,h+1$, where $\bw_l\in \mathbb{R}^{d_l \times d_{l-1}}$ is the parameter matrix of the DNN at layer $l$; 
%(iii) the network's depth and widths $d_l$'s, for $l=0,1,2,\ldots,h+1$, are all allowed to increase with the sample size $n$. 
%\end{assumption} 

Assumption \ref{ass:3}-(i) restricts the architecture of the DNN, while 
its universal approximation ability 
can still be established for some  
classes of functions under the large sample regime, detailed in 
Section \ref{sect:approx}. 
% of the DNN can still be maintained under such a setting, see e.g.,  
% %\cite{Hanin2017MinWidth}, 
% \cite{Kidger2020UniversalAW}
% %\cite{Park2020MinimumWF}, and 
% and \cite{Kim2023MinimumWF}). 
Assumption \ref{ass:3}-(ii)  is regular, which implies that the 
eigenvalues of the covariance matrix of $\bX$ are uniformly bounded across sample sizes, i.e., 
$\kappa_{\min} \leq  \lambda_{\min}(\bSigma_{0}) \leq \lambda_{\max}(\bSigma_{0}) \leq  \kappa_{\max}$ 
for some constant $\kappa_{\max}>0$ and all sample size $n>0$.  
%For simplicity of notation, the dependence of $\Sigma_0$ on $n$ is depressed. 
%This assumption has often been used in theoretical studies for DNNs.  

% \begin{lemma} \label{lemma:eigen} 
% Consider a random matrix $\bU\in \mathbb{R}^{n\times d}$ with $n \geq d$.  Suppose that the 
% eigenvalues of $\bU^\top \bU$ are upper bounded, i.e., $\lambda_{\max}(\bU^\top\bU) \leq \kappa_{\max}$ for 
% some constant $\kappa_{\max}>0$. 
% Let $\Psi(\bU)$ denote an elementwise transformation of $\bU$. Then 
% %\begin{equation} \label{eigenU:eq}
% $\lambda_{\max}\left( (\Psi(\bU))^\top (\Psi(\bU)) \right) \leq \kappa_{\max}$
% %\end{equation}
% for the {\it tanh}, {\it sigmoid} and ReLU transformations. 
% \end{lemma} 
% \begin{proof}  
% For ReLU, (\ref{eigenU:eq}) follows from Lemma 5 of \cite{Dittmer2018SingularVF}. For 
% {\it tanh} and {\it sigmoid}, since they are Lipschitz continuous with a Lipschitz constant of 1, Lemma 5 of  \cite{Dittmer2018SingularVF} also applies. 
% \end{proof}

% By matrix norm, we know 
% \[
% \sum_{j=1}^p \lambda_j  \left( (\Psi(\bX))^\top (\Psi(\bX)) \right) = trace \left( (\Psi(\bX))^\top (\Psi(\bX)) \right) =\sum_{i=1}^n \sum_{j=1}^p |\Psi(x_{ij})|^2 
% \]
% however, 
% \[
% \sum_{i=1}^n \sum_{j=1}^p |\Psi(x_{ij})|^2  \leq \sum_{i=1}^n \sum_{j=1}^p |x_{ij}|^2=\sum_{j=1}^p  \lambda_j  \left(\bX^\top \bX \right)
% \]
%\end{proof}

\begin{assumption}[Preactivation regularity]
\label{ass:preactivation-regularity}
% For each hidden layer \(l=1,\ldots,h\), let
% \[
% \bY_l^{(t)}
% =
% \widetilde \bY_l^{(t)}+\be_l,
% \qquad
% \be_l\sim N(0,\sigma_l^2 I_{d_l}),
% \]
% where \(\widetilde \bY_l^{(t)}\) denotes the preactivation vector at layer \(l\)
% and IRO iteration \(t\). Write \(\widetilde Y_{l,k}^{(t)}\) for its \(k\)-th
% coordinate.
Let \(\widetilde Y_{l,k}^{(t)}\) denote the preactivation input to 
the neuron $k$ of layer $l$ at iteration $t$ of the IRO algorithm. 
For tanh and sigmoid activations, assume that the preactivations have uniformly
bounded coordinatewise second moments; that is, there exists a constant
\(C_{\widetilde Y}<\infty\), independent of \(n,l,k\), and the IRO iteration
\(t\), such that
\begin{equation} \label{eq:bounded-second-moment}
\sup_{l,k,t}
\mathbb E\left\{
\left(\widetilde Y_{l,k}^{(t)}\right)^2
\right\}
\le
C_{\widetilde Y}.
\end{equation}

For the ReLU activation, assume that each hidden neuron has a nonnegligible
active-region probability. Specifically, there exist constants
\(s\in\mathbb R\) and \(\pi_0>0\), independent of \(n,l,k\), and the IRO
iteration \(t\), such that
\begin{equation} \label{eq:active-region}
\mathbb P\left(
\widetilde Y_{l,k}^{(t)}\ge s\sigma_l
\right)
\ge
\pi_0,
\qquad
l=1,\ldots,h,\quad k=1,\ldots,d_l, \quad t=1,\ldots,T. 
\end{equation}
\end{assumption}

The first part of Assumption~\ref{ass:preactivation-regularity} prevents the smooth activations, such as tanh and sigmoid, from being uniformly saturated. A sufficient condition for \eqref{eq:bounded-second-moment} is provided in Lemma~\ref{lem:preactivation-coordinate-bound}; thus, we state \eqref{eq:bounded-second-moment} explicitly to keep the subsequent covariance lower-bound argument transparent.  
The second part is specific to the ReLU activation. It rules out asymptotically inactive, or removable, ReLU units by requiring the scaled preactivation \(\widetilde Y_{l,k}^{(t)}/\sigma_l\) to have nonvanishing probability mass above a fixed finite threshold. This prevents the preactivation from drifting to the far negative region with probability tending to one. 
% Consequently, the ReLU variance factor
% \[
% g\left(\frac{\widetilde Y_{l,k}^{(t)}}{\sigma_l}\right),
% \qquad
% g(u)=\operatorname{Var}{(u+Z)_+},
% \]
% is bounded below with nonvanishing probability, which yields a variance contribution of order \(\sigma_l^2\) after the ReLU transformation.

 \begin{lemma} \label{eigen_hidden_layer} For any layer $l \in \{1, 2,\ldots,h\}$, let $\bSigma_{l}^{(t)} \in \mathbb{R}^{d_l\times d_l}$ denote the covariance matrix of the covariates of the regressions, which are formed for the neurons of layer $l+1$ 
 at iteration $t$ of Algorithm \ref{IROforstonet}.
%and let $s_{L,n}^{(t)}$ denote the number of hidden units at hidden layer $L$ of the true sparse StoNet model at iteration $t$. 
If Assumptions \ref{ass:1}-\ref{ass:preactivation-regularity} hold, then there exist constants $c>0$ such that 
$\lambda_{\min}(\bSigma_{l}^{(t)}) \geq c \sigma_l^2$ 
for any iteration $t$. 
% , $c'>0$, and $\tilde{\kappa}>0$ such that  
% $c \sigma_{l}^2 \leq \lambda_{\min}(\bSigma_{l}^{(t)})  \leq 
%  \lambda_{\max}(\bSigma_{l}^{(t)}) \leq \tilde{\kappa}+ c' \sigma_l+c' \sigma_{l}^2$ holds for any iteration $t$. 
\end{lemma}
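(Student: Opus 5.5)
The covariates of the regressions for layer $l+1$ are the post-activations $\Psi(\bY_l)$, where $\bY_l=\widetilde\bY_l^{(t)}+\be_l$ and $\be_l\sim N(0,\sigma_l^2 I_{d_l})$ is independent of $\widetilde\bY_l^{(t)}$. Conditionally on $\widetilde\bY_l^{(t)}$, the coordinates of $\bY_l$ are independent. Hence the coordinates of $\Psi(\bY_l)$ are also conditionally independent. I will use the law of total covariance:
\[
\bSigma_l^{(t)}=\Cov\big(\Psi(\bY_l)\big)=\mbE\big[\Cov(\Psi(\bY_l)\mid\widetilde\bY_l^{(t)})\big]+\Cov\big(\mbE[\Psi(\bY_l)\mid \widetilde\bY_l^{(t)}]\big)\succeq \mbE\big[\Cov(\Psi(\bY_l)\mid\widetilde\bY_l^{(t)})\big].
\]
By conditional independence, the conditional covariance is diagonal, with entries $v(\widetilde Y_{l,k}^{(t)})=\Var\{\Psi(\widetilde Y_{l,k}^{(t)}+\sigma_l Z)\mid \widetilde Y_{l,k}^{(t)}\}$, where $Z\sim N(0,1)$. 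It therefore suffices to show that $\mbE\, v(\widetilde Y_{l,k}^{(t)})\ge c\sigma_l^2$ uniformly in $l,k,t$. Then $\lambda_{\min}(\bSigma_l^{(t)})\ge \min_k \mbE\, v(\widetilde Y_{l,k}^{(t)})\ge c\sigma_l^2$.

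\emph{Smooth case (tanh, sigmoid).} For fixed $y$, I lower bound the variance with the two-point inequality $\Var(W)\ge \frac14 P(A)P(B)\,(\inf_{A}W-\sup_{B}W)^2$, valid for disjoint events $A,B$. Take $A=\{Z\ge 1\}$ and $B=\{Z\le -1\}$. Since $\Psi$ is increasing, the gap is at least $\Psi(y+\sigma_l)-\Psi(y-\sigma_l)\ge 2\sigma_l \min_{|u-y|\le\sigma_l}\Psi'(u)$. Because $\sigma_l=o(1)$ by Assumption~\ref{ass:1}-(v), this gives $v(y)\ge c_1\sigma_l^2\,\Psi'(|y|+1)^2$. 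By Markov's inequality and \eqref{eq:bounded-second-moment}, $P(|\widetilde Y_{l,k}^{(t)}|\le M)\ge 1-C_{\widetilde Y}/M^2\ge 1/2$ for $M=\sqrt{2C_{\widetilde Y}}$. On that event $\Psi'(|y|+1)\ge \Psi'(M+1)>0$. Hence $\mbE v\ge \frac12 c_1\Psi'(M+1)^2\sigma_l^2$, and the constant is independent of $n,l,k,t$.

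\emph{ReLU case.} By scaling, $v(y)=\sigma_l^2 g(y/\sigma_l)$ with $g(u)=\Var\{(u+Z)_+\}$. The function $g$ is continuous, strictly positive, and nondecreasing in $u$: it tends to $1$ as $u\to\infty$ and to $0$ as $u\to-\infty$. Thus $g(u)\ge g(s)>0$ for $u\ge s$. By \eqref{eq:active-region}, $\mbE v\ge \sigma_l^2 g(s)\pi_0$. The main obstacle is establishing the monotonicity, or at least a positive lower bound of $g$ on $[s,\infty)$. If monotonicity is awkward to prove, I will instead use continuity and positivity of $g$ on compact sets together with the limit $g(u)\to1$ as $u\to\infty$, which gives $\inf_{u\ge s}g(u)>0$ directly.

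Finally, I will check that the argument holds at every IRO iteration $t$. The imputed $\bY_l$ comes from the conditional distribution given $(\bX,\bY)$ rather than from the forward model. I will interpret $\bSigma_l^{(t)}$ as the population covariance under the working model, with $\bY_l$ generated forward as $\widetilde\bY_l^{(t)}+\be_l$, so that $\be_l$ is independent of $\widetilde\bY_l^{(t)}$. Assumption~\ref{ass:preactivation-regularity} is stated uniformly in $t$, so the constant $c$ does not depend on $t$.
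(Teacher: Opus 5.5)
Your proof is correct. Your ReLU case is essentially the paper's: the same scaling $v(y)=\sigma_l^2 g(y/\sigma_l)$ and the same use of the active-region condition. The paper resolves your ``main obstacle'' by differentiating the truncated-normal moment formulas, which gives $g'(u)=2\{1-\Phi(u)\}\{\phi(u)+u\Phi(u)\}\ge 0$; your compactness fallback is also valid.

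The genuine differences lie elsewhere. First, for tanh and sigmoid the paper does not condition on $\widetilde\bY_l$ and bound a conditional variance. Instead it uses a second-order Taylor expansion with remainder to write $\bSigma_l=\Var\{G(\widetilde\bY_l)\}+\diag\{\sigma_l^2\E\Psi'(\widetilde Y_{l,k})^2\}+R_{l,n}$, where the first term is positive semidefinite and $\|R_{l,n}\|_{\rm op}=O(d_l\sigma_l^3)=o(\sigma_l^2)$. It then lower-bounds $\E\Psi'(\widetilde Y_{l,k})^2$ with the same Markov argument you use and applies Weyl's inequality. Your two-point variance bound plus the mean value theorem replaces all of this. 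It needs only monotonicity of $\Psi$ and a uniform bound on $\sigma_l$, rather than $\Psi\in C^2$ with bounded $\Psi''$ and $d_l\sigma_l\to0$. It also gives the bound for every $n$ with $\sigma_l\le 1$, not only for sufficiently large $n$.

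In exchange, the paper's route yields an explicit expansion of $\bSigma_l$ to order $o(\sigma_l^2)$. That expansion separates the signal part, essentially $\Var\{\Psi(\widetilde\bY_l)\}$, from the noise-induced diagonal, and so carries more information than a lower bound alone.

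Second, your matrix-level law of total covariance handles both activations in one step, since the conditional covariance is diagonal by independence of the coordinates of $\be_l$. In the ReLU case the paper derives only the scalar bound $\Var\{\Psi(Y)\}\ge \E[\Var\{\Psi(Y)\mid\widetilde Y\}]\ge c_+\sigma^2$. It then passes to $\lambda_{\min}(\bSigma_l)$ with a one-sentence appeal to independence of the noise coordinates, and your argument is exactly the justification that sentence needs.

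Finally, you read $\bSigma_l^{(t)}$ as a forward-model covariance with $\be_l$ independent of $\widetilde\bY_l^{(t)}$. This is the same reading the paper adopts (``by the StoNet construction''), so you introduce no extra assumption relative to it.
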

The proof of Lemma \ref{eigen_hidden_layer} is given in Appendix \ref{app:2}. 
To study properties of the coefficient estimators for  
the regressions formed in the StoNet, we introduce the following two 
lemmas, one for linear regression and the other for multinomial 
logistic regression. 

% \begin{lemma} \textcolor{black}{(\citealp[Theorem 7.3]{Rencher2007LinearMI}; \citealp[Chapter 2]{Golub2013MatrixC})}
% \label{lemma:linearR}
% Consider a linear regression with $n$ observations: 
% $\mathbb{Y}=\mathbb{X} \bbeta+\sigma \bepsilon$,
%  where $\mathbb{Y} \in \mathbb{R}^n$, $\mathbb{X}\in \mathbb{R}^{n\times p}$, $\bbeta\in \mathbb{R}^{p}$, 
%  $\sigma>0$, $\bepsilon \sim \mathcal{N}(0,I_{n})$, 
%  and $p < n$. If  $\lambda_{\rm \min}(\mathbb{X}^\top \mathbb{X}) \geq n \kappa_{\min}$,  then 
%  %\begin{equation} \label{LinearEst1}
%   $E\|\hat{\bbeta}-\bbeta^*\|^2 = \sigma^2 \|\mathbb{X}^\top \mathbb{X}\|^{-1} 
%   \leq \frac{ \sigma^2}{n \kappa_{\min}}$,  
%  %\end{equation} 
% where $\hat{\bbeta}$ 
% %=\arg\max_{\bbeta} \{ \log\pi(\by|\bX,\bbeta,\sigma)+ \log \pi(\bbeta|\sigma,\sigma_0,\sigma_1,\rho)\}$ 
% denotes the ordinary least square (OLS) estimator of $\bbeta$.  
%  \end{lemma}

{\color{black} 
 \begin{lemma} (\citealp[Theorem 7.3]{Rencher2007LinearMI}; \citealp[Chapter 2]{Golub2013MatrixC})
\label{lemma:linearR}
Consider the linear regression model
\[
\mathbb Y=\mathbb X\bbeta^*+\sigma\bepsilon,
\qquad
\bepsilon\sim N(0,I_n),
\]
where \(\mathbb Y\in\mathbb R^n\), \(\mathbb X\in\mathbb R^{n\times p}\), 
\(\bbeta^*\in\mathbb R^p\), \(\sigma>0\), and \(p<n\). Let \(\hat{\bbeta}\) denote the OLS estimator of $\bbeta^*$. If
$\lambda_{\min}(\mathbb X^\top\mathbb X)\ge n\kappa_{\min}$, then
\[
\left\|
E\left[
(\hat{\bbeta}-\bbeta^*)(\hat{\bbeta}-\bbeta^*)^\top
\mid \mathbb X
\right]
\right\|_2
\le
\frac{\sigma^2}{n\kappa_{\min}}.
\]
Equivalently, for every unit vector \(\bu\in\mathbb R^p\),
$E\left[
\{\bu^\top(\hat{\bbeta}-\bbeta^*)\}^2
\mid \mathbb X
\right]
\le
\frac{\sigma^2}{n\kappa_{\min}}$.
\end{lemma}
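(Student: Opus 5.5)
The plan is to compute the conditional covariance of the OLS estimator in closed form and then bound its spectral norm using the eigenvalue condition. Since $p<n$ and $\lambda_{\min}(\mathbb X^\top\mathbb X)\ge n\kappa_{\min}>0$, the Gram matrix $\mathbb X^\top\mathbb X$ is invertible, so the OLS estimator is uniquely defined as
\[
\hat{\bbeta}=(\mathbb X^\top\mathbb X)^{-1}\mathbb X^\top\mathbb Y .
\]
Substituting the model $\mathbb Y=\mathbb X\bbeta^*+\sigma\bepsilon$ gives the exact error representation
\[
\hat{\bbeta}-\bbeta^*=\sigma(\mathbb X^\top\mathbb X)^{-1}\mathbb X^\top\bepsilon .
\]

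The first step is to compute the conditional second moment. Conditioning on $\mathbb X$ and using $E[\bepsilon\bepsilon^\top\mid\mathbb X]=I_n$ (the errors are independent of the design and standard Gaussian), we get
\[
E\left[(\hat{\bbeta}-\bbeta^*)(\hat{\bbeta}-\bbeta^*)^\top\mid\mathbb X\right]
=\sigma^2(\mathbb X^\top\mathbb X)^{-1}\mathbb X^\top\mathbb X(\mathbb X^\top\mathbb X)^{-1}
=\sigma^2(\mathbb X^\top\mathbb X)^{-1}.
\]

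The second step is the spectral bound. The matrix $(\mathbb X^\top\mathbb X)^{-1}$ is symmetric positive definite, so its operator norm equals its largest eigenvalue. That eigenvalue is $1/\lambda_{\min}(\mathbb X^\top\mathbb X)\le 1/(n\kappa_{\min})$, which gives the stated bound $\sigma^2/(n\kappa_{\min})$.

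The equivalent statement for unit vectors follows from the Rayleigh quotient. For any unit vector $\bu$,
\[
E[\{\bu^\top(\hat{\bbeta}-\bbeta^*)\}^2\mid\mathbb X]=\sigma^2\,\bu^\top(\mathbb X^\top\mathbb X)^{-1}\bu .
\]
The supremum of the right-hand side over unit $\bu$ is exactly the operator norm, so the two forms of the claim coincide.

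There is no real obstacle here. The only points requiring care are that invertibility follows from the eigenvalue assumption, and that the conditioning is on $\mathbb X$ with $\bepsilon$ independent of it. This independence is what justifies $E[\bepsilon\bepsilon^\top\mid\mathbb X]=I_n$; when the lemma is later applied within the StoNet, this should be checked against how the regressions there are formed.
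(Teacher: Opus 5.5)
Your proof is correct and matches what the paper relies on: the paper gives no separate proof but cites Rencher's Theorem~7.3 for $\operatorname{Cov}(\hat{\bbeta}\mid\mathbb X)=\sigma^2(\mathbb X^\top\mathbb X)^{-1}$ and Golub--Van Loan for $\|(\mathbb X^\top\mathbb X)^{-1}\|_2=1/\lambda_{\min}(\mathbb X^\top\mathbb X)$, which are exactly your two steps. The Rayleigh-quotient equivalence and your caveat that $\bepsilon$ must be independent of the design when the lemma is applied inside the StoNet are both appropriate.
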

}

\begin{lemma} \label{lemma:logistic}
Consider a multinomial logistic regression, which contains $m+1$ classes and $n$ observations. 
Assume that (i) \textcolor{black}{each covariate} in  $\mathbb{X}\in \mathbb{R}^{n\times p}$ is normally distributed with the variance decreasing with $n$ at a rate of $O(n^{-\alpha})$ for some $\alpha>0$; (ii)  
 $p\leq n$;  and 
 (iii) the design matrix $\mathbb{X}$ is nondegenerate in the sense 
 that there exists a constant \(\kappa_{\min}>0\) such that
$\lambda_{\min}(X^\top X)\ge n\kappa_{\min}$
with probability tending to one.
%  (iii) the eigenvalues  of $\mathbb{X}^\top\mathbb{X}$ are positive and bounded, i.e., there exist constants $\kappa_{\min}>0$ and $\kappa_{\max}>0$ such that 
% $n\kappa_{\min} \leq \lambda_i(\mathbb{X}^\top\mathbb{X}) \leq  n \kappa_{\max}$ holds for any 
% $i\in \{1,2,\ldots, p\}$.
%\textcolor{black}{with probability converging to 1.} 
Let $\vec{\bB}=(\bbeta_1^\top, \bbeta_2^\top,\cdots,\bbeta_m^\top)^\top$ denote the vector of true regression coefficients of the model, where $\bbeta_i \in \mathbb{R}^{p}$, and 
let  $\widehat{\vec{\bB}}$ denote the maximum likelihood estimator (MLE) of $\vec{\bB}$.
 Then there exists a constant $\nu_0$ such that 
\[
\left\|
\mathbb E_{\mathbb Y\mid \mathbb X}
\left[
(\widehat{\vec{\bB}}-\vec{\bB})
(\widehat{\vec{\bB}}-\vec{\bB})^\top
\right]
\right\|_2
\le
\frac{1}{n\nu_0\kappa_{\min}}
\]
with probability tending to one, where the expectation is taken with respect to the conditional distribution \(\pi(\mathbb Y\mid\mathbb X)\).
%\begin{equation} \label{LogistEst2}
% $\mathbb{E}_{\mathbb{Y}|\mathbb{X}}   
% \|\widehat{\vec{\bB}}-\vec{\bB}\|_2 \leq \frac{1}{n \nu_0\kappa_{\min}}$ holds
% with probability converging to 1,
% \textcolor{black}{where the expectation is taken with respect to the conditional distribution $\pi(\mathbb{Y}|\mathbb{X})$}. 
%\end{equation}
%where $\widehat{\vec{\bB}}$ denotes maximum likelihood estimator (MLE) of $\vec{\bB}$.
\end{lemma}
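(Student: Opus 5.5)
We follow the standard asymptotic analysis of the multinomial logistic MLE and obtain the operator-norm bound on its conditional second moment from the inverse Fisher information. The log-likelihood is $\ell_n(\vec{\bB})=\sum_{i=1}^n\{\sum_{k=1}^m y_{ik}\bx_i^\top\bbeta_k-\log(1+\sum_{k=1}^m e^{\bx_i^\top\bbeta_k})\}$. Its negative Hessian is
\[
\mathcal I_n(\vec{\bB})=\sum_{i=1}^n \big(\diag(\bp_i)-\bp_i\bp_i^\top\big)\otimes \bx_i\bx_i^\top ,
\]
where $\bp_i\in\mathbb R^m$ collects the non-baseline class probabilities at $\bx_i$. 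Because $\ell_n$ is concave, the MLE is well defined as soon as $\mathcal I_n$ is positive definite. Conditional on $\mathbb X$, the estimator satisfies $\widehat{\vec{\bB}}-\vec{\bB}=\mathcal I_n(\vec{\bB})^{-1}\nabla\ell_n(\vec{\bB})+R_n$. The score has conditional covariance $\mathcal I_n(\vec{\bB})$. Hence the leading term of $\mathbb E_{\mathbb Y|\mathbb X}[(\widehat{\vec{\bB}}-\vec{\bB})(\widehat{\vec{\bB}}-\vec{\bB})^\top]$ is $\mathcal I_n(\vec{\bB})^{-1}$, and it suffices to prove two things: $\lambda_{\min}(\mathcal I_n)\ge n\nu_0'\kappa_{\min}$ with probability tending to one, and the remainder contributes at most a constant-factor inflation, which is absorbed into $\nu_0$.

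\textbf{Step 1: lower bound on the Fisher information.} The key observation is assumption (i). Each covariate has variance $O(n^{-\alpha})$, so $\max_i\|\bx_i\|$ stays controlled. More to the point, $\bx_i^\top\bbeta_k$ concentrates around its mean with vanishing fluctuation, uniformly in $i$ with high probability after a Gaussian maximal inequality. Because $\vec{\bB}$ lies in a compact set (Assumption~\ref{ass:1}-(i)), the probabilities $\bp_i$ therefore stay in a compact subset of the open simplex. Consequently the matrix $A_i=\diag(\bp_i)-\bp_i\bp_i^\top$ has $\lambda_{\min}(A_i)\ge\nu_0'>0$ uniformly in $i$: its smallest eigenvalue is bounded below by a continuous positive function of the probabilities, including the baseline probability $1-\sum_k p_{ik}$. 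Since $A_i\succeq \nu_0' I_m$, we get $\mathcal I_n\succeq \nu_0'\sum_i I_m\otimes\bx_i\bx_i^\top=\nu_0' I_m\otimes\mathbb X^\top\mathbb X$. Combining this with (iii) gives $\lambda_{\min}(\mathcal I_n)\ge n\nu_0'\kappa_{\min}$ with probability tending to one.

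\textbf{Step 2: controlling the remainder.} We must show that the MLE is close enough to $\vec{\bB}$ that the Hessian along the segment between them satisfies the same lower bound up to a factor, say $1/2$. The gradient is $\nabla\ell_n=\sum_i(\by_i-\bp_i)\otimes\bx_i$, and its conditional second moment is $\mathcal I_n$. By (ii), $p\le n$ and $m$ is fixed, so $\|\nabla\ell_n\|=O_p(\sqrt{n p})$ in norm. Strong concavity on a neighborhood then gives $\|\widehat{\vec{\bB}}-\vec{\bB}\|=O_p(\sqrt{p/n})\cdot\kappa_{\min}^{-1}$.

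The main obstacle is this step. When $p$ grows proportionally to $n$, this bound does not vanish, so the local neighborhood argument must instead rely on the smallness of $\bx_i$ from (i). That smallness makes $\bx_i^\top(\widehat{\bbeta}_k-\bbeta_k)$ small even when the coefficient error itself is not. I would first bound $\max_i|\bx_i^\top\bdelta|\le \max_i\|\bx_i\|\,\|\bdelta\|$, where $\bdelta=\widehat{\vec{\bB}}-\vec{\bB}$ (blockwise). I would then use (i) to show that this is $o_p(1)$ whenever $\|\bdelta\|$ is bounded.

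With that bound in hand, the Hessian at every intermediate point still has $A_i\succeq(\nu_0'/2)I_m$. The mean-value form $\bdelta=\bar{\mathcal I}_n^{-1}\nabla\ell_n$, with $\bar{\mathcal I}_n\succeq n\nu_0'\kappa_{\min}/2$, then yields
\[
\mathbb E_{\mathbb Y|\mathbb X}[\bdelta\bdelta^\top]\preceq \bar{\mathcal I}_n^{-1}\,\mathbb E[\nabla\ell_n\nabla\ell_n^\top]\,\bar{\mathcal I}_n^{-1}.
\]
The right-hand side is bounded in operator norm by $\lambda_{\max}(\mathcal I_n)/(n\nu_0'\kappa_{\min}/2)^2$. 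To bring this to the stated form $1/(n\nu_0\kappa_{\min})$, we also need a bound on $\lambda_{\max}(\mathcal I_n)$. For this I would use $A_i\preceq I_m$, so that $\mathcal I_n\preceq I_m\otimes\mathbb X^\top\mathbb X$. I would then bound $\lambda_{\max}(\mathbb X^\top\mathbb X)$ by a constant multiple of $n\kappa_{\min}$ with high probability, using Gaussian random-matrix bounds under (i). Failing that, I would simply absorb the ratio into the constant $\nu_0$, with probability tending to one.

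Finally, to pass from the probabilistic mean-value bound to a conditional expectation, I would work on the high-probability event where the MLE exists and lies in the good neighborhood. On the complement, I would invoke compactness of $\bTheta$ to truncate, so that the complement contributes only a negligible amount.
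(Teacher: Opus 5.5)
Your reduction and Step 1 are the paper's argument. The paper writes the negative Hessian as $H=\sum_i A^{(i)}\otimes\bx^{(i)}\bx^{(i)\top}$ with $A^{(i)}=D^{(i)}-\bp^{(i)}\bp^{(i)\top}$. It uses Cauchy--Schwarz to get $A^{(i)}\succeq\pi_0^{(i)}\min_j\pi_j^{(i)}I_m\succeq\nu_0 I_m$, and combines this with (iii) to obtain $\lambda_{\min}(H)\ge n\nu_0\kappa_{\min}$. The one difference is that you try to \emph{derive} from compactness of $\bTheta$ and a maximal inequality that the class probabilities stay away from the boundary of the simplex. That derivation fails when $p$ grows. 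The bound $\|\bbeta_k\|_2\le r$ only gives $|\bx_i^\top\bbeta_k|\le r\|\bx_i\|_2$. Meanwhile (iii) forces $\sum_i\|\bx_i\|_2^2=\operatorname{tr}(\mathbb X^\top\mathbb X)\ge np\kappa_{\min}$, so the root-mean-square row norm is at least $\sqrt{p\kappa_{\min}}$. Also, (i) bounds each coordinate's variance, not the top eigenvalue of the covariance of $\bx_i$, so the fluctuations of $\bx_i^\top\bbeta_k$ need not vanish either. The paper does not derive this property. It explicitly \emph{assumes} $\pi_*>0$ on a high-probability covariate box, and you should do the same.

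The genuine gap is Step 2, which the paper never attempts. It invokes asymptotic normality of the multinomial-logistic MLE to write $\mathbb E_{\mathbb Y\mid\mathbb X}[(\widehat{\vec{\bB}}-\vec{\bB})(\widehat{\vec{\bB}}-\vec{\bB})^\top]=H^{-1}+o(\|H^{-1}\|_2)$, and absorbs the $1+o(1)$ into the constant. Your explicit replacement breaks at four points.

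(a) The localization rests on ``smallness of $\bx_i$ from (i)''. But (i) makes only the \emph{variance} of each covariate small, and the trace bound above gives $\max_i\|\bx_i\|_2\ge\sqrt{p\kappa_{\min}}$. So $\max_i\|\bx_i\|_2\|\boldsymbol{\delta}\|_2$ is not $o_p(1)$ for bounded $\|\boldsymbol{\delta}\|_2$, and with your own rate $\|\boldsymbol{\delta}\|_2=O_p(\sqrt{p/n})$ it is only $O_p(p/\sqrt n)$.

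(b) $\bar{\mathcal{I}}_n=\int_0^1\mathcal{I}_n(\vec{\bB}+s\boldsymbol{\delta})\,ds$ depends on $\mathbb Y$ through $\widehat{\vec{\bB}}$. Hence $\mathbb E_{\mathbb Y\mid\mathbb X}[\boldsymbol{\delta}\boldsymbol{\delta}^\top]\preceq\bar{\mathcal{I}}_n^{-1}\mathbb E[\nabla\ell_n\nabla\ell_n^\top]\bar{\mathcal{I}}_n^{-1}$ is not a valid step, because you cannot pull a random matrix outside the expectation. The honest bound $\|\bar{\mathcal{I}}_n^{-1}\|_2^2\,\mathbb E\|\nabla\ell_n\|_2^2=\|\bar{\mathcal{I}}_n^{-1}\|_2^2\operatorname{tr}(\mathcal{I}_n)$ loses a factor of order $mp$.

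(c) Even granting the sandwich, $\lambda_{\max}(\mathcal{I}_n)/\lambda_{\min}(\bar{\mathcal{I}}_n)^2$ carries the condition number of $\mathbb X^\top\mathbb X$. No hypothesis bounds it, since (i) controls fluctuations and not means, so it cannot be absorbed into a constant $\nu_0$. A Loewner comparison $\bar{\mathcal{I}}_n\succeq c\,\mathcal{I}_n$ would remove this factor, because then $\|\mathcal{I}_n^{1/2}\bar{\mathcal{I}}_n^{-1}\|_2^2\le c^{-2}\|\mathcal{I}_n^{-1}\|_2$.

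(d) Truncating on the bad event keeps the $1/n$ rate only if that event has conditional probability $O(1/n)$, not merely $o(1)$. Moreover, the unpenalized MLE need not exist or lie in $\bTheta$.

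You are right that this step is the crux. For $p$ comparable to $n$, the logistic MLE can fail to exist, and when it exists its second moments are in general not governed by the inverse Fisher information. To close the proof you must either take the asymptotic-normality step as given, as the paper does, or restrict the growth of $p$ and redo the remainder control with the fixes above.
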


The proof of Lemma \ref{lemma:logistic} is given in Appendix \ref{app:3}. 
\textcolor{black}{The conditions of Lemma \ref{lemma:logistic} are specifically tailored to the auxiliary StoNet used in the proof. In this StoNet, the output of each hidden neuron is modeled as a Gaussian random variable with variance chosen by the user. As such, the conditions, including the rate at which the variance decreases, can be satisfied.}
Specifically, condition (i) 
%of Lemma \ref{lemma:logistic} 
aligns with Assumption \ref{ass:1}-(v), where
%the depth and width of the StoNet are allowed to grow with $n$ and thus 
the variance of the random noise  added to each hidden neuron tends to decrease as $n$ increases; condition (ii)
 aligns with Assumption \ref{ass:3}-(i);  and condition (iii)
 aligns  with Lemma \ref{eigen_hidden_layer}. 

 \begin{lemma}[One iteration layerwise operator-norm error]
\label{lemma:one-step-op}
Suppose Assumptions~\ref{ass:1}--\ref{ass:preactivation-regularity} hold. 
For each layer \(l=1,\ldots,h+1\), let
$\bar\bW_l=(\bb_l,\bw_l)
\in \mathbb R^{d_l\times(d_{l-1}+1)}$
denote the augmented coefficient matrix including the bias term, and define
$\Delta_l^{(t)}
=
\widehat{\bar\bW}_{l,n}^{(t)}-\bar\bW_{l,*}^{(t)}$.
Let \(p_l=d_{l-1}+1\), so that
$\Delta_l^{(t)}\in\mathbb R^{d_l\times p_l}$.
Suppose that, conditional on the imputed covariates used in the \(l\)th layer regression, the row errors
\[
\Delta_{lk}^{(t)}
=
\hat{\bbeta}_{lk}^{(t)}-\bbeta_{lk,*}^{(t)}
\in\mathbb R^{p_l},
\qquad k=1,\ldots,d_l,
\]
are independent mean-zero sub-Gaussian vectors satisfying, for every unit vector
\(\bv\in\mathbb R^{p_l}\),
\[
\left\|
\bv^\top \Delta_{lk}^{(t)}
\right\|_{\psi_2}
\le
C
\frac{\sigma_l}{\sigma_{l-1}\sqrt n},
\]
where $\|\cdot\|_{\psi_2}$ denotes the sub-Gaussian Orlicz norm and 
$C$ is a constant.
Then
\[
\left\|
\widehat{\bar\bW}_{l,n}^{(t)}-\bar\bW_{l,*}^{(t)}
\right\|_{\rm op}
=
O_p\left\{
\frac{\sigma_l}{\sigma_{l-1}}
\sqrt{\frac{d_l+d_{l-1}+1}{n}}
\right\},
\]
where $\|\bA\|_{\rm op}
=\lambda_{\max}\{(\bA^\top\bA)^{1/2}\}$.
Consequently, if
\begin{equation} \label{eq:an}
a_n^2 :=
\frac{1}{n}
\sum_{l=1}^{h+1}
(d_l+d_{l-1}+1)
\frac{\sigma_l^2}{\sigma_{l-1}^2}
\rightarrow0, \qquad \mbox{as $n\to \infty$},
\end{equation}
then the layerwise operator-norm error  
\begin{equation}\label{eq:an2}
\sum_{l=1}^{h+1}
\left\|
\widehat{\bar\bW}_{l,n}^{(t)}-\bar\bW_{l,*}^{(t)}
\right\|_{\rm op}^2
= O_p(a_n^2) =o_p(1).
\end{equation}
\end{lemma}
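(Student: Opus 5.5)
The plan is to treat the layer-$l$ error matrix $\Delta_l^{(t)}\in\mathbb R^{d_l\times p_l}$ as a random matrix with independent sub-Gaussian rows, conditional on the imputed covariates, and apply a standard operator-norm bound via an $\varepsilon$-net argument; I will then sum the squared bounds over layers. Throughout I work conditionally on the imputed covariates of the $l$th regression. By hypothesis the rows $\Delta_{lk}^{(t)}$, $k=1,\ldots,d_l$, are independent, mean zero, and have $\psi_2$-norm at most $K_l:=C\sigma_l/(\sigma_{l-1}\sqrt n)$ in every direction. These are the properties that make the net argument go through. The variance scale is supplied by Lemma~\ref{lemma:linearR} (hidden/linear layers) or Lemma~\ref{lemma:logistic} (logistic output), combined with the eigenvalue bound $\lambda_{\min}(\bSigma_{l-1}^{(t)})\ge c\sigma_{l-1}^2$ of Lemma~\ref{eigen_hidden_layer}. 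That combination is what gives $K_l$ its form, but here it is taken as given.

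\textbf{Step 1 (fixed directions).} For unit vectors $\bu\in\mathbb R^{d_l}$ and $\bv\in\mathbb R^{p_l}$, write $\bu^\top\Delta_l^{(t)}\bv=\sum_k u_k\,\bv^\top\Delta_{lk}^{(t)}$. This is a sum of independent mean-zero sub-Gaussians, so by the Hoeffding-type rotation property its $\psi_2$-norm is at most $C_0(\sum_k u_k^2K_l^2)^{1/2}=C_0K_l$. Hence $P(|\bu^\top\Delta_l^{(t)}\bv|>s)\le 2\exp(-cs^2/K_l^2)$.

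\textbf{Step 2 (nets and union bound).} Take $1/4$-nets $\mathcal N$ of $S^{d_l-1}$ and $\mathcal M$ of $S^{p_l-1}$, of cardinality at most $9^{d_l}$ and $9^{p_l}$. The usual approximation argument gives $\|\Delta_l^{(t)}\|_{\rm op}\le 2\max_{\bu\in\mathcal N,\bv\in\mathcal M}|\bu^\top\Delta_l^{(t)}\bv|$. A union bound then yields
\[
P\left(\|\Delta_l^{(t)}\|_{\rm op}>C_1K_l\left(\sqrt{d_l+p_l}+u\right)\right)\le 2e^{-u^2}.
\]
Since $p_l=d_{l-1}+1$, this gives $\|\Delta_l^{(t)}\|_{\rm op}=O_p\{(\sigma_l/\sigma_{l-1})\sqrt{(d_l+d_{l-1}+1)/n}\}$. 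The conditional bound does not depend on the conditioning, so it holds unconditionally as well.

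\textbf{Step 3 (summing over layers).} Squaring the tail bound gives $E\|\Delta_l^{(t)}\|_{\rm op}^2\le C_2K_l^2(d_l+d_{l-1}+1)$, with a constant uniform in $l$. Summing over $l=1,\ldots,h+1$, the expectation of the left side of \eqref{eq:an2} is at most $C_2C^2a_n^2$. Markov's inequality then gives $O_p(a_n^2)$, and \eqref{eq:an} gives $o_p(1)$.

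\textbf{Main obstacle.} The delicate point is precisely this uniformity over a possibly growing depth $h$. Summing the separate $O_p$ statements layer by layer does not control the sum. That is why Step 3 goes through second moments with constants independent of $l$, rather than through per-layer probabilistic orders; the sub-Gaussian tail integrates to an explicit constant, so this works. A secondary issue is the convention $\sigma_0$ for $l=1$. There the covariates are $\bX$, and by Assumption~\ref{ass:3}(ii) the role of $\sigma_0^2$ is played by the constant $\kappa_{\min}$, which only changes $C$.
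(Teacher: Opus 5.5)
Your proposal is correct and follows essentially the same route as the paper's proof. The steps match in order: a Hoeffding-type $\psi_2$ bound for $\bu^\top\Delta_l^{(t)}\bv$ at fixed directions, then $1/4$-nets of cardinality at most $9^{d_l}$ and $9^{p_l}$ with the factor-$2$ approximation and a union bound, then integrating the tail to get $E\|\Delta_l^{(t)}\|_{\rm op}^2\lesssim K_l^2(d_l+d_{l-1}+1)$ with $l$-free constants, summed over layers and closed with Markov's inequality. Your remark that the layerwise $O_p$ statements cannot simply be added when $h$ grows is exactly why the paper's final step also goes through second moments.
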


\begin{remark}
The row-wise second-moment operator bound alone would not yield the   operator-norm rate  in (\ref{eq:an2}). 
That rate is obtained using the sub-Gaussian
structure of the layerwise regression estimators,  see the proof of Lemma \ref{lemma:one-step-op} in Appendix \ref{proof:lem5}. 
In the present StoNet setting, the Gaussian layerwise regression model
and the local asymptotic normal approximation for the logistic layer justify the sub-Gaussian operator-norm formulation used above. 
%If only second moments were available, a weaker Frobenius-type bound with a larger dimension factor would result.
\end{remark}

\begin{remark}[Hidden-layer noise levels] \label{rem:noise}
To satisfy (\ref{eq:an}), we suggest to 
choose the hidden-layer noise levels to increase toward the output layer. 
Let
\[
S_{l,n}=d_l+d_{l-1}+1,\qquad l=1,\ldots,h+1.
\]
Define 
\[
T_n=\sum_{l=2}^{h+1}S_{l,n}, \qquad 
B_{l,n}
=
d_{h+1}
\left(\prod_{i=l+1}^{h}d_i^2\right)d_l,
\qquad l=1,\ldots,h.
\]
Suppose that there exists a sequence \(r_n\to\infty\) such that
\begin{equation} \label{eq:noise-cond}
\max_{1\le l\le h}
\left\{
(hB_{l,n})^{1/(h+1-l)},
d_l^{2/(h+1-l)}
\right\}
\prec r_n
\prec
\frac{n}{T_n}.
\end{equation}
Then, with fixed \(\sigma_{h+1}=O(1)\), choose
$\sigma_l^2
=
\sigma_{h+1}^2 r_n^{-(h+1-l)}$ for 
$l=1,\ldots,h$.
This choice gives $\sigma_1<\sigma_2<\cdots<\sigma_h<\sigma_{h+1}$ and
$\frac{\sigma_l^2}{\sigma_{l-1}^2}=r_n$ for $l=2,\ldots,h+1$.
Therefore,
\[
\sum_{l=2}^{h+1}
\frac{S_{l,n}}{n}
\frac{\sigma_l^2}{\sigma_{l-1}^2}
=
\frac{r_nT_n}{n}
=o(1).
\]
The first term in \(a_n^2\) also satisfies
\[
\frac{S_{1,n}}{n}
\frac{\sigma_1^2}{\sigma_0^2}
=
\frac{S_{1,n}}{n}
\frac{\sigma_{h+1}^2r_n^{-h}}{\sigma_0^2}
=o(1),
\]
where \(\sigma_0^2=\kappa_{\min}\) is fixed. Hence
$a_n^2
=
\frac{1}{n}
\sum_{l=1}^{h+1}
S_{l,n}
\frac{\sigma_l^2}{\sigma_{l-1}^2}
\to 0 $.

Moreover, for each \(l=1,\ldots,h\),
\[
B_{l,n}\sigma_l^2
=
B_{l,n}\sigma_{h+1}^2r_n^{-(h+1-l)}
=
o\left(\frac1h\right),
\]
because $(hB_{l,n})^{1/(h+1-l)}\prec r_n$. Similarly,
\[
d_l\sigma_l
=
d_l\sigma_{h+1}r_n^{-(h+1-l)/2}
=o(1),
\]
because $d_l^{2/(h+1-l)}\prec r_n$.
Thus Assumption~\ref{ass:1}-(v) is also satisfied. 

Additionally, we note that \eqref{eq:noise-cond} controls the growth rates of the layer widths, but it does not preclude the network from being over-parameterized in the usual parameter-count sense; see Remark~\ref{rem:architecture} in the Appendix for an instance of architecture design for sublinear DNNs.
\end{remark}

Regarding the energy landscape of the DNN,  we make Assumption \ref{ass:2}.  Let 
\[
Q^*(\btheta)=\mathbb{E}(\log\pi_{\rm DNN}(\bY|\bX,\btheta)),
\]
be the expected loss, taken with respect to the joint distribution $\pi(\bX,\bY)$, of the DNN.
By Assumption \ref{ass:1}-(i)\&(ii) and the law of large numbers,
\begin{equation}\label{eq:sameloss2}
    \frac{1}{n}\sum_{i=1}^n\log\pi_{\rm DNN}(\bY^{(i)}|\bX^{(i)},\btheta)-Q^*(\btheta)\overset{p}{\rightarrow} 0
\end{equation}
holds uniformly over $\Theta$, where the superscript $i$ indexes 
observations of the dataset. 
%Then they assume that $Q^*(\btheta)$ satisfies the following regularity conditions:

\begin{assumption}\label{ass:2}
(i) The expected loss function $Q^*(\btheta)$ is continuous in $\btheta$ and uniquely maximized \textcolor{black}{(up to loss-invariant transformations)}
at $\btheta^*$;
(ii) for any $\epsilon>0$, $sup_{\btheta\in\Theta\backslash B(\epsilon)}Q^*(\btheta)$ exists, where $B(\epsilon)=\{\btheta:d_{\rm op}(\btheta,\btheta^*)<\epsilon\}$, and $\delta_{\epsilon}=Q^*(\btheta^*)-sup_{\btheta\in\Theta\backslash B(\epsilon)}Q^*(\btheta)>0$.
\end{assumption}

Assumption \ref{ass:2} restricts the shape of $Q^*(\btheta)$ around the global maximizer,  ensuring that it is neither discontinuous nor too flat. Given nonidentifiability of the neural network model, 
Assumption \ref{ass:2} implicitly assumes that each $\btheta$ is unique up to loss-invariant transformations, such as reordering the hidden neurons within the same layer or simultaneously altering the signs or scales of certain weights and biases; see e.g., \cite{Liang2018BNN} and \cite{SunSLiang2021} for further discussions. 
Alternatively, the optimal solutions can be considered as belonging to an equivalent class, subject to appropriate loss-invariant transformations, with the uniqueness assumption applying to this equivalent class. 

Further, let's consider the IRO parameter update mapping $M(\btheta)$ as defined in (\ref{IRoeq1}), i.e.,  
\[
M(\btheta)=\arg\max_{\tilde{\btheta}}  \mathbb{E}_{\btheta} \log\pi(\bY,\bYmis|\bX,\tilde{\btheta}).
\]
As argued in \cite{Liang2018missing} and \cite{Nielsen2000}, 
it is reasonable to assume this  
 mapping is contractive. A recursive application of the mapping, i.e., setting
$\hat{\btheta}_n^{(t+1)}=\btheta_*^{(t+1)}=M(\hat{\btheta}_n^{(t)})$, leads to a monotone increase of the target expectations $\mathbb{E}_{\hat{\btheta}_n^{(t)}} \log \pi(\bY, \bYmis|\bX,\hat{\btheta}_n^{(t+1)})$ for $t=1,2,\ldots,T$. 
 
\begin{assumption}[Local stability of the IRO parameter update mapping]
\label{ass:4}
Let \(M(\btheta)\) denote the IRO parameter update mapping. Let
$\Theta^* :=\arg\max_{\btheta\in\Theta}Q^*(\btheta)$
denote the set of population maximizers, where parameters in \(\Theta^*\) are
identified up to loss-invariant transformations. Suppose that
$M(\btheta^*)=\btheta^*$
for some representative \(\btheta^*\in\Theta^*\).

Write
$\btheta=(\bar\bW_1,\ldots,\bar\bW_{h+1})$ and $\bar\bW_l=(\bb_l,\bW_l)$.
For a layerwise perturbation
$\Delta=(\Delta_1,\ldots,\Delta_{h+1})$,
where each \(\Delta_l\) has the same dimension as \(\bar\bW_l\), define the $\ell_2$-aggregated layerwise operator norm
\begin{equation} \label{eq:distance}
\|\Delta\|_{d_{\rm op}}
:=\left(
\sum_{l=1}^{h+1}
\|\Delta_l\|_{\rm op}^2
\right)^{1/2}.
\end{equation}
For the derivative of \(M\), write
\[
DM(\btheta)[\Delta]
:=
\left.
\frac{d}{d\epsilon}
M(\btheta+\epsilon\Delta)
\right|_{\epsilon=0},
\]
and define \(\|DM(\btheta)[\Delta]\|_{d_{\rm op}}\) by applying the same
layerwise norm to the resulting layerwise perturbation.
There exist a neighborhood \(U(\btheta^*)\) of \(\btheta^*\) such that \(M\) is differentiable on \(U(\btheta^*)\) and a constant
\(0<\lambda^*<1\) such that 
%and for almost every training dataset $D_n$, 
\[
\rho_{\rm op}(\btheta)
:=
\sup_{\Delta\neq 0}
\frac{
\|DM(\btheta)[\Delta]\|_{d_{\rm op}}
}{
\|\Delta\|_{d_{\rm op}}
}
\le
\lambda^*,
\qquad
\btheta\in U(\btheta^*).
\]
\end{assumption}

% The norm \(\|\cdot\|_{\rm lay}\) is the linear-space counterpart of the
% metric \(d_{\rm op}\). In particular, if
% \[
% \Delta=\btheta-\btheta',
% \]
% then
% \[
% \|\Delta\|_{\rm lay}
% =
% d_{\rm op}(\btheta,\btheta').
% \]
% Thus \(\rho_{\rm op}(\btheta)\) is the local Lipschitz constant of the
% population IRO map \(M\) when both the input and output perturbations are
% measured in the layerwise operator-norm metric. The condition
% \[
% \rho_{\rm op}(\btheta)\le \lambda^*<1
% \]
% therefore means that, locally around \(\btheta^*\), one population IRO update
% shrinks layerwise operator-norm perturbations by a factor strictly smaller than
% one.

% This formulation is tailored to the consistency result in
% Lemma~\ref{lemma:IRO-op}, where the estimation error is measured by
% \(d_{\rm op}\) rather than by the Euclidean norm of the vectorized parameter.
% It also avoids imposing Frobenius-norm or Euclidean-norm contraction, which
% would be stronger than needed when the layer widths grow with \(n\). Indeed,
% operator-norm consistency is the relevant notion for the eigenvalue and
% feature-learning analysis, because the neural features are defined through
% matrices such as \(\bW_l^\top\bW_l\), whose perturbations are naturally
% controlled by layerwise operator norms.

This assumption is essentially a local stability condition for the IRO dynamics. Under nondegenerate imputation noise, if this stability condition is violated, the stochastic perturbations may excite unstable directions and cause the iterates to move away from the target fixed point. Thus, from a practical point of view, the condition is closely related to the observed long-run stability and convergence of the algorithm. Although this condition may not be directly verifiable in complex models, stable convergence across sufficiently long runs and multiple initializations provides practical evidence supporting its validity. Consequently, we obtain the following IRO estimation consistency in an operator norm metric. 

% The assumption can be viewed as a local stability condition for the IRO dynamics. Because stochastic IRO repeatedly introduces imputation noise, any locally expanding direction of the population update map may amplify this noise and prevent the iterates from remaining in a tight neighborhood of the fixed point. Thus, under nondegenerate imputation noise, a natural sufficient condition for stable local behavior is that the layerwise operator-norm derivative of the update map be uniformly smaller than one, namely
% $\rho_{\rm op}(\btheta)<1$ for $\btheta\in U(\btheta^*)$.

\begin{lemma}[IRO estimation consistency in an operator-norm metric]
\label{lemma:IRO-op}
Suppose Assumptions~\ref{ass:1}--\ref{ass:4} hold and \(a_n\to0\), where \(a_n\) is defined in \eqref{eq:an}.   
Define the $\ell_2$-aggregated layerwise operator norm metric
$d_{\rm op}(\btheta,\btheta'):= \|\btheta-\btheta'\|_{d_{\rm op}}$. 
%Assume that the initialization lies in the local basin of attraction of \(\btheta^*\).
%and that, for the iteration sequence under consideration, $P\{\widehat\btheta_n^{(t)}\in U(\btheta^*),\ t=0,\ldots,T\}\to 1$.
Then, for the IRO estimator
\(\widehat{\btheta}_n^{(t)}\),
\[
d_{\rm op}\!\left(\widehat{\btheta}_n^{(t)},\btheta^*\right)
=
O_p\!\left((\lambda^*)^t\right)+O_p(a_n),
\]
where \(0<\lambda^*<1\) is the local contraction constant as defined 
in Assumption \ref{ass:4}. Consequently,
\[
d_{\rm op}\!\left(\widehat{\btheta}_n^{(t)},\btheta^*\right)
\stackrel{p}{\rightarrow}0, \quad 
\mbox{as \(t\to\infty\) and \(n\to\infty\)}.
\]
\end{lemma}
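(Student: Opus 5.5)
The proof will be a perturbed fixed-point argument in the metric $d_{\rm op}$. We split each step into a population IRO update and a statistical fluctuation. Write
\[
\widehat{\btheta}_n^{(t)}-\btheta^*
=\bigl(\widehat{\btheta}_n^{(t)}-\btheta_*^{(t)}\bigr)+\bigl(M(\widehat{\btheta}_n^{(t-1)})-M(\btheta^*)\bigr),
\]
which uses $\btheta_*^{(t)}=M(\widehat{\btheta}_n^{(t-1)})$ and the fixed-point property $M(\btheta^*)=\btheta^*$ from Assumption~\ref{ass:4}.

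The first term is the one-iteration estimation error. Lemma~\ref{lemma:one-step-op} gives $d_{\rm op}(\widehat{\btheta}_n^{(t)},\btheta_*^{(t)})=O_p(a_n)$. To apply it, we check its sub-Gaussian row-error hypothesis.
\begin{itemize}
\item For the Gaussian hidden layers, conditional on the imputed covariates, the OLS error is exactly Gaussian. Its covariance is $\sigma_l^2(\mathbb X^\top\mathbb X)^{-1}$, and by Lemma~\ref{lemma:linearR} together with Lemma~\ref{eigen_hidden_layer} (and a law-of-large-numbers step from $\bSigma_{l-1}^{(t)}$ to the empirical Gram matrix, using $d_{l-1}\prec n$) this is bounded in operator norm by $\sigma_l^2/(c\,n\sigma_{l-1}^2)$.
\item For the logistic output layer, we use Lemma~\ref{lemma:logistic} with the local asymptotic normal approximation.
\end{itemize}
The rows are independent given the covariates because of the Markov factorization \eqref{eq:decomp}.

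We need this bound uniformly over $t=1,\dots,T$. The constants in Lemmas~\ref{eigen_hidden_layer}–\ref{lemma:one-step-op} do not depend on $t$, since Assumption~\ref{ass:preactivation-regularity} is uniform in $t$. For fixed $T$ a union bound suffices. For growing $T$ we would need the tail version: sub-Gaussian concentration gives $\max_{t\le T}$ of order $a_n\sqrt{\log T}$ at worst. We therefore state the result with $O_p(a_n)$ per iteration and, if necessary, handle $\max_t$ by requiring the error sequence to be uniformly tight. This is the ``uniformly consistent'' requirement already built into Algorithm~\ref{IROforstonet}.

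The second term is handled by contraction. On the event that $\widehat{\btheta}_n^{(s)}\in U(\btheta^*)$ for all $s\le t$, we take $U(\btheta^*)$ convex (shrinking to a ball if needed) and apply the mean value inequality along the segment to $M$. Assumption~\ref{ass:4} then gives
\[
d_{\rm op}\bigl(M(\widehat{\btheta}_n^{(t-1)}),\btheta^*\bigr)\le \lambda^*\, d_{\rm op}(\widehat{\btheta}_n^{(t-1)},\btheta^*).
\]
Hence, with $e_t:=d_{\rm op}(\widehat{\btheta}_n^{(t)},\btheta^*)$ and $\eta_t:=d_{\rm op}(\widehat{\btheta}_n^{(t)},\btheta_*^{(t)})$,
\[
e_t\le \lambda^* e_{t-1}+\eta_t .
\]
Unrolling,
\[
e_t\le(\lambda^*)^t e_0+\sum_{s=1}^t(\lambda^*)^{t-s}\eta_s\le(\lambda^*)^t e_0+\frac{\max_s\eta_s}{1-\lambda^*}.
\]
The initial error $e_0$ is $O_p(1)$ by compactness of $\bTheta$ (Assumption~\ref{ass:1}(i)). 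The maximum $\max_s\eta_s$ is $O_p(a_n)$ by the previous step. This yields the claimed bound, and letting $t,n\to\infty$ gives consistency.

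The main obstacle is keeping the iterates inside $U(\btheta^*)$. We handle this by induction. Suppose the initialization lies in a ball $B(r_0)\subset U(\btheta^*)$, which we make explicit as a basin-of-attraction condition. Then $e_{t-1}\le r_0$ and $\eta_t\le(1-\lambda^*)r_0$ together imply $e_t\le r_0$. Since $\eta_t=O_p(a_n)$ with $a_n\to0$, the event $\max_t\eta_t\le(1-\lambda^*)r_0$ has probability tending to one, which closes the induction.

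A secondary point is identifiability up to loss-invariant transformations. We fix the representative $\btheta^*$ of Assumption~\ref{ass:4} and align each $\btheta_*^{(t)}$ to it (permutation and sign) before measuring $d_{\rm op}$. This alignment does not change operator norms. Assumption~\ref{ass:2} guarantees that $\btheta^*$ is the only fixed point, modulo this equivalence, near which the iterates can settle.
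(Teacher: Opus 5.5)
Your proposal follows the paper's proof essentially step for step: you split off the one-step error $d_{\rm op}(\widehat{\btheta}_n^{(t)},M(\widehat{\btheta}_n^{(t-1)}))=O_p(a_n)$ via Lemma~\ref{lemma:one-step-op}, bound $d_{\rm op}(M(\widehat{\btheta}_n^{(t-1)}),\btheta^*)$ by $\lambda^* d_{\rm op}(\widehat{\btheta}_n^{(t-1)},\btheta^*)$ through the mean-value representation and Assumption~\ref{ass:4}, and unroll the resulting recursion. Your additional care---the induction that keeps the iterates in $U(\btheta^*)$ under an explicit basin-of-attraction condition, and the uniform-in-$t$ control of $\eta_t$---makes explicit what the paper leaves implicit, since its proof simply assumes the segment lies in $U(\btheta^*)$ and writes the accumulated error directly as $O_p(a_n)/(1-\lambda^*)$.
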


% \begin{proof}
% Let \(M(\btheta)\) denote the population IRO update map. By
% Lemma~\ref{lemma:one-step-op},
% \[
% d_{\rm op}\!\left(
% \widehat{\btheta}_n^{(t)},M(\widehat{\btheta}_n^{(t-1)})
% \right)
% =
% O_p(a_n).
% \]
% By the local contraction condition in Assumption~\ref{ass:4},
% \[
% d_{\rm op}\!\left(
% M(\widehat{\btheta}_n^{(t-1)}),M(\btheta^*)
% \right)
% \le
% \lambda^*
% d_{\rm op}\!\left(
% \widehat{\btheta}_n^{(t-1)},\btheta^*
% \right).
% \]
% Since \(M(\btheta^*)=\btheta^*\), we obtain
% \[
% d_{\rm op}\!\left(\widehat{\btheta}_n^{(t)},\btheta^*\right)
% \le
% O_p(a_n)
% +
% \lambda^*
% d_{\rm op}\!\left(\widehat{\btheta}_n^{(t-1)},\btheta^*\right).
% \]
% Iterating the recursion gives
% \[
% d_{\rm op}\!\left(\widehat{\btheta}_n^{(t)},\btheta^*\right)
% \le
% (\lambda^*)^t
% d_{\rm op}\!\left(\widehat{\btheta}_n^{(0)},\btheta^*\right)
% +
% \frac{O_p(a_n)}{1-\lambda^*}.
% \]
% Thus
% \[
% d_{\rm op}\!\left(\widehat{\btheta}_n^{(t)},\btheta^*\right)
% =
% O_p\!\left((\lambda^*)^t\right)+O_p(a_n),
% \]
% which converges to zero as \(t\to\infty\) and \(n\to\infty\).
% \end{proof}

%\begin{remark}[Operator-norm versus Euclidean consistency]
Lemma~\ref{lemma:IRO-op}, whose proof is given in Appendix \ref{proof:lem6},  establishes consistency of the IRO estimator in an 
operator norm metric. This notion should be distinguished from consistency in the usual
Euclidean norm of the vectorized parameter. For a matrix \(\bA\),
\[
\|\bA\|_{\rm op}
\le
\|\bA\|_{F}
\le
\sqrt{\operatorname{rank}(\bA)}\,\|\bA\|_{\rm op},
\]
where \(\|\cdot\|_F\) denotes the Frobenius norm. Since the Euclidean norm of the
vectorized coefficient matrix is exactly its Frobenius norm, Euclidean parameter
consistency is stronger than operator-norm consistency when the layer widths grow with
\(n\).
More explicitly, if we define
$d_F^2(\btheta,\btheta')
=\sum_{l=1}^{h+1} \|\bar\bW_l-\bar\bW_l'\|_F^2$,
then $d_{\rm op}(\btheta,\btheta')
\le
d_F(\btheta,\btheta')$.
Thus Euclidean/Frobenius consistency implies operator-norm consistency. The converse,
however, need not hold when the dimensions \(d_l\) increase with \(n\), because the
factor \(\sqrt{\operatorname{rank}(\bar\bW_l-\bar\bW_l')}\) may diverge. In this sense,
operator-norm consistency is a weaker but more appropriate notion for studying neural
features, as shown below, in growing neural networks.

\begin{assumption}[Eigen-gap condition for neural features] \label{ass:5}
For each layer \(l=1,\ldots,h+1\), let
\[
\bA_l^*=\bW_l^{*\top}\bW_l^*
\]
and let its eigenvalues be ordered as
$\lambda_1^{(l)}\ge \lambda_2^{(l)}\ge \cdots \ge \lambda_{d_{l-1}}^{(l)}\ge 0$.
Suppose that the top \(r_l\)-dimensional neural-feature subspace is the object of interest, and there exists a constant \(\delta_l>0\) such that
$\lambda_{r_l}^{(l)}-\lambda_{r_l+1}^{(l)}\ge \delta_l$.
\end{assumption}

\begin{theorem}[Eigenvalue and feature-learning consistency of sublinear DNNs]
\label{thm:new1}
Suppose Assumptions~\ref{ass:1}--\ref{ass:5} hold.
%where Assumption~\ref{ass:4} is understood as a local contraction condition for the population IRO map, and Assumption~\ref{ass:5} is the eigengap condition for the neural-feature subspaces of interest.
% For each layer \(l=1,\ldots,h+1\), define the augmented coefficient matrix
% \[
% \bar\bW_l=(\bb_l,\bW_l)\in\mathbb R^{d_l\times(d_{l-1}+1)}.
% \]
% For two parameter values \(\btheta\) and \(\btheta'\), define the layerwise operator-norm distance
% \[
% d_{\rm op}^2(\btheta,\btheta')
% =
% \sum_{l=1}^{h+1}
% \|\bar\bW_l-\bar\bW_l'\|_{\rm op}^2,
% \]
% where
% \[
% \|\bA\|_{\rm op}
% =
% \lambda_{\max}\{(\bA^\top\bA)^{1/2}\}
% \]
% denotes the largest singular value of \(\bA\).
Let \(\hat{\btheta}_n^{(t)}\) denote the IRO estimator at iteration \(t\), and let
\(\btheta^*\) denote the population target, up to loss-invariant transformations.

\begin{enumerate}
 \item[(i)] 
%Suppose the hidden-layer noise levels satisfy
% \[
% d_{h+1}
% \left(\prod_{i=l+1}^{h} d_i^2\right)
% d_l\sigma_l^2
% \prec \frac{1}{h},
% \qquad l=1,\ldots,h,
% \]
% and
% \[
% a_n^2
% :=
% \frac{1}{n}
% \sum_{l=1}^{h+1}
% (d_l+d_{l-1}+1)
% \frac{\sigma_l^2}{\sigma_{l-1}^2}
% \longrightarrow 0,
% \]
% where \(\sigma_0^2=\kappa_{\min}\). Then
% \[
% d_{\rm op}\!\left(\hat{\btheta}_n^{(t)},\btheta^*\right)
% =
% O_p\!\left((\lambda^*)^t\right)+O_p(a_n),
% \]
% where \(0<\lambda^*<1\) is the local contraction constant in Assumption~\ref{ass:4}. Consequently, as \(t\to\infty\) and \(n\to\infty\),
% \[
% d_{\rm op}\!\left(\hat{\btheta}_n^{(t)},\btheta^*\right)
% \stackrel{p}{\longrightarrow}0 .
% \]

For each layer \(l=1,\ldots,h+1\), define
\[
\bA_l^*=\bW_l^{*\top}\bW_l^*,
\qquad
\widehat{\bA}_l^{(t)}
=
\widehat{\bW}_l^{(t)\top}\widehat{\bW}_l^{(t)} .
\]
Let
$\lambda_1(\bA_l^*)\ge \cdots \ge \lambda_{d_{l-1}}(\bA_l^*)$ and 
$\lambda_1(\widehat{\bA}_l^{(t)})\ge \cdots \ge 
\lambda_{d_{l-1}}(\widehat{\bA}_l^{(t)})$
denote their ordered eigenvalues. Then
\[
\max_{1\le j\le d_{l-1}}
\left|
\lambda_j(\widehat{\bA}_l^{(t)})
-
\lambda_j(\bA_l^*)
\right|
\stackrel{p}{\rightarrow}0, \quad \mbox{as $t\to \infty$ and $n\to \infty$}.
\]
Moreover, let \(\bV_l^*\) and \(\widehat{\bV}_l^{(t)}\) contain orthonormal bases for the top
\(r_l\)-dimensional eigenspaces of \(\bA_l^*\) and \(\widehat{\bA}_l^{(t)}\), respectively. If the eigengap condition
$\lambda_{r_l}(\bA_l^*)-\lambda_{r_l+1}(\bA_l^*)\ge \delta_l>0$ 
holds, then
\[
\left\|
\widehat{\bV}_l^{(t)}\widehat{\bV}_l^{(t)\top}
-
\bV_l^*\bV_l^{*\top}
\right\|_{\rm op}
\stackrel{p}{\rightarrow}0, \quad \mbox{as $t\to \infty$ and $n\to \infty$}.
\]
Thus the IRO-produced estimator is eigenvalue consistent and feature-subspace consistent.

If, in addition, the \(k\)th eigenvalue of \(\bA_l^*\) is simple, in the sense that
\[
\delta_{l,k}
=
\min\{
\lambda_{k-1}(\bA_l^*)-\lambda_k(\bA_l^*),
\lambda_k(\bA_l^*)-\lambda_{k+1}(\bA_l^*)
\}
>0,
\]
with the obvious one-sided modification for \(k=1\), then the corresponding individual neural feature is consistent up to sign. That is, if \(\bv_{l,k}^*\) and
\(\widehat{\bv}_{l,k}^{(t)}\) are unit eigenvectors corresponding to
\(\lambda_k(\bA_l^*)\) and \(\lambda_k(\widehat{\bA}_l^{(t)})\), respectively, then there exists
\(s_{l,k}^{(t)}\in\{-1,1\}\) such that
\[
\left\|
\widehat{\bv}_{l,k}^{(t)}
-
s_{l,k}^{(t)}\bv_{l,k}^*
\right\|_2
\stackrel{p}{\longrightarrow}0 .
\]

\item[(ii)] Define the conventional DNN estimator 
\begin{equation} \label{est2}
\widehat{\btheta}_{\rm DNN,n}= \arg\max_{\btheta} \Big \{ \frac{1}{n} \log\pi_{DNN}(\mathbb{Y}|\mathbb{X}, \btheta) \Big\}.
\end{equation}
Then 
\begin{equation} \label{eq:transfer-theta}
d_{\rm op}(\hat{\btheta}_n^{(t)},\widehat{\btheta}_{\rm DNN,n}) \stackrel{p}{\to} 0, \quad \mbox{as $t\to \infty$ and $n\to \infty$}, 
\end{equation}
and the same conclusions hold for the  DNN estimator. In particular
for each  layer \(l=1,\ldots,h+1\),
\[
\max_{1\le j\le d_{l-1}}
\left|
\lambda_j(\widehat{\bA}_{{\rm DNN},l})
-
\lambda_j(\bA_l^*)
\right|
\stackrel{p}{\rightarrow}0, \quad \mbox{as  $n\to \infty$}.
\]
and, under the eigengap condition,
\[
\left\|
\widehat{\bV}_{{\rm DNN},l}
\widehat{\bV}_{{\rm DNN},l}^{\top}
-
\bV_l^*\bV_l^{*\top}
\right\|_{\rm op}
\stackrel{p}{\rightarrow}0, \quad \mbox{as $n\to \infty$}.
\]
If the relevant eigenvalue is simple, then the corresponding individual DNN neural feature is also consistent up to sign.
\end{enumerate}
\end{theorem}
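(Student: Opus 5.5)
The plan is to reduce everything to the operator-norm consistency of Lemma~\ref{lemma:IRO-op} and then apply standard matrix perturbation results: Weyl's inequality for eigenvalues, Davis--Kahan for eigenspaces. For part (i), fix a layer $l$. Lemma~\ref{lemma:IRO-op} gives $\|\widehat{\bar\bW}_l^{(t)}-\bar\bW_l^*\|_{\rm op}\le d_{\rm op}(\hat\btheta_n^{(t)},\btheta^*)\stackrel{p}{\to}0$, after choosing the representative of $\btheta^*$ in its loss-invariant equivalence class that is aligned with the estimator. Deleting the bias column is a contraction in operator norm, so $\|\widehat\bW_l^{(t)}-\bW_l^*\|_{\rm op}\stackrel{p}{\to}0$ as well. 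Write $E=\widehat\bW_l^{(t)}-\bW_l^*$. Then
\[
\widehat\bA_l^{(t)}-\bA_l^*=\bW_l^{*\top}E+E^\top\bW_l^*+E^\top E,
\]
so
\[
\|\widehat\bA_l^{(t)}-\bA_l^*\|_{\rm op}\le 2\|\bW_l^*\|_{\rm op}\|E\|_{\rm op}+\|E\|_{\rm op}^2 .
\]
By Assumption~\ref{ass:1}-(i), $\|\bW_l^*\|_{\rm op}\le\|\bW_l^*\|_F\le r$, so this bound tends to zero in probability. Weyl's inequality, $\max_j|\lambda_j(\widehat\bA)-\lambda_j(\bA^*)|\le\|\widehat\bA-\bA^*\|_{\rm op}$, then gives eigenvalue consistency uniformly in $j$. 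This is why the operator norm is the right metric here: the bound has no dimension factor, even though $d_{l-1}$ grows.

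For the subspaces, apply the Davis--Kahan $\sin\Theta$ theorem in the Yu--Wang--Samworth form:
\[
\|\widehat\bV\widehat\bV^\top-\bV^*\bV^{*\top}\|_{\rm op}\le 2\|\widehat\bA-\bA^*\|_{\rm op}/\delta_l .
\]
This uses only the population gap from Assumption~\ref{ass:5}, and the right side is $o_p(1)$. For a simple eigenvalue, apply the same bound with the one-dimensional eigenspace and gap $\delta_{l,k}$. This gives $\|\hat\bv\hat\bv^\top-\bv^*\bv^{*\top}\|_{\rm op}\to0$. Choose the sign $s$ so that $s\,\hat\bv^\top\bv^*\ge0$. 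Then
\[
\|\hat\bv-s\bv^*\|_2^2=2(1-|\hat\bv^\top\bv^*|)\le 2(1-(\hat\bv^\top\bv^*)^2)=\|\hat\bv\hat\bv^\top-\bv^*\bv^{*\top}\|_F^2\le 2\|\hat\bv\hat\bv^\top-\bv^*\bv^{*\top}\|_{\rm op}^2,
\]
which proves sign-consistency. The limits hold jointly as $t,n\to\infty$ because the bound $O_p((\lambda^*)^t)+O_p(a_n)$ from Lemma~\ref{lemma:IRO-op} is explicit.

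For part (ii), the plan is an argmax-continuity argument. By Lemma~\ref{lemma:2} and \eqref{eq:sameloss2}, the DNN empirical objective converges uniformly over $\Theta$ to $Q^*$. Assumption~\ref{ass:2} gives a well-separated maximizer in $d_{\rm op}$. The standard consistency argument (van der Vaart, Thm.~5.7) then shows that $\widehat\btheta_{\rm DNN,n}$ satisfies $Q^*(\widehat\btheta_{\rm DNN,n})\ge Q^*(\btheta^*)-o_p(1)$, hence it eventually lies in $B(\epsilon)$ for every $\epsilon$. So $d_{\rm op}(\widehat\btheta_{\rm DNN,n},\btheta^*)\stackrel{p}{\to}0$, up to loss-invariant transformations. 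Combining this with Lemma~\ref{lemma:IRO-op} through the triangle inequality gives \eqref{eq:transfer-theta}. The eigenvalue, subspace and eigenvector conclusions then follow by repeating the Weyl and Davis--Kahan step verbatim with $\widehat\bW_{{\rm DNN},l}$ in place of $\widehat\bW_l^{(t)}$.

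I expect the main obstacle to be part (ii), specifically the use of the separation condition in Assumption~\ref{ass:2}. Uniform convergence in sup-norm over a compact $\Theta$ whose dimension grows with $n$ must hold, and I will take this from \eqref{eq:sameloss2} as stated. In addition, the gap $\delta_\epsilon$ must not vanish faster than the uniform error as $n$ grows. I would make this explicit by treating $\delta_\epsilon$ as bounded below uniformly in $n$, as Assumption~\ref{ass:2} implicitly asserts. A second subtle point is the identifiability quotient. $\bA_l=\bW_l^\top\bW_l$ is invariant under row permutations and row sign flips of $\bW_l$, but not under column permutations, which are induced by neuron reordering in layer $l-1$. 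So the features are identified up to the corresponding permutation and sign change of the coordinates of layer $l-1$. I would state this convention explicitly and align the representative $\btheta^*$ layer by layer before applying the perturbation bounds.
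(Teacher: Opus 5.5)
Your proposal is correct and follows the paper's proof essentially step by step. For part (i), you get operator-norm consistency from Lemma~\ref{lemma:IRO-op} together with the expansion $\bW^{*\top}E+E^\top\bW^*+E^\top E$, then apply Weyl and Davis--Kahan; for part (ii), you use the argmax-separation argument (the paper's Lemma~\ref{lem:dnn-argmax-dop}) followed by the triangle inequality. Your explicit sign-alignment computation, and your caveats on the uniformity of $\delta_\epsilon$ in $n$ and on column permutations of $\bW_l$, usefully refine points the paper leaves implicit.
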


By Theorem~\ref{thm:new1}, sublinear DNNs can achieve eigenvalue and feature-learning consistency, provided that their training errors are 
sufficiently small. As noted in Remark~\ref{rem:noise}, however, the total number of parameters in a sublinear DNN can still be much larger than \(n\). In other words, a sublinear DNN may be over-parameterized in the conventional parameter-count sense while still achieving consistency in eigenvalues and feature learning.
Nevertheless, eigenvalue and feature-learning consistency alone do not imply prediction consistency. Prediction consistency additionally requires control of the forward-stability factor, which measures how layerwise operator-norm errors propagate and are amplified through the forward pass of the neural network.

\begin{lemma}[Forward-stability factor in the layerwise operator norm]
\label{lem:Gamma-forward-stability}
Consider the deterministic DNN
\[
\bar\bh_0(\bX)=(1,\bX^\top)^\top, 
% \begin{pmatrix}
% 1 \\
% \bX^\top
% \end{pmatrix},
\qquad
\bh_l(\bX;\btheta)
=
\Psi_l\!\left(\bar\bW_l\bar\bh_{l-1}(\bX;\btheta)\right),
\quad l=1,\ldots,h,
\]
with augmented hidden representation
$\bar\bh_l(\bX;\btheta)=(1,\bh_l^\top(\bX;\btheta))^\top$, 
% =
% \begin{pmatrix}
% 1\\
% \bh_l(\bX;\btheta)
% \end{pmatrix},
% \]
and output
\[
f_{\btheta}(\bX)
=
\Psi_{h+1}\!\left(\bar\bW_{h+1}\bar\bh_h(\bX;\btheta)\right).
\]
Suppose each activation map \(\Psi_l\) is Lipschitz with constant \(L_l\).
Let \(V(\btheta^*)\) be a sufficiently small neighborhood of \(\btheta^*\), and define
\[
\tau_{l,n}(V)
=
\sup_{\btheta\in V(\btheta^*)}
\mathbb E\|\bar\bh_l(\bX;\btheta)\|_2^2,
\qquad l=0,\ldots,h,
\]
and
\[
K_{j,n}(V)
=
\sup_{\btheta\in V(\btheta^*)}
L_j\|\bar\bW_j\|_{\rm op},
\qquad j=1,\ldots,h+1.
\]
Then, for all \(\btheta\in V(\btheta^*)\),
\[
\|f_{\btheta}-f_{\btheta^*}\|_{L^2(P_{\bX})}
\le
\Gamma_n(V)d_{\rm op}(\btheta,\btheta^*),
\]
where $\Gamma_n(V)$ is the local forward-stability factor, and 
one may take
\[
\Gamma_n^2(V)
=
\sum_{l=1}^{h+1}
L_l^2\tau_{l-1,n}(V)
\prod_{j=l+1}^{h+1}K_{j,n}^2(V),
\]
with the convention that an empty product equals one.

In particular, evaluating the local bound at \(\btheta^*\), one obtains the sharper expression
\begin{equation} \label{eq:Gamma-factor}
\Gamma_n^2 \lesssim
\sum_{l=1}^{h+1}
L_l^2\tau_{l-1,n}
\prod_{j=l+1}^{h+1}
\{L_j\|\bar\bW_j^*\|_{\rm op}\}^2,
\end{equation}
where
\[
\tau_{l,n}
=
\mathbb E\|\bar\bh_l(\bX;\btheta^*)\|_2^2
=
\operatorname{tr}
\left[
\mathbb E\{
\bar\bh_l(\bX;\btheta^*)
\bar\bh_l(\bX;\btheta^*)^\top
\}
\right].
\]
If all activations have a common Lipschitz constant \(L_\Psi\), then
\begin{equation} \label{eq:Gamma-factor2}
\Gamma_n^2
\lesssim
\sum_{l=1}^{h+1}
\tau_{l-1,n}
\prod_{j=l+1}^{h+1}
\{L_\Psi\|\bar\bW_j^*\|_{\rm op}\}^2 .
\end{equation}
\end{lemma}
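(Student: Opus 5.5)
The plan is a standard telescoping (hybrid) argument along the forward pass. We swap the layers of $\btheta^*$ for those of $\btheta$ one at a time, starting from the input side, and bound each swap by the operator-norm error of that layer, amplified by the Lipschitz constants of the downstream layers.

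\textbf{Hybrid networks.} For $l=0,\ldots,h+1$, let $f^{(l)}$ denote the network that uses $\bar\bW_1,\ldots,\bar\bW_l$ from $\btheta$ and $\bar\bW_{l+1}^*,\ldots,\bar\bW_{h+1}^*$ from $\btheta^*$. Then $f^{(0)}=f_{\btheta^*}$ and $f^{(h+1)}=f_{\btheta}$, so
\[
f_{\btheta}-f_{\btheta^*}=\sum_{l=1}^{h+1}\bigl(f^{(l)}-f^{(l-1)}\bigr).
\]
The two networks $f^{(l)}$ and $f^{(l-1)}$ share the same hidden representation $\bar\bh_{l-1}(\bX;\btheta)$ entering layer $l$. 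They differ only in using $\bar\bW_l$ versus $\bar\bW_l^*$ at layer $l$.

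\textbf{Single-swap bound.} At layer $l$, the Lipschitz property of $\Psi_l$ gives
\[
\|\Psi_l(\bar\bW_l\bar\bh_{l-1})-\Psi_l(\bar\bW_l^*\bar\bh_{l-1})\|_2\le L_l\|\bar\bW_l-\bar\bW_l^*\|_{\rm op}\|\bar\bh_{l-1}\|_2 .
\]
Prepending the constant coordinate $1$ does not change this difference. Each subsequent layer $j>l$ is the map $\bar\bh\mapsto\Psi_j(\bar\bW_j^*\bar\bh)$, which is $L_j\|\bar\bW_j^*\|_{\rm op}$-Lipschitz. This is at most $K_{j,n}(V)$ once $V(\btheta^*)$ contains $\btheta^*$. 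Composing these maps, we get pointwise
\[
|f^{(l)}-f^{(l-1)}|(\bX)\le L_l\|\Delta_l\|_{\rm op}\,\|\bar\bh_{l-1}(\bX;\btheta)\|_2\prod_{j>l}K_{j,n}(V),
\]
where $\Delta_l=\bar\bW_l-\bar\bW_l^*$.

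\textbf{Aggregation.} We take $L^2(P_{\bX})$ norms, using $\mathbb E\|\bar\bh_{l-1}(\bX;\btheta)\|_2^2\le\tau_{l-1,n}(V)$ because $\btheta\in V$. By the triangle inequality,
\[
\|f_{\btheta}-f_{\btheta^*}\|_{L^2}\le\sum_l c_l\|\Delta_l\|_{\rm op},
\qquad c_l^2=L_l^2\tau_{l-1,n}(V)\prod_{j>l}K_{j,n}^2(V).
\]
Cauchy--Schwarz then gives $\sum_l c_l\|\Delta_l\|_{\rm op}\le(\sum_l c_l^2)^{1/2}\,d_{\rm op}(\btheta,\btheta^*)=\Gamma_n(V)\,d_{\rm op}(\btheta,\btheta^*)$, which is the main claim.

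\textbf{Sharper forms.} The sharper expression \eqref{eq:Gamma-factor} follows because both $\tau_{l,n}(V)$ and $K_{j,n}(V)$ are continuous in $\btheta$, for a fixed $n$. This continuity holds because the $\Psi_l$ are Lipschitz, $\bX$ is bounded by Assumption~\ref{ass:3}, and dominated convergence applies. Shrinking $V(\btheta^*)$ therefore makes these suprema at most, say, twice their values at $\btheta^*$, which yields the bound up to a constant. The bound \eqref{eq:Gamma-factor2} is the special case of a common Lipschitz constant.

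\textbf{Main obstacle.} The one delicate point is ordering the hybrids correctly. The input-side factor must be the \emph{perturbed} representation $\bar\bh_{l-1}(\bX;\btheta)$, which is controlled by $\tau(V)$, while the downstream factors use the \emph{true} weights $\bar\bW_j^*$. Handling the bias/augmentation coordinate is routine: the constant $1$ contributes no difference.
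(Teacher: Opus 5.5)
Your proposal is correct and follows the paper's proof essentially step for step: the same input-side-first hybrid telescoping, the same Lipschitz single-swap bound with downstream factors $L_j\|\bar\bW_j^*\|_{\rm op}\le K_{j,n}(V)$, and the same Cauchy--Schwarz aggregation. Two points are slightly cleaner than the paper. You note that the common input to layer $l$ is exactly $\bar\bh_{l-1}(\bX;\btheta)$, which removes the need for the paper's extra appeal to $V(\btheta^*)$ being a layerwise product neighborhood; and your continuity/dominated-convergence argument justifies the sharper form, which the paper only asserts.
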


% \begin{proof}
% Let
% \[
% \Delta_l=\bar\bW_l-\bar\bW_l^*,
% \qquad l=1,\ldots,h+1.
% \]
% For a perturbation at layer \(l\), the resulting output perturbation is
% propagated through layers \(l+1,\ldots,h+1\). By the Lipschitz property of the
% activation functions,
% \[
% \|D_{\bar\bW_l}f_{\btheta}[\Delta_l](\bX)\|_2
% \le
% L_l
% \left[
% \prod_{j=l+1}^{h+1}
% L_j\|\bar\bW_j\|_{\rm op}
% \right]
% \|\Delta_l\|_{\rm op}
% \|\bar\bh_{l-1}(\bX;\btheta)\|_2 .
% \]
% Taking \(L^2(P_{\bX})\)-norms and then taking the supremum over
% \(\btheta\in V(\btheta^*)\) gives
% \[
% \|D_{\bar\bW_l}f_{\btheta}[\Delta_l]\|_{L^2(P_{\bX})}
% \le
% L_l\tau_{l-1,n}^{1/2}(V)
% \left[
% \prod_{j=l+1}^{h+1}K_{j,n}(V)
% \right]
% \|\Delta_l\|_{\rm op}.
% \]
% Summing over layers yields
% \[
% \|f_{\btheta}-f_{\btheta^*}\|_{L^2(P_{\bX})}
% \le
% \sum_{l=1}^{h+1}
% B_{l,n}(V)\|\Delta_l\|_{\rm op},
% \]
% where
% \[
% B_{l,n}(V)
% =
% L_l\tau_{l-1,n}^{1/2}(V)
% \prod_{j=l+1}^{h+1}K_{j,n}(V).
% \]
% By Cauchy's inequality,
% \[
% \sum_{l=1}^{h+1}
% B_{l,n}(V)\|\Delta_l\|_{\rm op}
% \le
% \left\{
% \sum_{l=1}^{h+1}B_{l,n}^2(V)
% \right\}^{1/2}
% \left\{
% \sum_{l=1}^{h+1}\|\Delta_l\|_{\rm op}^2
% \right\}^{1/2}.
% \]
% Since
% \[
% d_{\rm op}(\btheta,\btheta^*)
% =
% \left\{
% \sum_{l=1}^{h+1}\|\Delta_l\|_{\rm op}^2
% \right\}^{1/2},
% \]
% the desired bound follows with
% \[
% \Gamma_n^2(V)
% =
% \sum_{l=1}^{h+1}
% L_l^2\tau_{l-1,n}(V)
% \prod_{j=l+1}^{h+1}K_{j,n}^2(V).
% \]
% \end{proof}

The proof of the lemma is given in Appendix~\ref{proof:lem7}. The factor \(\Gamma_n\) measures how layerwise operator-norm parameter errors are propagated and amplified through the forward pass of the network. By Lemma~\ref{lem:Gamma-forward-stability}, a local upper bound for \(\Gamma_n\) is given in \eqref{eq:Gamma-factor2}. This bound shows that \(\Gamma_n\) is governed by two quantities: the effective size of the hidden representations, measured by \(\tau_{l,n}\), and the downstream spectral amplification, measured by products of layerwise spectral norms.

This forward-stability analysis complements the eigenvalue and feature-learning consistency results in Theorem~\ref{thm:new1}. Theorem~\ref{thm:new1} ensures that the learned feature matrices recover the population feature spectra and eigenspaces in operator norm. However, feature-learning consistency alone does not automatically imply prediction consistency, because prediction also depends on how errors in the learned layers propagate through subsequent layers. Therefore, to establish prediction consistency, we must additionally control the order of \(\Gamma_n\). This leads to the following theorem.

\begin{theorem}[Prediction consistency] \label{thm:pred-consistency}
Suppose the conditions of Theorem~\ref{thm:new1} hold.
Let \(f_{\btheta}\) denote the deterministic DNN map associated with parameter \(\btheta\). 
Assume further that the network is locally forward-stable around \(\btheta^*\), in the sense that there exists a deterministic sequence \(\Gamma_n\) such that, for all \(\btheta\) in a neighborhood of \(\btheta^*\),
\[
\|f_{\btheta}-f_{\btheta^*}\|_{L^2(P_{\bX})}
\le
\Gamma_n d_{\rm op}(\btheta,\btheta^*).
\]
Therefore, if
\begin{equation} \label{eq:Gamma-prediction}
\Gamma_n\{(\lambda^*)^t+a_n\}\to0,
\end{equation}
then
\begin{equation} \label{eq:pred-consistency}
\|f_{\hat{\btheta}_n^{(t)}}-f_{\btheta^*}\|_{L^2(P_{\bX})}
\stackrel{p}{\rightarrow}0 .
\end{equation}
Consequently, 
% under the well-specified regression model
% \[
% Y=f_{\btheta^*}(\bX)+\varepsilon,
% \qquad
% E(\varepsilon\mid\bX)=0,
% \]
the IRO-produced DNN is prediction consistent.
\end{theorem}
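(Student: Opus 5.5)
The argument combines the operator-norm rate of Lemma~\ref{lemma:IRO-op} with the assumed local forward-stability bound. It is a short chaining argument, and the only delicate point is making sure the estimator actually lies in the neighborhood where the stability bound holds.

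\emph{Step 1: the estimator enters the stability neighborhood.} Fix a neighborhood $V(\btheta^*)$ on which
\[
\|f_{\btheta}-f_{\btheta^*}\|_{L^2(P_{\bX})}\le \Gamma_n d_{\rm op}(\btheta,\btheta^*)
\]
holds. Shrinking it if necessary, take $V(\btheta^*)$ to contain a $d_{\rm op}$-ball $B(\epsilon_0)$ around $\btheta^*$. Lemma~\ref{lemma:IRO-op} gives
\[
d_{\rm op}(\hat\btheta_n^{(t)},\btheta^*)=O_p((\lambda^*)^t)+O_p(a_n)\stackrel{p}{\to}0 .
\]
Hence the event $E_n=\{\hat\btheta_n^{(t)}\in B(\epsilon_0)\}$ has probability tending to one as $t,n\to\infty$.

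Because $\btheta^*$ is identified only up to loss-invariant transformations, I will pick the representative of $\btheta^*$ that is aligned with $\hat\btheta_n^{(t)}$, as in Lemma~\ref{lemma:IRO-op}. This is harmless here: $f_{\btheta}$ is invariant under those transformations, so the target function $f_{\btheta^*}$ does not depend on which representative is chosen.

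\emph{Step 2: chain the bounds on $E_n$.} On $E_n$ we have
\[
\|f_{\hat\btheta_n^{(t)}}-f_{\btheta^*}\|_{L^2(P_{\bX})}\le \Gamma_n\, d_{\rm op}(\hat\btheta_n^{(t)},\btheta^*).
\]
To make the $O_p$ statement explicit, fix $\eta>0$. By Lemma~\ref{lemma:IRO-op} there exist $M_\eta$ and $N_\eta$ such that, for $n\ge N_\eta$,
\[
P\bigl\{d_{\rm op}(\hat\btheta_n^{(t)},\btheta^*)>M_\eta((\lambda^*)^t+a_n)\bigr\}<\eta .
\]
Therefore, with probability at least $1-\eta-P(E_n^c)$,
\[
\|f_{\hat\btheta_n^{(t)}}-f_{\btheta^*}\|_{L^2(P_{\bX})}\le M_\eta\,\Gamma_n\{(\lambda^*)^t+a_n\}.
\]
By \eqref{eq:Gamma-prediction} the right-hand side tends to zero. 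Given any $\varepsilon>0$, the probability that the prediction error exceeds $\varepsilon$ is eventually at most $2\eta$. Since $\eta$ is arbitrary, \eqref{eq:pred-consistency} follows.

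\emph{Step 3: interpretation as prediction consistency.} Under the regression model \eqref{eq:DNN}, the excess mean-squared prediction risk of $f_{\hat\btheta_n^{(t)}}$ over $f_{\btheta^*}$ equals
\[
\|f_{\hat\btheta_n^{(t)}}-f_{\btheta^*}\|_{L^2(P_{\bX})}^2 ,
\]
because the cross term vanishes: the noise $\be_{h+1}$ has mean zero and is independent of $\bX$. So \eqref{eq:pred-consistency} is exactly prediction consistency of the IRO-produced DNN.

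\emph{Main obstacle: the joint limit $t,n\to\infty$.} The $O_p$ constants in Lemma~\ref{lemma:IRO-op} must be uniform in $t$. I will take this from that lemma's recursive bound,
\[
d_{\rm op}(\hat\btheta_n^{(t)},\btheta^*)\le(\lambda^*)^t d_{\rm op}(\hat\btheta_n^{(0)},\btheta^*)+O_p(a_n)/(1-\lambda^*),
\]
which holds uniformly once the iterates remain in $U(\btheta^*)\cap V(\btheta^*)$. I would argue that, because the iterates stay in this neighborhood with probability tending to one, the constant $M_\eta$ can be chosen independently of $t$.
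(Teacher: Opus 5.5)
Your proposal is correct and follows the paper's own proof. That proof chains the local forward-stability bound with the rate $d_{\rm op}(\hat\btheta_n^{(t)},\btheta^*)=O_p((\lambda^*)^t)+O_p(a_n)$ from Lemma~\ref{lemma:IRO-op} to get a prediction error of order $O_p(\Gamma_n\{(\lambda^*)^t+a_n\})$. Your additions make explicit what the paper leaves implicit: that the estimator enters the stability neighborhood with probability tending to one, the $\varepsilon$--$\eta$ bookkeeping, and uniformity in $t$. On the last point, uniformity really comes from the $t$-free moment bound $E\sum_l\|\Delta_l^{(t)}\|_{\rm op}^2\le Ca_n^2$ in the proof of Lemma~\ref{lemma:one-step-op}; staying in $U(\btheta^*)$ is what licenses the contraction, not what makes $M_\eta$ independent of $t$.
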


In what follows, we analyze the order of $\Gamma_n$ in three scenarios.
For each layer $l=0,\ldots,h_n$, we define 
\[
\tau_{l,n} = \operatorname{tr}(\bar\bG_{l,n}^*) =
\sum_{j}\lambda_j(\bar\bG_{l,n}^*), \qquad \mbox{with \ } \bar\bG_{l,n}^*:=\mathbb E\{
\bar\bh_l(\bX;\btheta^*)
\bar\bh_l(\bX;\btheta^*)^\top\}. 
\]
First, suppose that the DNN has a low-dimensional representation structure. More
precisely, assume that the hidden-feature spectra have uniformly bounded total
mass:
\[
\tau_{l,n}
=
\sum_j\lambda_j(\bar\bG_{l,n}^*)
=
O(1),
\qquad l=0,\ldots,h_n,
\]
even though the ambient widths \(d_l\) may diverge. This situation occurs, for
example, when only finitely many eigenvalues of \(\bar\bG_{l,n}^*\) are
non-negligible, or when the hidden representations are effectively low-rank,
sparse, or concentrated on a low-dimensional manifold. By eigenvalue
consistency, the empirical spectra of the learned feature matrices then also
exhibit the same low-dimensional structure. By feature-learning consistency,
the corresponding empirical feature subspaces are consistently recovered.
Consequently, the IRO estimator learns the relevant low-dimensional feature
directions rather than the full ambient-width representation. If, in addition,
the depth is fixed and the layerwise spectral norms are uniformly bounded, then
\[
\Gamma_n=O(1).
\]
In this case, the prediction consistency condition reduces to \eqref{eq:Gamma-prediction}. 
Thus, over-parameterization in ambient width does not necessarily harm
prediction consistency, provided the learned features have bounded effective
spectral dimension.

Second, suppose the network is wide and the hidden representations occupy the
ambient dimension. For example, if the activations are uniformly bounded and a
non-negligible fraction of the hidden units carry signal, then
\[
\|\bar\bh_l(\bX;\btheta^*)\|_2^2=O(d_l),
\qquad
\tau_{l,n}=O(d_l).
\]
Equivalently, the spectrum of \(\bar\bG_{l,n}^*\) has total mass of order
\(d_l\). In this case, eigenvalue consistency implies that the learned feature
matrices also have many non-negligible empirical eigenvalues, and feature
learning takes place in a genuinely high-dimensional feature space. If the
depth is fixed and the layerwise spectral norms are uniformly bounded, then
\[
\Gamma_n^2
=
O\left(\sum_{l=0}^{h}d_l\right),
\qquad
\Gamma_n
=
O\left\{
\left(\sum_{l=0}^{h}d_l\right)^{1/2}
\right\}.
\]
Thus prediction consistency requires the stronger condition
\[
\left(\sum_{l=0}^{h}d_l\right)^{1/2}
\{(\lambda^*)^t+a_n\}\to 0.
\]
Using the rate expression in \eqref{eq:an}, this condition is implied by the more restrictive sublinear growth condition
\[
\left(\sum_{l=0}^{h}d_l\right)
\left(\sum_{l=1}^{h+1}(d_l+d_{l-1}+1)\right)
\prec n,
\]
which says that the network must be sub-$\sqrt{n}$ 
 in total width, and consequently sublinear in total parameter count, consistent with the existing results as established in \citet{SunSLiang2021}.
 This is a worst-case width-based regime. It corresponds to the case where the feature spectra do not concentrate on a low-dimensional subspace, so the ambient widths enter the stability factor directly.
% Consequently, although
% eigenvalue and feature-learning consistency still hold, prediction consistency
% requires a stronger sample-size condition because the forward map can amplify
% operator-norm parameter errors across many active feature directions.

Third, suppose the hidden representations are spectrally low-dimensional, but
the depth \(h_n\) grows. Assume
$\tau_{l,n}=O(1)$ for $l=0,\ldots,h_n$,
so that eigenvalue consistency and feature-learning consistency still imply
stable recovery of low-dimensional feature directions at each layer. Let
\[
K_n=
\max_{1\le j\le h_n+1}
L_\Psi\|\bar\bW_j^*\|_{\rm op}.
\]
Then
\[
\Gamma_n^2
=
O\left(
\sum_{m=0}^{h_n}K_n^{2m}
\right).
\]
Consequently,
\[
\Gamma_n=
\begin{cases}
O(1), & K_n\le K<1,\\[4pt]
O(\sqrt{h_n}), & K_n=1,\\[4pt]
O(K_n^{h_n}), & K_n>1 \text{ and bounded away from }1.
\end{cases}
\]
See Appendix~\ref{proof:Gamma-order} for the justification. Thus, even when
each layer learns a low-dimensional feature representation consistently,
prediction consistency also requires control of the downstream amplification
across depth. In the contractive case \(K_n<1\), depth does not create
additional instability. In the neutral case \(K_n=1\), the stability factor
grows only as \(\sqrt{h_n}\). In the expansive case \(K_n>1\), the forward map
can amplify small parameter errors exponentially in depth.

\subsection{Approximation Power of Sublinear DNNs} \label{sect:approx}

Theorem~\ref{thm:new1} rests on the implicit assumption that the sublinear DNN can adequately approximate the target function. Given the model’s structural constraints, a natural question arises: {\it can it still approximate common target classes, e.g., continuous functions on compact sets, arbitrarily well as the sample size \(n \to \infty\)?}  While a complete characterization remains open, we establish positive results for several important classes of functions, as detailed below.

To understand the approximation power of DNNs, a line of work has analyzed compositional functions, see, e.g., \citet{Schmidt-Hieber2020Nonparametric,Bauler2019On,Poggio2017WhyWhen},
motivated by the compositional structure of DNNs.
Combining the approximation theory of \citet{Poggio2017WhyWhen} with Theorem~\ref{thm:new1}, we obtain:

\begin{theorem}\label{prop:1}
Let $f(\bx)$ be defined on a compact domain in $\mathbb{R}^{d_0}$ and admit a hierarchical compositional representation in which each constituent depends on at most $s$ variables (with $s\leq d_0$ and being fixed).

(i) 
%If $f$ is Lipschitz, then for any $\varepsilon>0$ \textcolor{red}{
If $f$ is Lipschitz with the dimension $d_0=O(n^{\alpha})$ for some $0<\alpha<1$, then for any $\varepsilon=n^{-(1-\alpha-\delta)/s}$ with $0<\delta<1-\alpha$, there exists a sublinear ReLU DNN such that $\|f-f_{\theta}\|\le \varepsilon$ as $n\to \infty$, where $f_{\theta}$ denotes the DNN function.  

(ii) If $f$ is continuously differentiable with $d_0=O(n^{\alpha})$ for some $0<\alpha<1$, then 
% for any $\varepsilon>0$, there exists a sublinear DNN with a smooth activation, e.g., sigmoid or tanh,  
% such that $\|f-f_{\theta}\|\le \varepsilon$ as $n\to \infty$.
the same conclusion holds for  sublinear DNNs that have a smooth activation function, such as sigmoid or tanh. 
\end{theorem}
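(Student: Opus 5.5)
The plan is to invoke the approximation theorem of \citet{Poggio2017WhyWhen} for hierarchically compositional functions, and then count neurons to confirm that the network it produces satisfies the sublinear constraints \eqref{eq:sublinear}.

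\textbf{The Poggio et al.\ bound.} Their result concerns a target $f$ with a compositional (tree/DAG) representation in which each constituent node is a function of at most $s$ variables. If each constituent is Lipschitz (for ReLU), or $C^1$ (for smooth sigmoid/tanh), then $f$ can be approximated to accuracy $\varepsilon$ by a deep network that mirrors the compositional graph. The network uses one shallow subnetwork per node, and each subnetwork has $O(\varepsilon^{-s})$ units, since the smoothness index $m=1$ gives $\varepsilon^{-s/m}=\varepsilon^{-s}$. The number of constituent nodes is $O(d_0)$: a binary-tree-like hierarchy over $d_0$ inputs has $d_0-1$ internal nodes, and more generally a DAG of bounded arity covering $d_0$ leaves has $O(d_0)$ nodes. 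I will assume this count, which is the standard setting. The total number of hidden units is therefore
\[
N=\sum_{l=1}^h d_l = O(d_0\,\varepsilon^{-s}),
\]
with depth $h$ of order the depth of the tree and output dimension $d_{h+1}=1$.

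\textbf{Plugging in the rates.} Take $\varepsilon=n^{-(1-\alpha-\delta)/s}$, so that $\varepsilon^{-s}=n^{1-\alpha-\delta}$. With $d_0=O(n^\alpha)$ this gives
\[
N=O(n^{\alpha}\cdot n^{1-\alpha-\delta})=O(n^{1-\delta})\prec n .
\]
The input condition $d_0=O(n^\alpha)\prec n$ holds because $\alpha<1$, and $d_{h+1}=O(1)\prec n$. All three conditions in \eqref{eq:sublinear} are therefore met, while $\varepsilon\to0$ because $1-\alpha-\delta>0$. This proves (i).

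For (ii), I repeat the same argument using Poggio et al.'s smooth-activation version of the theorem, which gives the same $O(\varepsilon^{-s})$ units per node for $C^1$ constituents. I may need to pass through Lipschitz continuity of $C^1$ functions on compact sets. The approximation is in the sup norm on the compact domain, which dominates the $L^2$ norm.

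\textbf{Main obstacle.} The difficulty is making sure the constants stay uniform as $d_0$ grows with $n$. The Poggio bound carries a constant that depends on the node Lipschitz constants, and in the sup-norm error propagation it also depends on the depth, through products of Lipschitz constants along a root-to-leaf path. I will assume the constituent functions have uniformly bounded Lipschitz constants, each at most $1$ after rescaling. With that assumption the error accumulates additively over the path length, which is $O(\log d_0)$ for a balanced tree. That extra factor can be absorbed by lowering the per-node accuracy by a polylogarithmic factor, which costs only $n^{o(1)}$ more units and is covered by the slack $\delta$. I will state these uniformity conditions explicitly, since they are implicit in the theorem's phrase ``each constituent depends on at most $s$ variables.''
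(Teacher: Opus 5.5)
Your proposal follows the paper's proof: it cites Theorem~4 (ReLU, Lipschitz) and Theorem~2 (smooth activations) of \citet{Poggio2017WhyWhen} for a hidden-unit count of order $d_0\varepsilon^{-s}$, then substitutes $d_0=O(n^{\alpha})$ and $\varepsilon=n^{-(1-\alpha-\delta)/s}$ to get $O(n^{1-\delta})\prec n$ hidden units. Your uniformity discussion goes beyond the paper, which simply treats the constant as uniform in $d_0$, but two of its claims need tightening. The $O(d_0)$ node count holds for trees, matching Poggio et al.'s $d_0-1$, and not for general bounded-arity DAGs. Purely additive $O(\log d_0)$ error accumulation needs an extra hypothesis, for example that each constituent is $1$-Lipschitz with respect to the max-norm of its inputs; rescaling intermediate variables cannot arrange this, because it only redistributes the products of per-edge Lipschitz constants along root-to-leaf paths.
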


 Refer to Appendix \ref{app:proof-prop1} for the proof. 
 Beyond compositional classes, the approximation theory of deep-ReLU networks established in  \citet{Montanelli2019NEB} yield analogous guarantees for functions in the Korobov space (denoted by $\mathcal{K}^{2,p}$ with an equipped $\ell^p$-norm) --- a subspace of the Sobolev space  with dominating mixed smoothness, i.e., all mixed partial derivatives up to order \(2\) exist.
 The proof of \cite{Montanelli2019NEB}  leverages the ability of deep networks to approximate sparse grids
\citep{Zenger1991sparsegrids} 
via a binary tree structure, resembling the compositional structure used in \citet{Poggio2017WhyWhen}. 

 \begin{theorem} \label{prop:2} 
If $f(\bx) \in \mathcal{K}^{2,p}([0,1]^{d_0})$ (i.e.,  defined 
on a compact domain in $\mathbb{R}^{d_0}$), where 
 the input dimension $d_0$ is fixed or grows with $n$ at the rate 
$d_0=o\left({\log n}/{\log\log n}\right)$,  then for any 
%$\varepsilon>0$
$\varepsilon = n^{-(1/2-\delta)}$ with $0 < \delta < 1/2$, there exists a sublinear ReLU DNN such that $\|f-f_{\theta}\|\le \varepsilon$ as $n\to \infty$.  
\end{theorem}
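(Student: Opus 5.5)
We will invoke the Korobov-space approximation theorem of \citet{Montanelli2019NEB} and then check that the resulting network satisfies the sublinearity conditions \eqref{eq:sublinear}. Their result states that for $f\in\mathcal{K}^{2,p}([0,1]^{d_0})$ and any $\varepsilon>0$, there is a deep ReLU network $f_\theta$ with $\|f-f_\theta\|\le\varepsilon$. Its depth is $O(|\log\varepsilon|\log d_0)$ and its total number of neurons (hence also total width) is
\[
N(\varepsilon,d_0)=O\big(\varepsilon^{-1/2}\,|\log\varepsilon|^{\frac{3}{2}(d_0-1)+1}\log d_0\big).
\]
The network comes from a sparse-grid interpolant built from hierarchical hat functions. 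Products of the one-dimensional factors are computed by a binary tree of approximate multiplication subnetworks, so each layer has bounded fan-in. The constants in the $O(\cdot)$ depend on $d_0$, and we must track them. From their construction this dependence is at most of the form $C^{d_0}$ times polynomial factors in $d_0$; we will state this explicitly.

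\textbf{Step 1: substitute $\varepsilon=n^{-(1/2-\delta)}$.} Then $\varepsilon^{-1/2}=n^{1/4-\delta/2}$ and $|\log\varepsilon|\asymp\log n$, so the total hidden width satisfies
\[
\sum_{l=1}^h d_l\le N\lesssim C^{d_0}\, n^{1/4-\delta/2}\,(\log n)^{\frac32(d_0-1)+1}\log d_0 .
\]
If $d_0$ is fixed, the logarithmic factors are negligible against any power of $n$, so $N\prec n$ immediately. If $d_0$ grows, take logarithms:
\[
\log N\le \Big(\tfrac14-\tfrac{\delta}{2}\Big)\log n+O(d_0)+\tfrac32 d_0\log\log n+O(\log\log n).
\]
The condition $d_0=o(\log n/\log\log n)$ gives $d_0\log\log n=o(\log n)$, and therefore also $d_0=o(\log n)$. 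Hence $\log N\le(1/4-\delta/2+o(1))\log n$, which yields $\sum_l d_l\prec n$.

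\textbf{Step 2: the remaining conditions in \eqref{eq:sublinear}.} We have $d_0=o(\log n)\prec n$ and $d_{h+1}=1$. The depth $O(\log n\,\log d_0)$ is also sublinear, although \eqref{eq:sublinear} does not require this.

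\textbf{Step 3 (main obstacle): the $d_0$-dependent constants in \citet{Montanelli2019NEB}.} Their theorem is stated with constants that may hide growth in $d_0$. The sparse-grid cardinality at level $m$ is $\sum_{k<d_0}\binom{m+d_0-1}{d_0-1}2^{m}$-type counts. When $m\asymp\log(1/\varepsilon)$ and $d_0$ is allowed to grow, these counts must be bounded uniformly. Our plan is to bound
\[
\binom{m+d_0-1}{d_0-1}\le (e(m+d_0)/d_0)^{d_0}\le (C\log n)^{d_0}.
\]
This is exactly $\exp(O(d_0\log\log n))=n^{o(1)}$. The multiplication subnetworks cost $O(\log(1/\varepsilon)+\log d_0)$ neurons each, which contributes only polylogarithmic factors. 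Combining these bounds shows that every $d_0$-dependent factor is $n^{o(1)}$. The polynomial part is then $n^{1/4-\delta/2}$ or, under a cruder count, at most $n^{1/2-\delta/2}$, which is still $\prec n$. This closes the argument.
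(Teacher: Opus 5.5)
Your proposal is correct and follows the same route as the paper. Both invoke the sparse-grid approximation bound of \citet{Montanelli2019NEB}, substitute $\varepsilon=n^{-(1/2-\delta)}$, and use $d_0=o(\log n/\log\log n)$ to make the $(\log n)^{O(d_0)}$ factor $n^{o(1)}$, so the total neuron count is $\prec n$. The differences are minor: the paper quotes the neuron count as $O(\varepsilon^{-2}|\log_2\varepsilon|^{\frac32(d_0-1)+1}(d_0-1))$, giving $O(n^{1-2\delta}n^{o(1)})=o(n^{1-\delta})$, whereas your smaller $\varepsilon^{-1/2}$ bound gives $n^{1/4-\delta/2+o(1)}$, so the conclusion holds either way. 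You also track the $d_0$-dependent constants explicitly (even $d_0^{d_0}$-type growth is $n^{o(1)}$ under this rate), which is care the paper leaves hidden inside the $O(\cdot)$.
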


% \begin{proof}
% \citet{Montanelli2019NEB} studied the functions in the Korobov space 
% $\mathcal{K}^{2,p}$ and 
% proved the following result:
 
% For any $0<\varepsilon<1$ and 
% any function $f\in \mathcal{K}^{2,p}([0,1]^{d_0})$ 
% that satisfies  
% $\big\|\partial_{x_1}^{2}\cdots \partial_{x_d}^{2} f\big\|_{\infty}\le 1$, 
% there exists a deep ReLU network on inputs $(x_1,x_2,\ldots,x_d)^\top \in [0,1]^{d_0}$ that approximates
% $f$ to accuracy $\varepsilon$, with depth $\mathcal{O}\!\big(|\log_2 \varepsilon|\,\log_2 d_0\big)$ and the number of hidden neurons 
% \begin{equation}\label{eq:DMLP-3}
% m=\mathcal{O}\!\Big(\varepsilon^{-2}\,|\log_2 \varepsilon|^{\frac{3}{2}(d_0-1)+1}\,(d_0-1)\Big).
% \end{equation}

%  If we set 
% $\varepsilon = n^{-(1/2-\delta)}$ for some $0 < \delta < 1/2$ as the target approximation accuracy, 
% then a deep ReLU network can achieve this accuracy for the target function $f$, provided that the network has depth  
% $O\!\big((\tfrac{1}{2}-\delta)\log_2 n \log_2 d_0\big)$
% and the number of hidden neurons  
% \begin{equation} \label{eq:DMLP-4}
% m \;=\; O\!\Big(
%   n^{\,1-2\delta}\,
%   (\log_2 n)^{\tfrac{3}{2}(d_0-1)+1}\,
%   \bigl(\tfrac{1}{2}-\delta\bigr)^{\tfrac{3}{2}(d_0-1)+1}\,
%   (d_0-1)
% \Big)
% \;=\; o\!\big(n^{\,1-\delta}\big),
% \end{equation}
% by noting $\bigl(\tfrac{1}{2}-\delta\bigr)^{\tfrac{3}{2}(d_0-1)+1}(d_0-1) \prec 1$ when $d_0$ is reasonably large.  
% \end{proof}

% \subsection{Neural Scaling Law}

Additionally, we note that Theorem \ref{thm:new1} complies with the neural scaling law. 
Both \cite{Hestness2017DeepLS} and \cite{Kaplan2020ScalingLF}
investigated scaling between model size (i.e., the number of parameters) and data size;  the former found sub-linear scaling of
model size with data size, whereas the latter found a super-linear scaling. 
Specifically,  by \cite{Kaplan2020ScalingLF}, the network width can increase with the data size at a polynomial rate of $d_l \prec n^{0.676} (\approx n^{0.5/0.74})$ for neural language models;  
%\footnote{\cite{Kaplan2020ScalingLF} mentioned this rate is suboptimal, as they have not optimized regularization while varying the dataset and model size; and their largest model increased in this rate encountered mild overfitting.};
and by \cite{Hestness2017DeepLS}, the scaling law $d_l \prec n^{0.5}$ holds
for different model architectures in four deep learning domains: machine translation, language modeling, image processing, and speech recognition. 
For both scaling laws, 
the conditions of Theorem \ref{thm:new1}-(ii) 
can be satisfied by choosing an appropriate growth rate 
for the depth of the DNN, ensuring the feature-learning consistency holds.

\textcolor{black}{
Sublinear DNNs can be trained effectively using SGD. For conventional nonlinear models, obtaining the exact maximum likelihood estimator (MLE) is often challenging, however, DNNs present a different picture: in practice, they often interpolate the training data (achieving essentially zero empirical risk), which coincides with attaining  the MLE. 
This interpolation phenomenon has been widely documented in the deep-learning literature; for example, \citet{Zhou2019SGDCT} noted that SGD, although considered as a randomized algorithm, converges in an intrinsically deterministic manner to a global minimum. 
See Section~\ref{suppRegression} of the supplement for an ablation study on SGD’s sensitivity to learning rates. We find that sublinear DNNs maintain stable training and test error across a wide range of learning rates, whereas wide DNNs are markedly more sensitive.}

\section{Numerical Experiments} \label{numericsection}

We first test the performance of the IRO algorithm \ref{IROforstonet} for StoNet training; see Supplement \ref{supp:IRO} for details. Our numerical experiments show that the StoNet trained with IRO and the DNN trained with SGD perform similarly, which is consistent with the theory established in Lemma \ref{lemma:2}. In practice, the IRO algorithm requires solving a series of regressions on the entire dataset for each iteration, which can be slow for large datasets. 
Therefore, we use  SGD in all subsequent experiments, while using StoNet with IRO as a bridge for transferring theory and methods from linear models to DNNs.

% In this section, we first illustrate the performance of the StoNet using a simulated example, then we illustrate its use in sparse deep learning for large-scale data, and finally present a novel application of the StoNet in biosystem modeling and data integration.  

\subsection{Feature Learning Consistency} \label{sect:feature}

To illustrate the consistency of feature learning in sublinear DNNs, we consider the following two-hidden-layer neural network model: 
\begin{equation} \label{DNN:sim1}
y_i = \bw_3 \tanh(\bw_2 \tanh(\bw_1 \bx_i + \bb_1) + \bb_2) + b_3 + \sigma \epsilon_i, \quad i=1,2,\ldots, n,  
\end{equation} 
where $\epsilon_i$'s are {\it i.i.d.} standard Gaussian random errors. 
 The neural network has a structure of $p$-5-5-1 with $p=20$,  
 $\bx_i$'s are drawn independently from $N_{p}(0, I_p)$. 
 The neural network parameters   
 include $\bw_1 \in \mathbb{R}^{5\times 20}$, 
 $\bw_2 \in \mathbb{R}^{5\times 5}$, $\bw_3 \in \mathbb{R}^{1\times 5}$, 
 $\bb_1\in \mathbb{R}^5$, $\bb_2\in \mathbb{R}^5$, and $b_3 \in \mathbb{R}$, and each of their elements is randomly drawn  
 the set $\{-1, -0.5, 0, 0.5, 1\}$. Multiple datasets have been simulated from the model (\ref{DNN:sim1}) under each setting: 
 $n=500$ and $50,000$. 
 Obviously, this function belongs to the Korobov space and is also a hierarchical composition function, where $p$ is considered as a fixed value. 

 We modeled the simulated data using 6 different DNNs with the respective 
 structures given by $p$-5-5-1, $p$-1000-1000-1, $p$-10-10-1, 
 $p$-10-10-10-1, $p$-10-10-10-10-1, and $p$-10-10-10-10-10-1. To demonstrate the consistency of neural feature learning, we calculate the canonical correlation (CC)  coefficient $\rho_{k,1:k'}=\rho(\nu_k(\bw_1^\top\bw_1),\nu_{1:k'}(\hat{\bw}_1^\top\hat{\bw}_1))$,
 where $\nu_k(\bw_1^\top\bw_1)$ denotes the $k$-th eigenvector of 
 $\bw_1^\top\bw_1$, $\nu_{1:k'}(\hat{\bw}_1^\top \hat{\bw}_1)$ denotes   top $k'$ eigenvectors of $\hat{\bw}_1^\top\hat{\bw}_1$, and 
 $\hat{\bw}_1$ denotes an estimator of $\bw_1$. 
 Under this setting,  $\rho_{k,1:k'}$ is given by  
 \[
 \rho_{k,1:k'}=\max_{(c_1,\ldots,c_{k'})^\top \in \mathbb{R}^{k'}} \corr\left(\nu_k(\bw_1^\top\bw_1), c_1 \nu_{1}(\hat{\bw}_1^\top\hat{\bw}_1) + \cdots+c_{k'} \nu_{k'}(\hat{\bw}_1^\top\hat{\bw}_1)\right),
 \]
 which measures the extent to which the neural feature 
 $\nu_k(\bw_1^\top\bw_1)$ is recovered by the learned neural network. 
 As shown in Table \ref{CCtab:eigenvalue}, $\bw_1^\top\bw_1$ contains 
 three major eigenvalues. 
 Table \ref{CCtab1} presents the values of $\rho_{k,1:k'}$ for  $k=1,2,3$ and $k'=3$.

% \begin{table}[!htbp]
%     \centering
%     \begin{tabular}{cccc} \hline 
%         N & Evaluation & True NN & W-NN \\ \hline 
%         \multirow{3}{*}{100} &  Correlation & 0.45(0.08)& 0.24(0.02)\\  
%          & Train MSE & 0.06(0.02)& 0.00(0.00)\\
%          & Test MSE & 0.67(0.30)& 0.62(0.22)\\ \hline 
%         \multirow{3}{*}{200} &  Correlation &0.51(0.09) & 0.33(0.05) \\  
%          & Train MSE & 0.05(0.00)& 0.00(0.00)\\
%          & Test MSE & 0.37(0.11) & 0.51(0.16)\\ \hline 
%         \multirow{3}{*}{500} &  Correlation & 0.95(0.02) & 0.33(0.11) \\  
%          & Train MSE & 0.01(0.00) & 0.00(0.00) \\
%          & Test MSE & 0.02(0.00)  & 0.26(0.06) \\ \hline 
%         % \multirow{3}{*}{1000} & Correlation & 0.96(0.02) & 0.32(0.05)  \\  
%         %  & Train MSE & 0.00(0.00)& 0.00(0.00) \\
%         %  & Test MSE & 0.01(0.00)& 0.15(0.03) \\ \hline 
%     \end{tabular}
%     \caption{Correlation coefficients $\rho(\nu_1(w_1^\topw_1),\nu_1(\hat{w}_1^\top\hat{w}_1))$ and training MSE for different sample sizes (N=100, 200, 500, 50000). The mean and standard deviation, given in the parentheses, were calculated by averaging 5 different simulations.} 
% \end{table}

\begin{table}[!ht]
\centering
\caption{
Canonical correlations $\rho_{k,1:k'}$ ($k=1,2,3$) for different DNN structures, where the mean CC coefficient and its standard deviation (reported in parenthesis) are calculated by averaging over 5 independent datasets.}
%\textcolor{red}{Sehwan: It appears there is a typo in the table (see the blue entry); should the standard deviation be 0.02?}} 
\label{CCtab1}
\begin{adjustbox}{max width=1.0\textwidth} 
\begin{tabular}{cccccccc} \toprule 
 $n$ &  CC  & $p$-5-5-1 & $p$-1000-1000-1 & $p$-10-10-1 & $p$-10-10-10-1 & $p$-10-10-10-10-1 &  $p$-10-10-10-10-10-1 \\ \midrule 
 500 & $\rho_{1,1:1}$ & 0.95(0.02) & 0.16(0.03) & 0.60(0.15) & 0.68(0.14) &  0.58(0.14) &  0.58(0.08) \\
500 & $\rho_{1,1:2}$ & 0.97(0.02) & 0.28(0.05) & 0.84(0.06)& 0.89(0.04) & 0.77(0.12) &  0.66(0.09) \\
500 & $\rho_{1,1:3}$ & 0.99(0.00) & 0.37(0.07) &  0.89(0.04)&0.90(0.04) & 0.79(0.11) &0.81(0.05) \\ %\midrule
50,000& $\rho_{1,1:1}$ & 1.00(0.00) & 0.88(0.03) & 0.88(0.08) & 0.88(0.07)  & 0.96(0.02)  & 0.88(0.08) \\
50,000& $\rho_{1,1:2}$ & 1.00(0.00) & 0.95(0.01) & 0.96(0.03)& 0.99(0.00) & 0.97(0.01) & 0.99(0.01) \\
50,000 & $\rho_{1,1:3}$ & 1.00(0.00) & 0.97(0.01) & 0.99(0.00)& 1.00(0.00) & 0.99(0.00) & 1.00(0.00) \\ \midrule
% \end{tabular}
% \end{adjustbox}
% \end{table}

% \begin{table}[H]
% \centering
% \vspace{-0.2in}
% \caption{
% Canonical correlations $\rho_{2,1:k'}$ across DNN architectures; see the caption of Table~\ref{CCtab1} for a description of the entries.} 
% % Canonical correlations $\rho_{2,1:k'}$ for different DNN structures, 
% % see the caption of Table \ref{CCtab1} for table entries.}
% %where the mean CC coefficient and its standard deviation (reported in parenthesis) are calculated by averaging over 5 independent datasets.} 
% \label{CCtab2}
% \begin{adjustbox}{max width=1.0\textwidth} 
% \begin{tabular}{cccccccc} \toprule 
%  $n$ &  CC  & $p$-5-5-1 & $p$-1000-1000-1 & $p$-10-10-1 & $p$-10-10-10-1 &  $p$-10-10-10-10-1  &  $p$-10-10-10-10-10-1\\ \midrule 
 500 & $\rho_{2,1:1}$ & 0.22(0.08) & 0.22(0.04) & 0.48(0.11)& 0.49(0.11) & 0.55(0.10) & 0.59(0.08) \\
500 & $\rho_{2,1:2}$ & 0.88(0.09) & 0.47(0.02) & 0.79(0.05) & 0.81(0.04) & 0.64(0.09) &0.76(0.03) \\
 500  & $\rho_{2,1:3}$ & 0.90(0.09) & 0.52(0.04) & 0.89(0.04)& 0.87(0.05) & 0.82(0.04) & 0.83(0.03) \\ %\midrule
50,000 & $\rho_{2,1:1}$ & 0.06(0.01) & 0.29(0.10) & 0.25(0.04)& 0.33(0.12) &0.13(0.03) & 0.27(0.07) \\
50,000 & $\rho_{2,1:2}$ & 1.00(0.00) &0.94(0.01) & 0.97(0.03)& 0.99(0.00) & 0.83(0.11) & 0.89(0.08) \\
50,000 & $\rho_{2,1:3}$ &1.00(0.00) &0.97(0.01) & 0.99(0.01)& 0.99(0.00) & 0.97(0.02) &1.00(0.00) \\ \midrule
% \end{tabular}
% \end{adjustbox}
% \end{table}

% \begin{table}[!ht]
% \centering
% \caption{
% Canonical correlations $\rho_{3,1:k'}$ across DNN architectures; see the caption of Table~\ref{CCtab1} for a description of the entries.}

% % Canonical correlations $\rho_{3,1:k'}$ for different DNN structures, where the mean CC coefficient and its standard deviation (reported in parenthesis) are calculated by averaging over 5 independent datasets.} 
% \label{CCtab3}
% \begin{adjustbox}{max width=1.0\textwidth} 
% \begin{tabular}{cccccccc} \toprule 
%  $n$ &  CC  & $p$-5-5-1 & $p$-1000-1000-1 & $p$-10-10-1 & $p$-10-10-10-1 &  $p$-10-10-10-10-1 & p-10-10-10-10-10-1 \\ \midrule 
 500& $\rho_{3,1:1}$ & 0.12(0.03) & 0.24(0.07) & 0.28(0.01)& 0.19(0.02) & 0.26(0.07) &  0.27(0.09) \\ 
500 & $\rho_{3,1:2}$ & 0.28(0.09) & 0.34(0.06) & 0.49(0.08)& 0.38(0.08) & 0.61(0.07) & 0.38(0.07) \\
 500& $\rho_{3,1:3}$ & 0.96(0.03) & 0.42(0.07) & 0.83(0.09)& 0.62(0.14) & 0.82(0.05) & 0.63(0.08) \\ %\midrule 
50,000& $\rho_{3,1:1}$ & 0.04(0.01) & 0.14(0.05) & 0.19(0.11)& 0.10(0.03) & 0.16(0.05) & 0.17(0.12) \\ 
50,000 &$\rho_{3,1:2}$ & 0.06(0.02) & 0.28(0.05) & 0.22(0.11)& 0.13(0.02)& 0.44(0.13) & 0.29(0.15) \\ 
50,000 & $\rho_{3,1:3}$ & 1.00(0.00) & 0.93(0.04) & 0.99(0.01)&0.99(0.00) & 0.97(0.01) & 0.99(0.01) \\ \bottomrule
\end{tabular}
\end{adjustbox}
\end{table}

A careful examination of Table \ref{CCtab1} shows that 
the sublinear DNNs not only recover the neural features 
but also preserve their orders as $n$ becomes large. Note that 
 the network $p$-1000-1000-1 is considered wide when $n=500$ but becomes 
 sublinear in width for $n=50,000$, and its results clearly highlight 
the importance of a sublinear structure for effective neural feature  recovery.  
It is worth noting that this neural network has a total of 1,023,001 parameters, making it highly over-parameterized --- a scenario commonly encountered in our deep learning practice.

Furthermore, the recovery of low-dimensional neural features by the network  $p$-1000-1000-1 suggests that it contains a large number of redundant parameters, supporting the use of sublinear DNNs and 
the low-rank approximation method
proposed in LoRA \citep{Hu2021LoRALA}. 
The sublinear DNN actually provides  
loose, yet effective, upper bounds for the ranks that can be used for each hidden layer of the wide DNN in LoRA.
Our results also indicate that the depth of the DNN affects the recovery of neural features, but not significantly. 
The similar performance of all the different DNNs in neural feature recovery aligns well with the findings of \cite{Li2018MeasuringTI}, where it was observed that the intrinsic dimension of the DNN 
remains stable even as models grow in width and depth.

For a thorough comparison for the performance of sublinear and wide DNNs, we reported their training and test errors   in Table \ref{CCpred}. The comparison highlights the importance of consistent feature learning in DNN prediction. 
In particular, all the networks achieve oracle-level training and test errors when $n=50,000$, where major neural features of the data have been successfully recovered as implied by Table \ref{CCtab1}.
In contrast, when $n=500$, the test errors appear to depend on 
the extent of neural feature recovery.

\begin{table}[!ht]
\centering
\caption{
Mean squared training and test errors produced by different DNN structures, where the mean and standard deviation (reported in parenthesis) are calculated by averaging over 5 independent datasets.} 
\label{CCpred}
\begin{adjustbox}{max width=1.0\textwidth} 
\begin{tabular}{cccccccc} \toprule 
 $n$ &  & $p$-5-5-1 & $p$-1000-1000-1 & $p$-10-10-1 & $p$-10-10-10-1 &  $p$-10-10-10-10-1 & $p$-10-10-10-10-10-1\\ \midrule 
 --- & Model size & 141 & 1,023,001 & 331 & 441 & 551 & 661 \\ \midrule
 \multirow{2}{*}{500} & train & 0.01(0.00) & 0.00(0.00)  & 0.01(0.00) & 0.01(0.00) & 0.01(0.00)&  0.00(0.00) \\ 
    & test     & 0.05(0.02) & 0.21(0.04) & 0.08(0.02) &  0.14(0.05)&  0.15(0.03)& 0.16(0.04)\\ \midrule 
\multirow{2}{*}{50,000} &  train &  0.01(0.00) & 0.01(0.00) &   0.01(0.00)&  0.01(0.00)& 0.01(0.00) & 0.01(0.00) \\ 
      & test &  0.01(0.00)&  0.01(0.00)& 0.01(0.00) & 0.01(0.00)&  0.01(0.00)&0.01(0.00)\\ \bottomrule
\end{tabular}
\end{adjustbox}
\end{table}

% Finally, we note that since the neural features are invariant with respect to the elementary operations on $\bw_l$, such as interchanging 
% two rows or changing the sign or scale of a row,  
% Tables \ref{CCtab1}--\ref{CCtab3} provide a reasonable, though not complete, illustration of the theoretical results established in Theorem \ref{thm:new1}.

As mentioned earlier, $\btheta$ is unique up to loss-invariant transformations, such as reordering hidden neurons within the same layer or simultaneously altering the signs or scales of certain weights and biases. This invariance property makes it particularly challenging to demonstrate the consistency of DNN parameter estimation.
%as established in Theorem \ref{thm:new1}.
 To address this challenge, we present in Table \ref{CCtab:supp}
 the canonical correlations $\rho_{4,1:k'}$ and $\rho_{5,1:k'}$ (for $k'=1,2,\ldots,5$) 
 achieved by the network $p$-5-5-1 with $n=50,000$, and in Table \ref{CCtab:eigenvalue} the   eigenvalues.
Based on the results shown in Table \ref{CCtab1}, Table \ref{CCtab:supp}, and Table \ref{CCtab:eigenvalue},  we can conclude that for this network, the eigenvalues and eigenvectors of 
$\bw_1^\top\bw_1$ have been asymptotically recovered by those of $\hat{\bw}_1^\top\hat{\bw}_1$. 
% Furthermore, since $d_1 \leq p$ in this network, 
% the recovery of the eigenvalues and eigenvectors implies the recovery of 
% $\bw_1$ (up to elementary operations). Note that in the case where $d_1\leq p$, each element of $\bw_1$ can be uniquely determined 
% by solving the equation 
% \[
% \bw_1^\top \bw_1=   [\bv_1,\bv_2,\ldots,\bv_{d_1}] \diag\{\lambda_1, \lambda_2,\ldots, \lambda_{d_1}\}
% [\bv_1,\bv_2,\ldots,\bv_{d_1}]^\top,
% \]
% where $\{\lambda_1,\ldots,\lambda_{d_1}\}$ and $\{\bv_1,\ldots,\bv_{d_1}\}$ denote the given eigenvalues and eigenvectors, respectively.
%  In summary, we have provided an effective method to verify the consistency of parameter estimation for the DNNs with an upper triangular structure (i.e., where $d_i \leq d_{i-1}$ for $i=1,2,\ldots,h,h+1$).
%  This eigen analysis-based method accounts for some loss-invariant transformations that $\btheta$ is subject to. 

The recovery of the eigenvalues and eigenvectors of \(\bw_1^\top\bw_1\) implies recovery of the neural-feature directions and the singular values of \(\bw_1\). When \(d_1\le p\) and \(\bw_1\) has full row rank, \(\bw_1\) is determined by \(\bw_1^\top\bw_1\) only up to a left orthogonal transformation. Thus the eigen-analysis identifies the row space and feature directions of \(\bw_1\), but not the exact weight matrix itself without choosing an additional representative.

\subsection{Double Descent and Beyond} 

Double descent is a surprising phenomenon in machine learning, which describes 
the observation that the test error of a model drops as the model   
grows ever larger into the highly overparameterized regime 
relative to the training sample size, see e.g., \cite{Belkin2019ReconcilingMM,Adlam2020UnderstandingDD,Schaeffer2023DoubleDD}. This phenomenon will be explained at the end of this subsection from a perspective of neural feature learning. 

\paragraph{MNIST}
As in \cite{Belkin2019ReconcilingMM}, we worked with a subset of MNIST (with 
$n_{train}=4000$, $p=784$, and $K=10$ classes) as training data.  
We trained a one-hidden-layer neural network: 784-L-10, where $L$ is the hidden layer width, and measured its prediction performance on a test dataset with $n_{test}=10,000$. Figure \ref{DoubleMNIST}(a) shows 
the resulting training and test errors, 
where the second descent in test errors occurs with 
$L$ ranging 50$\sim$1000. Notably, for each $L\in [50,1000]$, the resulting DNN is sublinear in width, although its total number of parameters can be much 
greater than $n_{train}$. 
Our feature-learning consistency theory provides a principled explanation for the second descent phenomenon, as detailed below.

\begin{figure}[htbp]
\begin{center}
\begin{tabular}{cc}
(a)  & (b) \\ 
\includegraphics[height=1.75in,width=2.75in]{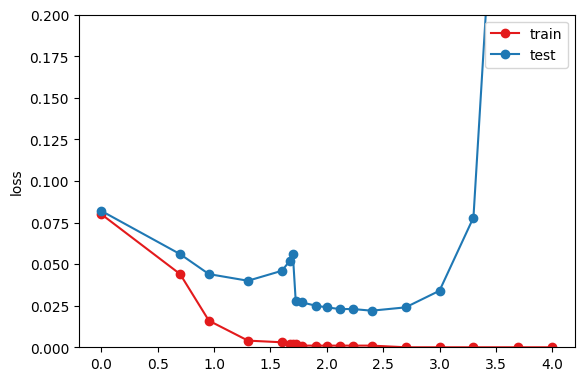}
 & 
 \includegraphics[height=1.75in,width=2.75in]{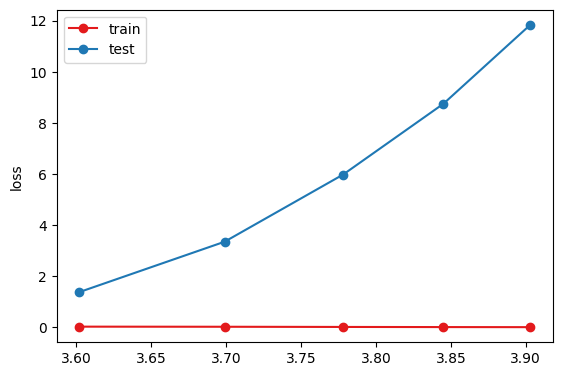} \\ 
\end{tabular}
\end{center}
\vspace{-0.2in}
\caption{MNIST example, where the $y$-axis represents $\ell_2$-loss and the $x$-axis represents $\log_{10}(L)$: (a) training both layers' weights; (b) training the second layer's weights only.}
\label{DoubleMNIST}
\end{figure}

Importantly, Figure \ref{DoubleMNIST} shows that as $L$ further increases, the test error increases again. That is, this example also exhibits a ``double ascent'' phenomenon. We would attribute the second ascent to the lack of feature learning consistency. To make this point clearer, 
we trained the one-hidden-layer neural networks again with $L\in [4000,8000]$, each of which forms a wide neural network.
 For each of the wide networks, we fixed the first layer's weights, initialized with $N(0, 0.5^2)$; and trained the second layer's weights only, which were initialized 
with $N(0,0.1^2)$.  Since $L>n_{train}$, the second layer
forms $K$ small-$n$-large-$p$ logistic regressions and  zero-training error solutions exist. 
%However, since the first layer weights are random, the features represented by the first hidden layer are purely noise. 
As shown in Figure \ref{DoubleMNIST}(b), these networks can still attain zero training errors,  but their test errors are very large.  
By design, the features represented by the first hidden layer 
of the wide DNNs are purely noise;   the resulting 
large test errors indicate the importance of feature learning consistency. 

% \begin{wraptable}{r}{0.6\textwidth}
%  \vspace{-0.3in} 
%   \caption{Training and test errors produced by a wide neural network with structure 784-5000-10, different learning rates and epochs, for the subset MNIST data.}
%   \vspace{-0.1in}
%         \label{MNIST_tab_test}
%         \centering
%         \begin{tabular}{cccc} \toprule
%          Learning rate &  \#epoch & training loss & test loss \\  \midrule 
%              0.001     &   4,000  &    0.000          & 0.523 \\
%              0.001     &   40,000 &    0.000          & 0.515 \\ 
%              0.001     & 100,000  &    0.000          & 0.497 \\ 
%              0.01     &     4,000      &    0.000         & 0.048       \\ \bottomrule
%         \end{tabular}
%     \vspace{-0.2in}
%     \end{wraptable}

In what follows, we further explain the importance of feature learning consistency from two perspectives. First, let's understand why a wide DNN 
can predict well, if it is trained by a gradient descent method.  
% Theoretically, as shown in a series of papers, see e.g., \cite{Gunasekar2017ImplicitRI}, \cite{Gunasekar2018CharacterizingIB}, 
%     \cite{Soudry2018TheIB}, and \cite{Ji2019TheIB}, the gradient descent method provides an implicit regularization for the models in training. Specifically, for high-dimensional linear and logistic regression models initialized at origin, the gradient descent method converges to the minimum Euclidean norm solution. 
    % That is, for the linear regression, it converges to a ridge regression with 
    % a zero penalty parameter; and for logistic regression, it converges to the maximum margin solution like the support vector regression.  
    % Combining these results with the StoNet leads to the following explanation for the success of wide DNNs. 
    % For example, 
    Consider a StoNet model for nonlinear regression.
    Suppose that the StoNet model is true, and its hidden layer outputs $\bYmis$ are observed. Training such a StoNet is reduced to solving a series of high-dimensional linear regressions. If a gradient descent method is used, then training the StoNet is equivalent to solving a series of ridge regressions with zero penalty, as the gradient descent method provides an implicit regularization for the models in training, see e.g., \cite{Gunasekar2017ImplicitRI}, 
    %\cite{Gunasekar2018CharacterizingIB}, 
    \cite{Soudry2018TheIB}, and \cite{Ji2019TheIB}. 
    By the recovery property of ridge regression  \citep{Kobak2020TheOR}, each of the ridge regressions leads to 
    consistent feature learning for its relevant variables as well as 
    consistent estimation for its response. Therefore, the wide DNN can still predict well, if sufficiently trained with gradient descent. For classification problems, the explanation is similar.   

    \begin{table}[!ht]
 %\vspace{-0.35in} 
  \caption{Training and test errors produced by a wide neural network with structure 784-5000-10, different learning rates and epochs, for the subset MNIST data.}
 % \vspace{-0.1in}
        \label{MNIST_tab_test}
        \centering
        \begin{tabular}{cccc} \toprule
         Learning rate &  \#epoch & training loss & test loss \\  \midrule 
             0.001     &   4,000  &    0.000          & 0.523 \\
             0.001     &   40,000 &    0.000          & 0.515 \\ 
             0.001     & 100,000  &    0.000          & 0.497 \\ 
             0.01     &     4,000      &    0.000         & 0.048       \\ \bottomrule
        \end{tabular}
    %\vspace{-0.2in}
    \end{table}

 On the other hand, as pointed out by \cite{Soudry2018TheIB}, 
the convergence of the gradient descent to its implicit regularization limit is very slow, only logarithmic in the convergence of the loss itself. 
%``This can help explain the benefit of continuing to optimize the logistic or cross-entropy loss even after the training error is zero'' \citep{Soudry2018TheIB}. 
This suggests that the  ``double ascent'' phenomenon in Figure 
\ref{DoubleMNIST}(a) might be due to insufficient training.  
To examine this, we re-trained a wide neural network with structure $p$-5000-10. We set the learning rate 
 to 0.001 and increased the number of epochs 
 from 4,000 to 40,000 and 100,000. 
We found that with lengthened runs, the test error of the wide DNN 
  can be reduced, but at a very slow rate,
  %as noted by \cite{Soudry2018TheIB}, 
  see Table \ref{MNIST_tab_test}. We have also set the learning rate to 0.01 and re-trained the model 
  for 4,000 epochs, which yielded low test errors and recovered 
  the double descent phenomenon. 
  Other than test errors, we compared the features learned in different runs. Figure \ref{fig:epochs} shows that consistent feature learning can be achieved by 
  the wide network  with a learning rate of 0.01, while a learning rate of 0.001 may require an extremely long time to achieve the same result. In summary, Table \ref{MNIST_tab_test} and Figure \ref{fig:epochs} underscore the 
  importance of feature learning consistency, 
  reinforcing the evidence we observed in Section \ref{sect:feature}.
  
% \begin{wraptable}{r}{0.6\textwidth}
%  \vspace{-0.3in} 
%   \caption{Training and test errors produced by a wide neural network with structure 784-5000-10, different learning rates and epochs, for the subset MNIST data.}
%   \vspace{-0.1in}
%         \label{MNIST_tab_test}
%         \centering
%         \begin{tabular}{cccc} \toprule
%          Learning rate &  \#epoch & training loss & test loss \\  \midrule 
%              0.001     &   4,000  &    0.000          & 0.523 \\
%              0.001     &   40,000 &    0.000          & 0.515 \\ 
%              0.001     & 100,000  &    0.000          & 0.497 \\ 
%              0.01     &     4,000      &    0.000         & 0.048       \\ \bottomrule
%         \end{tabular}
%     \vspace{-0.2in}
%     \end{wraptable}

  Additionally, we compared in Figure \ref{fig:transition} 
  the features learned by the sublinear and wide DNNs 
  with a learning rate of 0.001, 
  where all the networks have been sufficiently trained to zero training errors. The comparison indicates that
   the sublinear DNN works better than the wide ones in  feature extraction.

\begin{figure}[!ht]
\centering
 \begin{tabular}{cccc}
%(a)  & (b) & (c) & (d) \\  
  \includegraphics[width=0.6in]{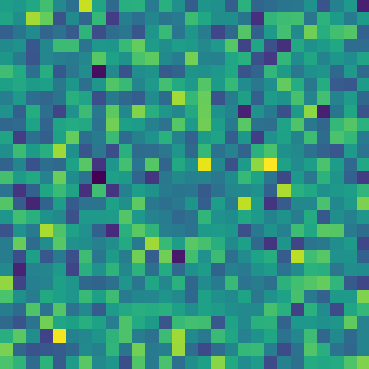} & 
  %\caption{LR: 0.001, Epoch: 4000}
  %\label{fig:sub1}
  \includegraphics[width=0.6in]{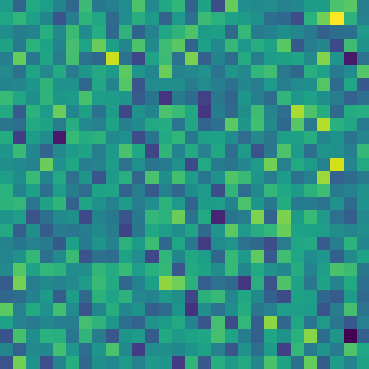} &
  % \caption{LR: 0.001, Epoch: 40000}
  % \label{fig:sub1}
  \includegraphics[width=0.6in]{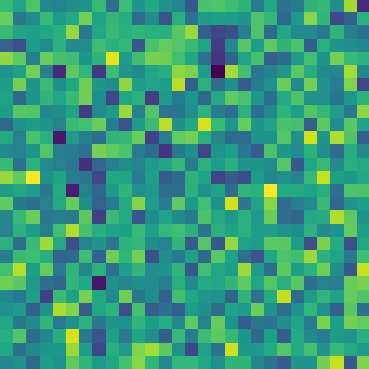} &
  % \caption{LR: 0.001, Epoch: 100000}
  % \label{fig:sub1}
  \includegraphics[width=0.6in]{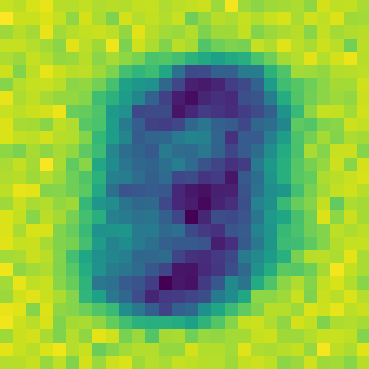}
  % \caption{LR: 0.01, Epoch: 4000}
  % \label{fig:sub1}
\end{tabular}
\caption{Features learned by a wide neural network $p$-5000-10 for the subset MNIST data under the settings (learning rate, epoch)=(0.001, 4,000), (0.001, 40,000), (0.001, 100,000), and (0.01, 4,000), from left to right, where
the features were extracted from the first hidden layer using the method as described in 
 \cite{Radhakrishnan2024MechanismFF}.
  }
\label{fig:epochs}
\end{figure}

 \begin{figure}[!ht]
\centering
\begin{tabular}{ccccccc} 
  L=80  & L=170 &  L=500 & 
L=1000 &  L=2000 & L=5000 & L=10000 \\
  \includegraphics[width=0.5in]{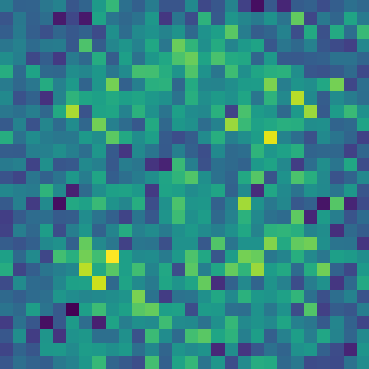} &
  \includegraphics[width=0.5in]{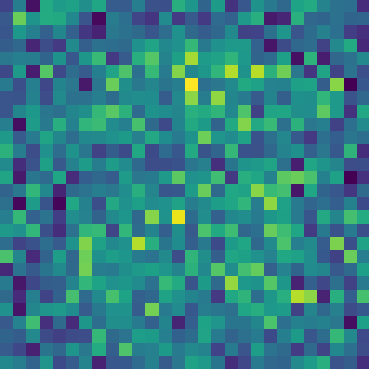} & 
  \includegraphics[width=0.5in]{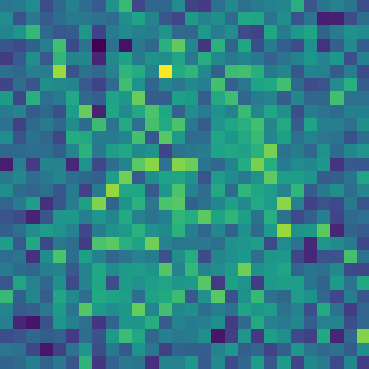} & 
  \includegraphics[width=0.5in]{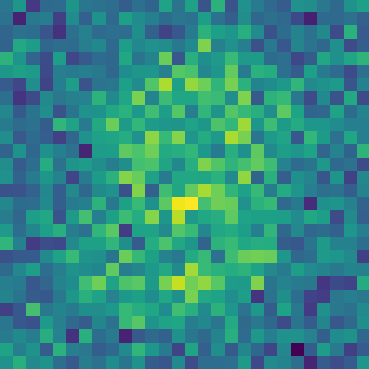} & 
  \includegraphics[width=0.5in]{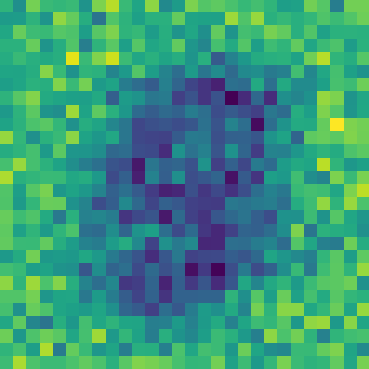} & 
  \includegraphics[width=0.5in]{MNIST_feature/5000.png} & 
  \includegraphics[width=0.5in]{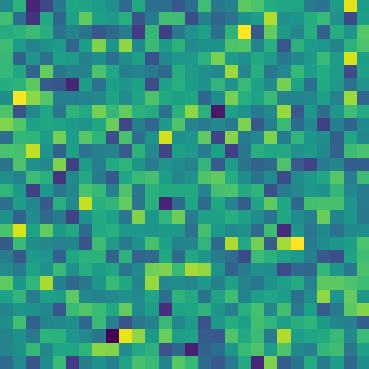}
  \end{tabular} 
\caption{Features learned by the neural networks 
 with structure $p$-$L$-10 for different values of $L$, where the 
 features were extracted from the first hidden layer using 
 the method as described in 
 \cite{Radhakrishnan2024MechanismFF}.
 }
\label{fig:transition}
\end{figure}

 Other than classification, we have tried a nonlinear regression 
 problem, see Supplement \ref{suppRegression}. 
  As shown by Figure \ref{fig:nonlinearReg}, if we fix the first layer weights at random values and trained the second layer weights only,  the training error can be reduced to 0, while the test error is large. Again, this underscores the importance of feature learning consistency for prediction.

  In summary, our experiments indicate that {\it feature learning consistency is crucial for DNNs to achieve accurate predictions}.
  Whether a network is narrow or wide, it can perform well as long as it effectively extracts the correct features. 
  From a feature learning  perspective, we provide the following conjectural explanation for 
  the double descent phenomenon:
  
  First of all, a large value of $L$ allows  
  more features to be extracted from the data; however, as mathematically shown by \cite{Chang1983OnUP}, the importance of features in separating data classes is not necessarily aligned with their eigenvalues. For instance, 
  \cite{Chang1983OnUP} constructed a two-component mixture Gaussian example, 
  where the two components are only well-separated in the subspace of the first and last eigenvectors.
  In the context of DNNs, some useful features with large eigenvalues can  be learned when $L$ is small, leading to the first descent in test errors. As $L$ increases to a moderate value, additional noisy features may be learned,  resulting in an ascent in test errors. Finally,
   some useful features with small eigenvalues may only  be learned when $L$ is sufficiently large, leading to the second descent in test errors,  where the   network is highly over-parameterized.

\subsection{More Examples: Sublinear or Wide?}

In this subsection, we delve deeper into the choice of hidden layer widths from the perspective of feature learning. Theoretically, as shown in a series of papers (e.g., \cite{Gunasekar2017ImplicitRI}, 
%\cite{Gunasekar2018CharacterizingIB}, 
\cite{Soudry2018TheIB}, \cite{Ji2019TheIB}), the gradient descent method provides implicit regularization during model training. Specifically, for high-dimensional linear regression models initialized at the origin, gradient descent converges to the solution with the minimum Euclidean norm.
By applying this result to the StoNet model (\ref{eq:stonet}), we arrive at the following solution for $\bw_i$ at each hidden layer: 
\[
\hat{\bw}_i= \mathbb{Y}_i^\top \Psi(\mathbb{Y}_{i-1}) (\Psi(\mathbb{Y}_{i-1})^\top \Psi(\mathbb{Y}_{i-1})+ \tilde{\sigma}^2 I)^{-1},
\]
where $\tilde{\sigma}^2 \geq 0$ represents the implicit penalty coefficient, and $\mathbb{Y}_i \in \mathbb{R}^{n\times d_i}$ denotes 
imputed $\bY_i$ values for all $n$ observations. This leads to the rank constraint:
\[
{\rm rank}(\hat{\bw}_i^\top \hat{\bw}_i) \leq \min\{ {\rm rank}(\Psi(\mathbb{Y}_{i-1})), 
{\rm rank}(\mathbb{Y}_i)\} \leq \min\{n,d_i,d_{i-1}) \leq n,
\]
which indicates that an overly wide neural network will not learn more than $n$ features at each hidden layer. By the singular value decomposition of $\hat{\bw}_i$, it is clear that features corresponding to zero eigenvalues will not affect the values of $\mathbb{Y}_i$.
Therefore, to enable the DNN to extract more features from the data, one should set  $d_i$'s to be reasonably large, but not necessarily greater than $n$.
 
To illustrate this finding, \textcolor{black}{we considered a simulation study, see 
 Section \ref{result:sublinear} of the supplement, where the true regression function has a hierarchical composition structure and the networks 
 $p-L-1$, $p-L-L-1$, and $p-L-L-L-L-1$ are trained. We set $n=500$ and 
$L \in \{2,\ldots,1000\}$.}
%The results, see Table \ref{tab:narrow-wide comparison}, suggest that sublinear DNNs, when designed with an appropriate width, can perform equally or better than wide DNNs in prediction. 
Additionally, we considered three UCI datasets:  
Boston housing, Yacht,  and Energy.  
For each dataset, we tried DNNs with structure $p$-$L-\cdots-L$-1,   with $L$ ranging from 100 to 2,000 and $h$ ranging from 2 to 7. 
For real-data problems, a slightly deeper architecture may better capture the unknown compositional structure of the true function. For evaluation, we performed five random train/test splits and trained a fresh network on each split. Refer to Tables \ref{boston}--\ref{energy} for the results. 

The results indicate that, with appropriate width and depth, sublinear DNNs can perform as well as or better than wide DNNs in prediction. Our findings recommend using sublinear architectures with a reasonably large width and suitable depth so that useful features are extracted and the compositional structure is captured. Moreover, to match the predictive performance of wider counterparts, the depth of a sublinear DNN may need to increase roughly inversely with its width.

\subsection{CelebA}

As another application of the sublinear DNN, we consider an example of feature extraction 
in classifying images from the CelebA dataset \citep{Liu2015DeepLF}.
As in \cite{Radhakrishnan2024MechanismFF}, we employed a fully connected ReLU DNN 
for the task. The DNN we used has a structure of $3\times 64\times 64-L-L-L-L-2$ with $L=1024$. 
Therefore, the DNN is still of sublinear width when applied to the CelebA data  with a training sample size $n_{train}=14,000$. 
We trained the fully connected DNN using SGD with a momentum 
parameter of 0.9, a learning rate of 0.05, a mini-batch size of 64, and 100 epochs.  Figure \ref{fig:celebA} shows four features 
extracted in training, which indicate the success of 
feature learning by the sublinear DNN.

\begin{figure}[!ht]
    \centering
    \includegraphics[width=0.2\textwidth]{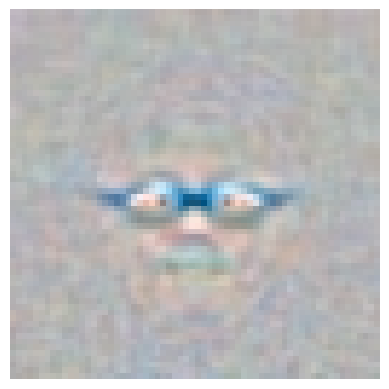}
    \includegraphics[width=0.2\textwidth]{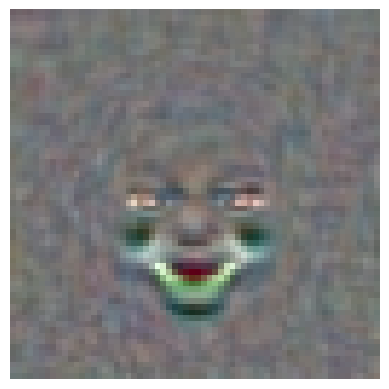} 
    \includegraphics[width=0.2\textwidth]{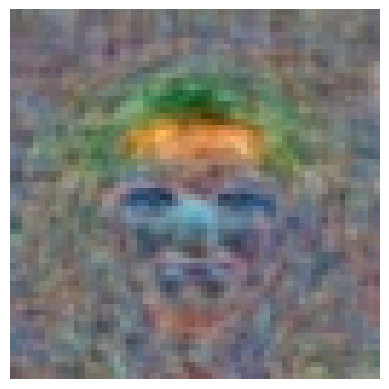}
    \includegraphics[width=0.2\textwidth]{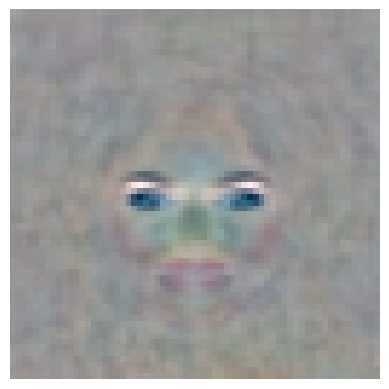}
    \caption{Four features extracted from the first hidden layer, as described in \cite{Radhakrishnan2024MechanismFF}, in training a fully connected ReLU DNN with structure $3\times 64\times 64$-L-L-L-L-2 for the CelebA data:  glass, smile, hat, and  arched eyebrow, from left to right.}
    \label{fig:celebA}
\end{figure}

% \begin{figure}
%     \centering
%     \includegraphics[width=0.4\textwidth]{Figure/smile.png}
%     \caption{Smile}
%     \label{fig:enter-label}
% \end{figure}

% \begin{figure}
%     \centering
%     \includegraphics[width=0.4\textwidth]{Figure/hat.png}
%     \caption{Hat}
%     \label{fig:enter-label}
% \end{figure}

 \vspace{-0.1in}
\section{Structure Analysis for Large-Scale DNNs} \label{deepCNN}

It is worth noting that many large-scale DNNs, 
such as AlexNet \citep{Krizhevsky2012ImageNetCW}, VGGNet \citep{Simonyan2014VeryDC}, 
ResNet \citep{he2016deep}, and GoogLeNet \citep{Szegedy2015GoogleNet}, 
belong to the class of sublinear DNNs in their benchmark 
studies, despite containing   
 a huge number of parameters.  
For any deep CNN, we can still randomize the feeding value
of each node with incoming trainable connections  
as in (\ref{eq:stonet}), thereby enabling the construction of an 
asymptotically equivalent StoNet for it.  
For each node, the number of incoming connections, i.e., the dimension of explanatory variables of the corresponding regression, is calculated as $(s_l^{(1)}*s_l^{(2)}*d_{l-1}+1)$, where 
$s_l^{(1)}*s_l^{(2)}$ denotes the filter  size and corresponds to the fixed $s$ value in the constituent map 
of the compositional function (see Theorem \ref{prop:1}), and $d_{l-1}$ denotes the number of filters in the previous layer, and `1' represents the bias term. 
For a deep CNN belonging to the class of sublinear DNNs, the following two conditions need to be satisfied: 
$\max_{l} (s_l^{(1)} * s_l^{(2)}*d_{l-1}+1) \prec n$ and $\sum_{l} d_l \prec n$.  
The latter condition can also be interpreted as the total number of regressions formed in the stochastic deep CNN.  
%used at each convolutional layer, it 
% is easy to see that the weight matrices $\bw_l$'s of the deep CNNs 
% are still of full-row rank, satisfying  Assumption \ref{ass:3}-(iii). 
The structures of the deep CNNs are analyzed in the following, 
based on the summary provided by Aqeel Anwar at
\url{https://towardsdatascience.com/the-w3h-of-alexnet-vggnet-resnet-and-inception-7baaaecccc96}.

 AlexNet is one of the earliest deep CNNs, which won the 2012 ImageNet LSVRC-2012 challenge.
 It comprises a total of 62.4 million trainable parameters, including 5 convolutional layers and 3 fully connected (FC) layers. In this network, the maximum number of incoming connections to a single node is 9,217, achieved at the first FC layer, and the total number of nodes 
 with incoming trainable connections is 10,568.  
VGG16 has approximately 138.4 million parameters. In VGG16, the maximum number of incoming connections to a single node is 25,089, achieved at the first FC layer, and the total number of nodes with incoming trainable connections is 13,544. 
ResNets have many variants, e.g., ResNet18, ResNet50, and ResNet101. 
Let's consider ResNet18 as an example. It comprises approximately 11.5 million trainable parameters, its maximum number of incoming connections to a single node is 4,609, achieved at layers 15, 16 and 17, and its total number of nodes with incoming trainable connections is 4,904.  
GoogLeNet has about 6.4 million trainable parameters, in which the maximum 
number of incoming connections to a single node is 1,729, achieved in Inception 5b, 
and the total number of nodes with incoming trainable connections is 8,280. 

 In summary, all these networks are  sublinear when trained on large-scale  datasets such as ImageNet, CIFAR10, CIFAR100, and MNIST, each with $n\geq 50,000$ training samples.
% Moreover,  due to the inherent 
% hierarchical composition structure 
% of images, 
% Theorems \ref{thm:new1} and 
% \ref{prop:1} generally hold for these sublinear networks.
% With these theoretical results, the preceding analysis may help elucidate why these large-scale  networks perform exceptionally well in prediction after adequate training with a massive dataset. 
 Moreover, because images exhibit an inherent hierarchical, compositional structure, Theorems~\ref{thm:new1} and~\ref{prop:1} apply to these sublinear networks. Taken together, these theoretical insights and the preceding analysis help explain why such large-scale networks achieve exceptional predictive performance after sufficient training on large-scale datasets.

\begin{figure}[!ht] 
    \centering
\includegraphics[width=0.5\textwidth]{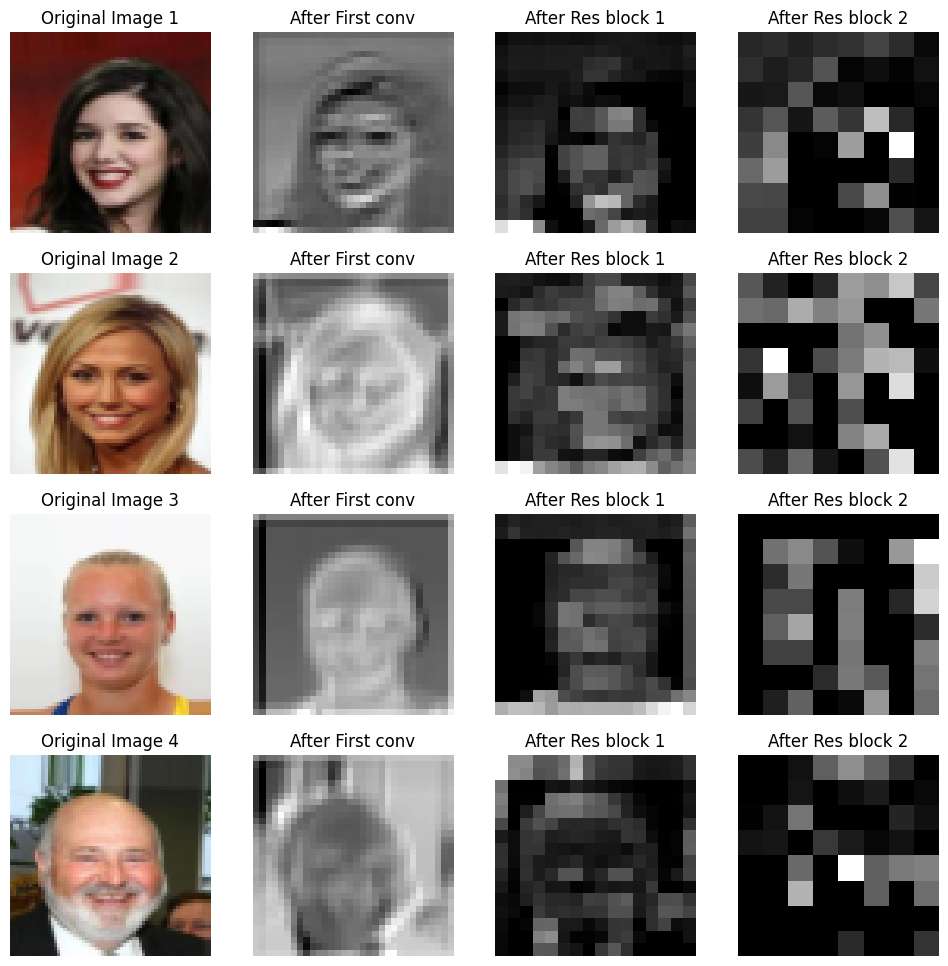}
    \caption{Activation maps of the smile features extracted from the first convolutional layer, first residual block, and second residual block.}
    \label{fig:ResNet_view}
\end{figure}

We have tested feature consistency on CelebA using ResNet18. Instead of  the original ResNet18, which has a kernel size configuration of (64, 128, 256, 512), we utilized a modified version of (16, 32, 64, 128).
The modified ResNet18 was trained with SGD with momentum 0.9, a learning rate of 0.1, and a batch size of 64.
For feature visualization, many methods are available, as listed at \url{https://github.com/justinbellucci/cnn-visualizations-pytorch}. 
We employed activation map visualization, processing images at each layer and visualizing the respective image representation. 
In other words, activation map visualization demonstrates what the image looks like after the application of each filter. 
Figure \ref{fig:ResNet_view} shows 
that the sublinear CNN works well for feature learning.

\section{Conclusion} \label{conclusionsection}

We study sublinear DNNs and prove that, in the large-sample limit, they achieve universal approximation and feature-learning consistency for hierarchically compositional functions. We also analyze AlexNet, VGGNet, and ResNet, showing that these deep CNNs are sublinear on their image-classification benchmarks. Because natural images are hierarchically compositional, our results offer a statistical explanation for the strong performance of large-scale deep learning models in image processing. Our theory identifies a regime in which consistent prediction is guaranteed for large-scale deep learning models despite possible over-parameterization. \textcolor{black}{Empirically, sublinear DNNs match or outperform wide DNNs in prediction accuracy and are more robust to training hyperparameter settings}.

The theoretical proof of this paper leverages StoNet as a surrogate for the DNN, creating a bridge between linear models and DNNs.
Beyond sublinear DNNs, this approach can be applied to sparse deep learning by extending the sparse learning theory from linear models to DNNs. Additionally, we conjecture that the StoNet could enable the
extension of benign overfitting theory from linear models to super-wide DNNs, leveraging its capability in sufficient dimension reduction \citep{LiangSLiang2022}.
% Benign overfitting has been studied within a regularization framework for linear and logistic regression, see e.g. \cite{Tsigler2020BenignOI} and \cite{Wang2022TightBF}.
% Extending these theories to super-wide DNNs via StoNets will be
% of great interest. 
%a focus of future research.

In summary, this work validates the effectiveness of sublinear DNNs for learning features from hierarchically compositional functions and provides theoretical guidance for designing appropriate network architectures for tasks such as image processing, where hierarchical composition is intrinsic. The main takeaways are: (i) sublinear DNNs achieve feature-learning consistency for hierarchically compositional functions, even when the total number of parameters exceeds the sample size; 
%(ii) they must be sufficiently wide and reasonably deep to capture the compositional structure; 
(ii) although wide DNNs can drive training error near zero, their predictive performance can be sensitive to the optimization algorithms and hyperparameter settings, whereas sublinear DNNs are notably more robust; and (iii) sublinear DNNs comply with neural scaling laws, 
achieve universal approximation for hierarchically compositional 
functions in the large-sample limit, and may extend to other classes 
of functions, 
%beyond those studied in Section~\ref{sect:approx}, 
a direction that merits further study.

\section*{Availability} 

The code used to run the experiments is available at \url{https://github.com/sehwankimstat/Sublinear-DNN}.

 \section*{Acknowledgments}
 Liang's research is supported in part by the NSF grant DMS-2210819 and the NIH grant R01-GM152717. Kim's research is supported by the Global-Learning \& Academic
Research Institution for Master’s and PhD Students, and Postdocs (G-LAMP) Program of the National
Research Foundation of Korea (NRF), funded by the Ministry of Education (No. RS-2025-25442252).

\FloatBarrier

\bibliographystyle{asa}
\bibliography{reference} % Your BibTeX file

\newpage 
% Appendix section
\appendix

\setcounter{section}{0}

\renewcommand{\theassumption}{A\arabic{assumption}}
\providecommand{\theHassumption}{}
\renewcommand{\theHassumption}{A.\arabic{assumption}}
\setcounter{table}{0}
\renewcommand{\thetable}{A\arabic{table}}
\providecommand{\theHtable}{}
\renewcommand{\theHtable}{A.\arabic{table}}
\setcounter{equation}{0}
\renewcommand{\theequation}{A\arabic{equation}}
\providecommand{\theHequation}{}
\renewcommand{\theHequation}{A.\arabic{equation}}
\setcounter{figure}{0}
\renewcommand{\thefigure}{A\arabic{figure}}
\providecommand{\theHfigure}{}
\renewcommand{\theHfigure}{A.\arabic{figure}}
\setcounter{lemma}{0}
\renewcommand{\thelemma}{A\arabic{lemma}}
\providecommand{\theHlemma}{}
\renewcommand{\theHlemma}{A.\arabic{lemma}}

\vspace{0.2in}
\begin{center}
{\bf \large APPENDIX}
\end{center}

\section{Theoretical Proofs} \label{section:proofs}

\subsection{Useful Lemmas}

\begin{lemma}[Coordinatewise preactivation bound]
\label{lem:preactivation-coordinate-bound}
Suppose that \(X\in[0,1]^{d_0}\). Assume that the biases are uniformly
bounded and that the incoming weights of each neuron have uniformly bounded
\(\ell_1\)-norm; that is, there exist constants \(B_b,B_w<\infty\) such that
\[
\max_{l,k}|b_{l,k}|\le B_b,
\qquad
\max_{l,k}\sum_{j=1}^{d_{l-1}}|w_{l,kj}|\le B_w .
\]
Then, for tanh and sigmoid activations, there exists a constant
\(C_{\widetilde Y}<\infty\), independent of \(n,l,k\), such that
\[
\mathbb E\left(\widetilde Y_{l,k}^{(t)}\right)^2
\le
C_{\widetilde Y},
\qquad
l=1,\ldots,h,\quad k=1,\ldots,d_l .
\]
For ReLU activations, the same conclusion holds provided
\(\sup_{l\le h}\sigma_l^2<\infty\).
\end{lemma}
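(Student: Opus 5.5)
We argue by induction on the layer index \(l\). The base case is immediate. Since \(\bX\in[0,1]^{d_0}\), every coordinate of the input satisfies \(|X_j|\le 1\). Hence the first-layer preactivation obeys
\[
|\widetilde Y_{1,k}^{(t)}|
=\Bigl|b_{1,k}+\sum_{j}w_{1,kj}X_j\Bigr|
\le B_b+B_w
\]
almost surely. Its second moment is therefore bounded by \((B_b+B_w)^2\), uniformly in \(k\), \(t\) and \(n\). The hypotheses bound the bias and the \(\ell_1\)-norm of the incoming weights uniformly in \(n\). These hold for the iterates \(\hat\btheta_n^{(t)}\) too, so all constants below are free of \(t\) as well.

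For the inductive step, recall that in the StoNet (\ref{eq:stonet}) layer \(l\ge2\) receives \(\Psi(\bY_{l-1})\). Here \(\bY_{l-1}=\widetilde\bY_{l-1}+\be_{l-1}\) and \(\be_{l-1}\sim N(0,\sigma_{l-1}^2I)\). Thus
\[
\widetilde Y_{l,k}^{(t)}=b_{l,k}+\sum_j w_{l,kj}\Psi(Y_{l-1,j}).
\]
We bound this via the weighted form of Jensen's inequality, treating \(|w_{l,kj}|/\sum_i|w_{l,ki}|\) as weights:
\[
\Bigl(\sum_j w_{l,kj}\Psi(Y_{l-1,j})\Bigr)^2\le \Bigl(\sum_j|w_{l,kj}|\Bigr)\sum_j |w_{l,kj}|\,\Psi(Y_{l-1,j})^2 .
\]
Taking expectations gives
\[
\mathbb E(\widetilde Y_{l,k}^{(t)})^2\le 2B_b^2+2B_w^2\max_j\mathbb E\,\Psi(Y_{l-1,j})^2 .
\]
This step is the key point: the \(\ell_1\) control prevents any dependence on the width \(d_{l-1}\).

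For tanh and sigmoid the activation is bounded by \(1\), so \(\max_j\mathbb E\Psi(Y_{l-1,j})^2\le1\). The bound \(2B_b^2+2B_w^2\) then holds for every \(l\ge2\) without any induction. Combined with the first layer, we may take \(C_{\widetilde Y}=\max\{(B_b+B_w)^2,\,2B_b^2+2B_w^2\}\).

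For ReLU we have \(\Psi(y)^2\le y^2\), so
\[
\mathbb E\Psi(Y_{l-1,j})^2\le \mathbb E(\widetilde Y_{l-1,j}^{(t)})^2+\sigma_{l-1}^2\le C_{l-1}+\bar\sigma^2,
\]
where \(\bar\sigma^2=\sup_l\sigma_l^2\) and \(C_{l-1}\) is the bound at layer \(l-1\). This produces the recursion \(C_l=2B_b^2+2B_w^2(C_{l-1}+\bar\sigma^2)\).

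The main obstacle is that this recursion is uniform in \(n\) and \(k\) but grows geometrically in depth unless \(2B_w^2<1\) or the depth \(h\) is bounded. The Jensen step is sharp in the worst case, so tightening it will not help, and we must handle the depth dependence directly. We plan to state the ReLU constant as \(C_{\widetilde Y}=\max_{l\le h}C_l\), which is finite and independent of \(n\), \(l\) and \(k\) under either of two conditions. The first is that \(h\) is fixed. The second is that \(h\) grows but \(2B_w^2<1\), in which case \(C_l\le (2B_b^2+2B_w^2\bar\sigma^2)/(1-2B_w^2)+C_1\). We will interpret the lemma's "\(\sup_{l\le h}\sigma_l^2<\infty\)" together with this caveat.

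Alternatively, one can avoid squaring. Since \(\Psi\) is 1-Lipschitz with \(\Psi(0)=0\), Minkowski's inequality in \(L^2\) gives
\[
\|\widetilde Y_{l,k}\|_{L^2}\le B_b+B_w(\max_j\|\widetilde Y_{l-1,j}\|_{L^2}+\bar\sigma).
\]
This yields the cleaner linear recursion \(M_l=B_b+B_w(M_{l-1}+\bar\sigma)\), which is bounded uniformly in \(l\) when \(B_w<1\). We would present this version for ReLU.
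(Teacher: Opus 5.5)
Your proof is essentially the paper's: an almost-sure bound $B_b+B_w$ at the first layer, boundedness of tanh/sigmoid at later layers, and for ReLU the same weighted Cauchy--Schwarz step (your weighted Jensen) combined with $\Psi(y)^2\le y^2$; your use of $\mathbb{E}(Y_{l-1,j})^2=\mathbb{E}(\widetilde Y_{l-1,j})^2+\sigma_{l-1}^2$ is slightly sharper than the paper's factor-$2$ bound.

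Your depth caveat is correct, and it is a point the paper's proof passes over. The paper's ReLU recursion also grows geometrically in $l$, so ``uniformly bounded by induction'' from $\sup_l\sigma_l^2<\infty$ alone only covers bounded depth. For example, a width-one ReLU chain with zero biases, weight $2$ and $X\ge 1/2$ satisfies the lemma's hypotheses yet has $\mathbb{E}\widetilde Y_l^2\ge 4^{l-1}$. So a restriction such as fixed $h$, or a contraction like your $B_w<1$, is genuinely needed for the ReLU claim when $h=h_n\to\infty$.
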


\begin{proof}
We prove the result by induction over the layers. For the first layer,
\[
\widetilde Y_{1,k}
=
b_{1,k}
+
\sum_{j=1}^{d_0}w_{1,kj}X_j .
\]
Since \(X_j\in[0,1]\), we have
\[
|\widetilde Y_{1,k}|
\le
|b_{1,k}|
+
\sum_{j=1}^{d_0}|w_{1,kj}||X_j|
\le
B_b+B_w.
\]
Hence
\[
\mathbb E(\widetilde Y_{1,k})^2
\le
(B_b+B_w)^2.
\]

Now suppose that, for some \(l\ge2\), the previous hidden variables satisfy
\[
\sup_j \mathbb E\{Y_{l-1,j}^{2}\}\le C_{l-1}
\]
for a constant \(C_{l-1}<\infty\). Write
\[
\widetilde Y_{l,k}
=
b_{l,k}
+
\sum_{j=1}^{d_{l-1}}
w_{l,kj}\Psi(Y_{l-1,j}).
\]
For tanh and sigmoid, \(|\Psi(x)|\le 1\), and therefore
\[
|\widetilde Y_{l,k}|
\le
B_b+B_w.
\]
Thus
\[
\mathbb E(\widetilde Y_{l,k})^2
\le
(B_b+B_w)^2.
\]

For ReLU, \(\Psi(x)=x_+\), so \(|\Psi(x)|\le |x|\). By Cauchy's inequality in
weighted form,
\[
\left(
\sum_{j=1}^{d_{l-1}}
|w_{l,kj}|\,|\Psi(Y_{l-1,j})|
\right)^2
\le
\left(\sum_{j=1}^{d_{l-1}}|w_{l,kj}|\right)
\left(\sum_{j=1}^{d_{l-1}}|w_{l,kj}|\Psi(Y_{l-1,j})^2\right).
\]
Taking expectations gives
\[
\mathbb E
\left(
\sum_{j=1}^{d_{l-1}}
|w_{l,kj}|\,|\Psi(Y_{l-1,j})|
\right)^2
\le
B_w^2 C_{l-1}.
\]
Therefore,
\[
\mathbb E(\widetilde Y_{l,k})^2
\le
2B_b^2+2B_w^2C_{l-1}.
\]
Since
\[
Y_{l,k}
=
\widetilde Y_{l,k}+e_{l,k},
\qquad
e_{l,k}\sim N(0,\sigma_l^2),
\]
we also have
\[
\mathbb E(Y_{l,k})^2
\le
2\mathbb E(\widetilde Y_{l,k})^2+2\sigma_l^2.
\]
Thus, if \(\sup_l\sigma_l^2<\infty\), the second moments remain uniformly
bounded by induction. This proves the claim.
\end{proof}

%  \begin{lemma} \label{lemma:eigen} 
% Consider a random matrix $\mathbb{U}\in \mathbb{R}^{n\times d}$ with $n \geq d$.  Suppose that the 
% eigenvalues of $\mathbb{U}^\top \mathbb{U}$ are upper bounded, i.e., $\lambda_{\max}(\mathbb{U}^\top\mathbb{U}) \leq \kappa_{\max}$ for 
% some constant $\kappa_{\max}>0$. 
% Let $\Psi(\mathbb{U})$ denote an elementwise transformation of $\mathbb{U}$. Then 
% %\begin{equation} \label{eigenU:eq}
% $\lambda_{\max}\left( (\Psi(\mathbb{U}))^\top (\Psi(\mathbb{U})) \right) \leq \kappa_{\max}$
% %\end{equation}
% for the {\it tanh}, {\it sigmoid} and ReLU transformations. 
% \end{lemma} 
% \begin{proof}  
% For ReLU, the result follows from Lemma 5 of \cite{Dittmer2018SingularVF}. For 
% {\it tanh} and {\it sigmoid}, since they are Lipschitz continuous with a Lipschitz constant of 1, Lemma 5 of  \cite{Dittmer2018SingularVF} also applies. 
% \end{proof}

\begin{lemma}[First-order expansion with stochastic remainder]\label{lem:taylor_op}
Let $\Psi:\mathbb{R}\to\mathbb{R}$ be an activation function, which act componentwise on vectors.  Assume $\Psi\in C^2(\mathbb{R})$ with
uniformly bounded second derivatives, i.e., 
\[
\|\Psi''\|_\infty := \sup_{x\in\mathbb{R}}|\Psi''(x)| < \infty .
\]
Fix $l$ and write 
\[
\bY_l=\tilde{\bY}_l+\be_l\in\mathbb{R}^{d_l},
\]
where $\be_l$ is independent of $\tilde{\bY}_l$,
 satisfies $\mathbb{E}[\be_l]=0$ and
$\mathrm{Var}(\be_l)=\sigma_l^2 I_{d_l}$, with $\sigma_l\to 0$.
Then there exists a   remainder vector $\br_l\in\mathbb{R}^{d_l}$ such that
\begin{equation}\label{eq:taylor_expansion}
\Psi(\bY_l)
=
\Psi(\tilde{\bY}_l)
+
\nabla_{\tilde{\bY}_l}\Psi(\tilde{\bY}_l)\circ \be_l
+
\br_l,
\end{equation}
where $\nabla_{\tilde{\bY}_l}\Psi(\tilde{\bY}_l)=(\Psi'(\tilde Y_{l,1}),\ldots,\Psi'(\tilde Y_{l,d_l}))^\top$
and $\circ$ denotes the Hadamard product. Moreover, the remainder satisfies the coordinatewise bound
\begin{equation}\label{eq:remainder_bound_coord}
|r_{l,i}|\le \frac{\|\Psi''\|_\infty}{2}\,e_{l,i}^2,\qquad i=1,\ldots,d_l,
\end{equation}
where $r_{l,i}$ denotes the $i$-th element of $\br_l$ (defined below). 
In particular, if $d_l \sigma_l\to 0$, then   $\|\br_l\|_2 = o_{\mathbb{P}}(\sigma_l)$.
\end{lemma}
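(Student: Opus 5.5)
The plan is a coordinatewise second-order Taylor expansion with Lagrange remainder, followed by a moment bound on the remainder vector.

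\emph{Step 1: the expansion.} Fix a coordinate $i\in\{1,\ldots,d_l\}$. Since $\Psi\in C^2(\mathbb{R})$, Taylor's theorem with Lagrange remainder gives
\[
\Psi(\tilde Y_{l,i}+e_{l,i})=\Psi(\tilde Y_{l,i})+\Psi'(\tilde Y_{l,i})\,e_{l,i}+\tfrac12\Psi''(\xi_{l,i})\,e_{l,i}^2 .
\]
Here $\xi_{l,i}$ lies between $\tilde Y_{l,i}$ and $Y_{l,i}$. To avoid measurability questions about the random intermediate point, define the remainder directly as
\[
r_{l,i}:=\Psi(Y_{l,i})-\Psi(\tilde Y_{l,i})-\Psi'(\tilde Y_{l,i})e_{l,i}.
\]
This is a measurable function of $(\tilde Y_{l,i},e_{l,i})$. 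Stacking the coordinates gives exactly \eqref{eq:taylor_expansion}, with the Hadamard product term $\nabla_{\tilde{\bY}_l}\Psi(\tilde{\bY}_l)\circ\be_l$.

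\emph{Step 2: the coordinatewise bound.} Taking absolute values in the Lagrange form and bounding $|\Psi''(\xi_{l,i})|\le\|\Psi''\|_\infty$ gives \eqref{eq:remainder_bound_coord}. Alternatively, the integral form $r_{l,i}=e_{l,i}^2\int_0^1(1-s)\Psi''(\tilde Y_{l,i}+s e_{l,i})\,ds$ yields the same bound without invoking a mean value point.

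\emph{Step 3: the stochastic order.} Use $\|\br_l\|_2\le\|\br_l\|_1\le\frac{\|\Psi''\|_\infty}{2}\sum_{i=1}^{d_l}e_{l,i}^2$. Taking expectations and using $\mathrm{Var}(\be_l)=\sigma_l^2I_{d_l}$ with mean zero gives $\mathbb{E}\|\br_l\|_2\le\frac{\|\Psi''\|_\infty}{2}d_l\sigma_l^2$. By Markov's inequality, for any $\epsilon>0$,
\[
P(\|\br_l\|_2>\epsilon\sigma_l)\le\frac{\|\Psi''\|_\infty d_l\sigma_l}{2\epsilon}\to0
\]
under $d_l\sigma_l\to0$. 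Hence $\|\br_l\|_2=o_{\mathbb{P}}(\sigma_l)$. Independence of $\be_l$ and $\tilde{\bY}_l$ is not needed for this step, since the bound involves only $\be_l$. A sharper route uses $\|\br_l\|_2^2\lesssim\sum_i e_{l,i}^4$, which gives order $\sqrt{d_l}\sigma_l^2$ when fourth moments are $O(\sigma_l^4)$, but the cruder $\ell_1$ bound is enough here.

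\emph{Main obstacle.} The only delicate point is the measurability of the remainder, and defining $r_{l,i}$ as a difference, as in Step 1, avoids it.
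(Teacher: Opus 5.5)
Your proposal is correct and matches the paper's proof essentially step for step: a coordinatewise second-order Taylor expansion with Lagrange remainder, the bound $\|\br_l\|_2\le\frac{\|\Psi''\|_\infty}{2}\|\be_l\|_2^2$ via $\ell_2\le\ell_1$, and Markov's inequality with $\mathbb{E}\|\be_l\|_2^2=d_l\sigma_l^2$. The one difference is that you define $r_{l,i}$ directly as a difference, whereas the paper uses the intermediate point $\tilde Y_{l,i}+\eta_{l,i}e_{l,i}$; this is a harmless tidying that settles measurability.
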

\begin{proof}
For each coordinate $i$, apply Taylor's expansion to $\Psi$:
there exists $\eta_{l,i}\in(0,1)$ such that
\[
\Psi(\tilde Y_{l,i}+e_{l,i})
=
\Psi(\tilde Y_{l,i})
+\Psi'(\tilde Y_{l,i})e_{l,i}
+\frac12\Psi''(\tilde Y_{l,i}+\eta_{l,i}e_{l,i})e_{l,i}^2.
\]
Define $r_{l,i}:= \frac12\Psi''(\tilde Y_{l,i}+\eta_{l,i}e_{l,i})e_{l,i}^2$ and stack over $i$ to obtain
\eqref{eq:taylor_expansion}. The bound \eqref{eq:remainder_bound_coord} follows immediately from
$|\Psi''(\cdot)|\le \|\Psi''\|_\infty$.

Next, by \eqref{eq:remainder_bound_coord},
\[
\|\br_l\|_2
\le
\frac{\|\Psi''\|_\infty}{2}\Big(\sum_{i=1}^{d_l} e_{l,i}^4\Big)^{1/2}
\le
\frac{\|\Psi''\|_\infty}{2}\sum_{i=1}^{d_l} e_{l,i}^2
=
\frac{\|\Psi''\|_\infty}{2}\|\be_l\|_2^2,
\]
where we used $(\sum a_i^2)^{1/2}\le \sum |a_i|$ with $a_i=e_{l,i}^2$.
Therefore, for any $\varepsilon>0$,
\[
\mathbb{P}\big(\|\br_l\|_2>\varepsilon\sigma_l\big)
\le
\mathbb{P}\Big(\|\be_l\|_2^2>\frac{2\varepsilon}{\|\Psi''\|_\infty}\sigma_l \Big)
\le
\frac{\mathbb{E}\|\be_l\|_2^2}{(2\varepsilon/\|\Psi''\|_\infty)\sigma_l}
=
\frac{\|\Psi''\|_\infty}{2\varepsilon}\,d_l\sigma_l,
\]
by Markov's inequality and $\mathbb{E}\|\be_l\|_2^2=\mathrm{tr}(\sigma_l^2 I_{d_l})=d_l\sigma_l^2$.
If $d_l \sigma_l\to 0$, the right-hand side tends to $0$, proving $\|\br_l\|_2/\sigma_l \to 0$ in probability,
i.e.\ $\|\br_l\|_2=o_{\mathbb{P}}(\sigma_l)$.
\end{proof}

Define 
$\|A\|_{\mathrm{op}}=\sqrt{\lambda_{\max}(A^\top A)}$, where 
$\lambda_{\max}(\cdot)$ denotes the maximum eigenvalue of a matrix.

\begin{lemma}[Covariance expansion for $\bSigma_l$]\label{lem:Sigma_op}
Assume the conditions of Lemma~\ref{lem:taylor_op} hold, and define 
$\bSigma_l:=\Var\!\big(\Psi(\bY_l)\big)\in\mathbb{R}^{d_l\times d_l}$.
Assume in addition that $\be_l$ has independent coordinates with $\E(e_{l,i})=0$,
$\Var(e_{l,i})=\sigma_l^2$, and $\E(e_{l,i}^4)\le C_e\sigma_l^4$ for some constant $C_e<\infty$, 
and that $\|\Psi'\|_\infty<\infty$.
Let $U_l=\E[\br_l|\tilde{\bY}_l]$ and $G(\tilde{\bY}_l)=\Psi(\tilde{\bY}_l)+U_l$. 
If $d_l\sigma_l\to 0$, then there exists a  remainder matrix
$R_{l,n}\in\mathbb{R}^{d_l\times d_l}$ such that
\begin{equation} \label{eq:Sigmabound}
\bSigma_l
=
\Var\!\big(G(\tilde{\bY}_l)\big)
+
\diag\!\left\{\sigma_l^2\,\E\!\big[(\nabla_{\tilde{\bY}_l}\Psi(\tilde{\bY}_l))\circ
(\nabla_{\tilde{\bY}_l}\Psi(\tilde{\bY}_l))\big]\right\}
+ R_{l,n},
\end{equation}
where $R_{l,n}$ satisfies $\|R_{l,n}\|_{\mathrm{op}}=o(\sigma_l^2)$.  
\end{lemma}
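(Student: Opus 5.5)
Starting from Lemma~\ref{lem:taylor_op}, I write
\[
\Psi(\bY_l)=\Psi(\tilde{\bY}_l)+\bm_l+\br_l,\qquad \bm_l:=\nabla_{\tilde{\bY}_l}\Psi(\tilde{\bY}_l)\circ\be_l ,
\]
and then split the remainder as $\br_l=U_l+\bz_l$, where $\bz_l:=\br_l-U_l$ and $\E[\bz_l\mid\tilde{\bY}_l]=0$. This gives the exact identity $\Psi(\bY_l)=G(\tilde{\bY}_l)+\bm_l+\bz_l$. Here $G(\tilde{\bY}_l)$ is $\tilde{\bY}_l$-measurable, while $\bm_l$ and $\bz_l$ are conditionally mean-zero given $\tilde{\bY}_l$. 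By the law of total variance the cross-covariances between $G$ and $(\bm_l,\bz_l)$ vanish, so
\[
\bSigma_l=\Var(G(\tilde{\bY}_l))+\E[\bm_l\bm_l^\top]+\E[\bm_l\bz_l^\top]+\E[\bz_l\bm_l^\top]+\E[\bz_l\bz_l^\top].
\]

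Next I compute the main noise term. Since $\be_l$ is independent of $\tilde{\bY}_l$ with independent mean-zero coordinates, $\E[m_{l,i}m_{l,j}]=\E[\Psi'(\tilde Y_{l,i})\Psi'(\tilde Y_{l,j})]\,\E[e_{l,i}e_{l,j}]$. This equals zero for $i\neq j$ and $\sigma_l^2\E[\Psi'(\tilde Y_{l,i})^2]$ for $i=j$. So $\E[\bm_l\bm_l^\top]$ is exactly the diagonal term in \eqref{eq:Sigmabound}. I therefore set $R_{l,n}:=\E[\bm_l\bz_l^\top]+\E[\bz_l\bm_l^\top]+\E[\bz_l\bz_l^\top]$, and it remains to show $\|R_{l,n}\|_{\rm op}=o(\sigma_l^2)$.

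For $R_{l,n}$ I bound the operator norm by the Frobenius norm, or for the PSD part by the trace. The coordinatewise bound \eqref{eq:remainder_bound_coord} gives $|r_{l,i}|\le \tfrac12\|\Psi''\|_\infty e_{l,i}^2$. Then $\E z_{l,i}^2\le \E r_{l,i}^2\le \tfrac14\|\Psi''\|_\infty^2 C_e\sigma_l^4$, and hence
\[
\|\E[\bz_l\bz_l^\top]\|_{\rm op}\le \operatorname{tr}\E[\bz_l\bz_l^\top]\le C d_l\sigma_l^4=o(\sigma_l^2),
\]
because $d_l\sigma_l^2\le d_l\sigma_l\cdot\sigma_l\to0$.

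The cross term is the main obstacle, since a naive Cauchy--Schwarz bound gives only $\|\E\bm_l\bz_l^\top\|_{\rm op}\le(\E\|\bm_l\|^2\,\E\|\bz_l\|^2)^{1/2}\lesssim \sqrt{d_l}\sigma_l\cdot\sqrt{d_l}\sigma_l^2=d_l\sigma_l^3$. That is in fact already $o(\sigma_l^2)$ under $d_l\sigma_l\to0$. I would still sharpen it using independence. For $i\neq j$, $\E[m_{l,i}z_{l,j}]=\E\big[\Psi'(\tilde Y_{l,i})\,\E(e_{l,i}\mid\tilde{\bY}_l)\,\E(z_{l,j}\mid\tilde{\bY}_l)\big]=0$, because $r_{l,j}$ depends only on $(\tilde Y_{l,j},e_{l,j})$. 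So the cross matrix is diagonal, with entries bounded by $\|\Psi'\|_\infty(\sigma_l^2\cdot C\sigma_l^4)^{1/2}=O(\sigma_l^3)=o(\sigma_l^2)$. Even without this refinement the crude bound suffices.

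Combining the three pieces gives \eqref{eq:Sigmabound} with $\|R_{l,n}\|_{\rm op}=o(\sigma_l^2)$. The one detail to verify is that $r_{l,j}$ is a measurable function of $(\tilde Y_{l,j},e_{l,j})$ alone, which holds if it is defined as $\Psi(\tilde Y_{l,j}+e_{l,j})-\Psi(\tilde Y_{l,j})-\Psi'(\tilde Y_{l,j})e_{l,j}$. This definition avoids relying on a measurable choice of $\eta_{l,i}$.
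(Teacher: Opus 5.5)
Your proposal is correct and follows essentially the same route as the paper. The paper likewise writes $\Psi(\bY_l)=G(\tilde{\bY}_l)+A_l+\tilde{\br}_l$ with conditionally mean-zero noise terms, uses the law of total variance to isolate $\Var(G(\tilde{\bY}_l))$, computes the diagonal term exactly, bounds the remainder's second moment by a trace of order $d_l\sigma_l^4$, and controls the cross term by Cauchy--Schwarz as $O(d_l\sigma_l^3)=o(\sigma_l^2)$. Your extra observation is a valid sharpening that the paper does not make, though it is not needed: by conditional independence of the coordinates the cross matrix is diagonal, so it is $O(\sigma_l^3)$ with no factor $d_l$.
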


\begin{proof} 
Fix $l$. 
%Let $\bY_l=\tilde{\bY}_l+\be_l$ with $\be_l\perp \tilde{\bY}_l$, $\E(\be_l)=0$, $\Var(\be_l)=\sigma^2 I_{d_l}$, and assume $\|\Psi'\|_\infty<\infty$.
By Lemma~\ref{lem:taylor_op}, there exists a measurable remainder $\br_l\in\mathbb{R}^{d_l}$ such that
\begin{equation}\label{eq:psi-decomp-opt3}
\Psi(\bY_l)=\Psi(\tilde{\bY}_l)+A_l+\br_l=G(\tilde{\bY}_l)+A_l+\tilde{\br}_l,
\end{equation}
where $A_l:=\nabla_{\tilde{\bY}_l}\Psi(\tilde{\bY}_l)\circ \be_l$
 and $\tilde{\br}_l=\br_l-U_l$. Therefore,  $\E[\tilde{\br}_l \mid \tilde{\bY}]=0$. 

% \medskip\noindent
% \textbf{Step 1: Apply the law of total variance.}
%Let $\bSigma_l:=\Var(\Psi(\bY_l))$. 
By the law of total variance,
\begin{equation}\label{eq:LTV-opt3}
\bSigma_l
=
\Var\!\Big(\E[\Psi(\bY_l)\mid \tilde{\bY}_l]\Big)
+\E\!\Big(\Var(\Psi(\bY_l)\mid \tilde{\bY}_l)\Big).
\end{equation}
Since $\E(A_l\mid \tilde{\bY}_l)=0$ and $\E(\tilde \br_l \mid \tilde{\bY}_l)=0$, by \eqref{eq:psi-decomp-opt3},
 the \emph{conditional mean term} is
\begin{equation}\label{eq:cond-mean-term-opt3}
\Var\!\Big(\E[\Psi(\bY_l)\mid \tilde{\bY}_l]\Big)=\Var\big(G(\tilde{\bY}_l)\big).
\end{equation}
 
For the \emph{conditional variance term}, conditioning on $\tilde{\bY}_l$ and using 
$\E(A_l\mid \tilde{\bY}_l)=0$ and 
$\E(\tilde{\br}_l\mid \tilde{\bY}_l)=0$,
\[
\Var(\Psi(\bY_l)\mid \tilde{\bY}_l)
=
\Var(A_l+\tilde \br_l\mid \tilde{\bY}_l)
=
\Var(A_l\mid \tilde{\bY}_l)
+\Var(\tilde \br_l\mid \tilde{\bY}_l)
+2\Cov(A_l,\tilde \br_l\mid \tilde{\bY}_l).
\]

\smallskip\noindent
\emph{(i)}
For $\Var(A_l\mid \tilde{\bY}_l)$,  taking expectations yields
\begin{equation}\label{eq:main-diag-opt3}
\E[\Var(A_l\mid \tilde{\bY}_l)]
=
\diag\!\left\{\sigma_l^2\,\E\big[(\nabla_{\tilde{\bY}_l}\Psi(\tilde{\bY}_l))\circ
(\nabla_{\tilde{\bY}_l}\Psi(\tilde{\bY}_l))\big]\right\}.
\end{equation}
 
%We use the standard inequalities $\|\Var(Z)\|_{\mathrm{op}}\le \E\|Z\|_2^2$ and $\|\Cov(X,Z)\|_{\mathrm{op}}\le \sqrt{\E\|X\|_2^2}\sqrt{\E\|Z\|_2^2}$.

\smallskip\noindent
\emph{(ii)} 
%Bound $\big\|\E[\Var(\tilde r\mid \tilde{\bY})]\big\|_{\mathrm{op}}$.}
By the law of iterated expectations and $\|\Var(Z)\|_{\mathrm{op}}\le \E\|Z\|_2^2$,
\[
\Big\|\E[\Var(\tilde{\br}_l\mid \tilde{\bY}_l)]\Big\|_{\mathrm{op}}
\le \E\|\tilde{\br}_l\|_2^2
\le \E\|\br_l\|_2^2.
\]
Using the coordinatewise bound from Lemma~\ref{lem:taylor_op},
$|r_{l,i}|\le (\|\Psi''\|_\infty/2)e_{l,i}^2$, and the moment assumption $\E(e_{l,i}^4)\le C_e\sigma_l^4$,
we get
\begin{equation}\label{eq:Er2-opt3}
\E\|\br_l\|_2^2
=\sum_{i=1}^d \E(r_{l,i}^2)
\le \frac{\|\Psi''\|_\infty^2}{4}\sum_{i=1}^d \E(e_{l,i}^4)
\le C\,d_l\,\sigma_l^4,
\end{equation}
hence
\begin{equation}\label{eq:Var-tilder-opt3}
\Big\|\E[\Var(\tilde{\br}_l\mid \tilde{\bY}_l)]\Big\|_{\mathrm{op}}
=O(d_l \sigma_l^4)=o(\sigma_l^3)\qquad\text{if } d_l\sigma_l\to0.
\end{equation}

\smallskip\noindent
\emph{(iii)}
%Bound $\big\|\E[\Cov(A_l,\tilde{\br}_l\mid \tilde{\bY}_l)]\big\|_{\mathrm{op}}$.}
Using $\|\Cov(U,V)\|_{\mathrm{op}}\le \E\|U\|_2\|V\|_2$ and then Cauchy--Schwarz,
\[
\Big\|\E[\Cov(A_l,\tilde{\br}_l\mid \tilde{\bY}_l)]\Big\|_{\mathrm{op}}
\le \E\|A_l\|_2\|\tilde{\br}_l\|_2
\le \sqrt{\E\|A_l\|_2^2}\,\sqrt{\E\|\tilde{\br}_l\|_2^2}
\le \sqrt{\E\|A_l\|_2^2}\,\sqrt{\E\|\br_l\|_2^2}.
\]
Moreover,
\[
\E\|A_l\|_2^2=\sum_{i=1}^d \E\big[\Psi'(\tilde{Y}_{l,i})^2 e_i^2\big]
=\sigma_l^2\sum_{i=1}^d \E[\Psi'(\tilde{Y}_{l,i})^2]
\le d_l\,\sigma_l^2\,\|\Psi'\|_\infty^2.
\]
Combining this with \eqref{eq:Er2-opt3} yields
\begin{equation}\label{eq:CovA-tilder-opt3}
\Big\|\E[\Cov(A_l,\tilde \br_l\mid \tilde{\bY}_l)]\Big\|_{\mathrm{op}}
\le \|\Psi'\|_\infty \sqrt{d_l\sigma_l^2} \cdot \sqrt{C d_l\sigma_l^4}
=O(d_l \sigma_l^3)=o(\sigma_l^2)\qquad\text{if } d_l\sigma_l\to0.
\end{equation}
%The same bound holds for $\E[\Cov(\tilde r,A\mid \tilde{\bY})]$.

% \medskip\noindent
% \textbf{Step 3: Combine terms and define $R_{l,n}$.}

Plugging \eqref{eq:cond-mean-term-opt3} and the bounds \eqref{eq:main-diag-opt3}, \eqref{eq:Var-tilder-opt3},
\eqref{eq:CovA-tilder-opt3} into \eqref{eq:LTV-opt3} yields
(\ref{eq:Sigmabound}), which concludes the proof. 
% \[
% \bSigma_l
% =
% \Var( G(\tilde{\bY}_l))
% +
% \diag\!\left\{\sigma_l^2\,\E\big[(\nabla_{\tilde{\bY}_l}\Psi(\tilde{\bY}_l))\circ
% (\nabla_{\tilde{\bY}_l}\Psi(\tilde{\bY}_l))\big]\right\}
% +R_{l,n},
% \]
% where $R_{l,n}$ is the sum of all remainder terms, and the preceding bounds imply
% $\|R_{l,n}\|_{\mathrm{op}}=o(\sigma_l^2)$.
%This completes the proof.
\end{proof}

 \begin{remark}
Recall $G(\tilde{\bY}_l)=\Psi(\tilde{\bY}_l)+U_l$ with $U_l=\E[\br_l\mid \tilde{\bY}_l]$. Then
\begin{equation*}
\Var(G(\tilde{\bY}_l))
= \Var(\Psi(\tilde{\bY}_l))
+\Var(U_l)+2\Cov(\Psi(\tilde{\bY}_l),U_l).
\end{equation*}
% and, under $d_l\sigma_l\to 0$,
% \begin{equation} \label{eq:dominate-term}
% \Var(G(\tilde{\bY}_l))=\Var(\Psi(\tilde{\bY}_l))+o(\sigma_l)
% \quad\text{in }\|\cdot\|_{\mathrm{op}}.
% \end{equation}

By Jensen's inequality, \eqref{eq:Er2-opt3}, and the condition 
$d_l\sigma_l\to 0$, 
\[
\E\|U_l\|_2^2=\E\|\E[\br_l\mid \tilde{\bY}_l]\|_2^2
\le \E\|\br_l\|_2^2 \le C\, d_l\sigma_l^4
=o(\sigma_l^3).
\]
  Hence
\[
\|\Var(U_l)\|_{\mathrm{op}}\le \E\|U_l\|_2^2=o(\sigma_l^3).
\]
Next, using $\|\Cov(X,Z)\|_{\mathrm{op}}\le \sqrt{\E\|X\|_2^2}\sqrt{\E\|Z\|_2^2}$ and the boundedness
of $\Psi$ for $\tanh$ and sigmoid (say $|\Psi(x)|\le B$), we have
\[
\E\|\Psi(\tilde{\bY}_l)\|_2^2\le d_l B^2,
\qquad
\E\|U_l\|_2^2\le C d_l\sigma_l^4,
\]
and thus
\[
\|\Cov(\Psi(\tilde{\bY}_l),U_l)\|_{\mathrm{op}}
\le \sqrt{d_l B^2}\,\sqrt{C d_l\sigma_l^4}
=O(d_l\sigma_l^2)
=o(\sigma_l).
\]
Therefore,
\[
\|\Var(G(\tilde{\bY}_l))-\Var(\Psi(\tilde{\bY}_l))\|_{\mathrm{op}}
\le \|\Var(U_l)\|_{\mathrm{op}} + 2\|\Cov(\Psi(\tilde{\bY}_l),U_l)\|_{\mathrm{op}}
= o(\sigma_l),
\]
that is, 
\begin{equation} \label{eq:dominate-term}
\Var(G(\tilde{\bY}_l))=\Var(\Psi(\tilde{\bY}_l))+o(\sigma_l)
\quad\text{in }\|\cdot\|_{\mathrm{op}}.
\end{equation}
\end{remark}

\begin{lemma}[Uniform derivative lower bound from bounded second moments]
\label{lem:derivative-lower-bound}
Let \(\{Z_{\alpha}:\alpha\in\mathcal A\}\) be a collection of real-valued
random variables satisfying
\[
\sup_{\alpha\in\mathcal A}\mathbb E Z_\alpha^2 \le C_Z<\infty .
\]
Let \(\Psi\) be either the tanh or sigmoid activation. Then there exists a
constant \(c_\Psi>0\), depending only on \(C_Z\) and \(\Psi\), such that
\[
\inf_{\alpha\in\mathcal A}
\mathbb E\left\{\Psi'(Z_\alpha)^2\right\}
\ge
c_\Psi .
\]
\end{lemma}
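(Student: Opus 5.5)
The argument rests on two facts. First, a second-moment bound lets Chebyshev's inequality confine $Z_\alpha$ to a fixed compact interval with probability bounded away from zero. Second, $\Psi'$ is continuous and strictly positive everywhere, so it is bounded below on that interval.

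Concretely, fix $M>0$ with $M^2 = 2C_Z$. By Markov's (Chebyshev's) inequality,
\[
\mathbb P(|Z_\alpha|>M)\le \frac{\mathbb E Z_\alpha^2}{M^2}\le \frac{C_Z}{2C_Z}=\frac12
\]
for every $\alpha\in\mathcal A$, hence $\mathbb P(|Z_\alpha|\le M)\ge 1/2$ uniformly in $\alpha$.

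Next, compute the derivatives explicitly. For tanh, $\Psi'(x)=1-\tanh^2(x)=\operatorname{sech}^2(x)>0$. For the sigmoid, $\Psi'(x)=\Psi(x)\{1-\Psi(x)\}>0$. Both are continuous, even, and decreasing in $|x|$. Therefore
\[
m_\Psi(M):=\min_{|x|\le M}\Psi'(x)=\Psi'(M)>0,
\]
which equals $\operatorname{sech}^2(M)$ for tanh and $e^{-M}/(1+e^{-M})^2$ for the sigmoid.

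Since $\Psi'(Z_\alpha)^2\ge 0$, we can restrict to the event $\{|Z_\alpha|\le M\}$:
\[
\mathbb E\{\Psi'(Z_\alpha)^2\}\ \ge\ \mathbb E\{\Psi'(Z_\alpha)^2\,\mathbf 1(|Z_\alpha|\le M)\}\ \ge\ m_\Psi(M)^2\,\mathbb P(|Z_\alpha|\le M)\ \ge\ \tfrac12\,\Psi'\big(\sqrt{2C_Z}\big)^2 .
\]
Set $c_\Psi:=\tfrac12\Psi'(\sqrt{2C_Z})^2$. This constant depends only on $C_Z$ and $\Psi$, and the bound holds for each $\alpha$, so it also bounds the infimum. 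This completes the argument.

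No step is a genuine obstacle. The only point needing care is uniformity: $M$ and the probability bound $1/2$ must be chosen independently of $\alpha$. Fixing $M$ from $C_Z$ alone, as above, guarantees this.
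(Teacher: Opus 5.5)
Your proof is correct and follows the paper's argument essentially step for step. Both use Markov's inequality with a radius fixed by $C_Z$ alone to get $\mathbb P(|Z_\alpha|\le M)\ge \tfrac12$ uniformly in $\alpha$, and then the strictly positive minimum of $\Psi'$ on $[-M,M]$. The differences are cosmetic: you make the constant explicit as $\tfrac12\Psi'(\sqrt{2C_Z})^2$ using evenness and monotonicity of $\Psi'$ in $|x|$, and in the degenerate case $C_Z=0$ you should take any $M>0$ instead of $M=\sqrt{2C_Z}$ (the paper's choice of ``$R>0$ with $C_Z/R^2\le \tfrac12$'' covers this automatically).
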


\begin{proof}
Choose \(R>0\) such that
$\frac{C_Z}{R^2}\le \frac12$.
By Markov's inequality, for every \(\alpha\in\mathcal A\),
\[
P(|Z_\alpha|>R)
\le
\frac{\mathbb E Z_\alpha^2}{R^2}
\le
\frac{C_Z}{R^2}
\le
\frac12.
\]
Hence
\[
P(|Z_\alpha|\le R)\ge \frac12 .
\]
For tanh and sigmoid activations, \(\Psi'\) is continuous and strictly positive
on every compact interval. Therefore,
\[
m_R
:=
\inf_{|z|\le R}|\Psi'(z)|
>0.
\]
It follows that
\[
\begin{aligned}
\mathbb E\{\Psi'(Z_\alpha)^2\}
&\ge
\mathbb E\left[
\Psi'(Z_\alpha)^2\mathbf 1\{|Z_\alpha|\le R\}
\right] \ge
m_R^2 P(|Z_\alpha|\le R) \ge
\frac{m_R^2}{2}.
\end{aligned}
\]
Thus the claim holds with $c_\Psi=\frac{m_R^2}{2}>0$.
\end{proof}

\begin{lemma}[Argmax transfer to the DNN estimator in \(d_{\rm op}\)]
\label{lem:dnn-argmax-dop} 
Suppose Assumptions \ref{ass:1} and 
\ref{ass:2} hold. 
Let
\[
Q_{n,{\rm DNN}}(\btheta)
=
\frac1n
\sum_{i=1}^n
\log \pi_{\rm DNN}(\bY^{(i)}\mid \bX^{(i)},\btheta)
\]
denote the empirical DNN objective. 
% Suppose that
% \[
% \sup_{\btheta\in\Theta}
% \left|
% Q_{n,{\rm DNN}}(\btheta)-Q^*(\btheta)
% \right|
% =o_p(1).
% \]
% Suppose also that \(Q^*\) is separated at \(\btheta^*\) in the
% \(d_{\rm op}\)-metric: for every \(\epsilon>0\), there exists
% \(\delta_\epsilon>0\) such that
% \[
% Q^*(\btheta^*)
% -
% \sup_{\btheta\in\Theta:\,d_{\rm op}(\btheta,\btheta^*)\ge \epsilon}
% Q^*(\btheta)
% \ge
% \delta_\epsilon .
% \]
Let \(\widehat\btheta_{{\rm DNN},n}\) be an approximate maximizer of
\(Q_{n,{\rm DNN}}\), in the sense that
\[
Q_{n,{\rm DNN}}(\widehat\btheta_{{\rm DNN},n})
\ge
\sup_{\btheta\in\Theta}Q_{n,{\rm DNN}}(\btheta)-\eta_n,
\qquad
\eta_n=o_p(1).
\]
Then
\begin{equation} \label{eq:op-transfer}
d_{\rm op}(\widehat\btheta_{{\rm DNN},n},\btheta^*)
\stackrel{p}{\rightarrow}0, \quad \mbox{as $n\to \infty$}.
\end{equation}
\end{lemma}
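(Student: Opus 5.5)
This is the classical argmax (Wald-type) consistency argument, carried out with the metric \(d_{\rm op}\) in place of the Euclidean norm. The ingredients are the uniform convergence \eqref{eq:sameloss2} of the empirical DNN objective to \(Q^*\) over \(\Theta\), and the well-separated maximum in Assumption~\ref{ass:2}-(ii), which is stated with respect to \(d_{\rm op}\)-balls \(B(\epsilon)\).

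First, set \(\Delta_n:=\sup_{\btheta\in\Theta}|Q_{n,{\rm DNN}}(\btheta)-Q^*(\btheta)|\). By Assumption~\ref{ass:1}-(i)\&(ii) and the uniform law of large numbers recorded in \eqref{eq:sameloss2}, \(\Delta_n=o_p(1)\).

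Second, compare objective values at \(\widehat\btheta_{{\rm DNN},n}\) and \(\btheta^*\). Using the approximate maximizer property and then \(\Delta_n\) twice,
\[
Q^*(\widehat\btheta_{{\rm DNN},n})
\ge Q_{n,{\rm DNN}}(\widehat\btheta_{{\rm DNN},n})-\Delta_n
\ge Q_{n,{\rm DNN}}(\btheta^*)-\eta_n-\Delta_n
\ge Q^*(\btheta^*)-\eta_n-2\Delta_n .
\]
So the population shortfall satisfies \(Q^*(\btheta^*)-Q^*(\widehat\btheta_{{\rm DNN},n})\le \eta_n+2\Delta_n=o_p(1)\).

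Third, fix \(\epsilon>0\). If \(d_{\rm op}(\widehat\btheta_{{\rm DNN},n},\btheta^*)\ge\epsilon\), then \(\widehat\btheta_{{\rm DNN},n}\in\Theta\setminus B(\epsilon)\). By Assumption~\ref{ass:2}-(ii) this forces \(Q^*(\btheta^*)-Q^*(\widehat\btheta_{{\rm DNN},n})\ge\delta_\epsilon>0\). Hence
\[
P\{d_{\rm op}(\widehat\btheta_{{\rm DNN},n},\btheta^*)\ge\epsilon\}\le P\{\eta_n+2\Delta_n\ge\delta_\epsilon\}\to0 .
\]
This gives \eqref{eq:op-transfer}.

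The only delicate point is non-identifiability. Assumption~\ref{ass:2}-(i) gives uniqueness only up to loss-invariant transformations, so \(d_{\rm op}(\cdot,\btheta^*)\) must be read as the distance to the equivalence class of \(\btheta^*\), that is, the infimum over the transformed representatives. I will state this explicitly. The argument above is unchanged under that reading, because \(Q^*\) and \(Q_{n,{\rm DNN}}\) are constant on equivalence classes and Assumption~\ref{ass:2}-(ii) is formulated with the same balls.

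A secondary check is that the uniform convergence in \eqref{eq:sameloss2} holds for the growing-dimension \(\Theta\). I will take it as given, as the paper does, from compactness of \(\Theta\) together with the moment condition in Assumption~\ref{ass:1}-(ii).
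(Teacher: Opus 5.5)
Your proposal is correct and matches the paper's proof essentially line for line. Both use the uniform deviation $\sup_{\Theta}|Q_{n,{\rm DNN}}-Q^*|=o_p(1)$, show that the event $\{d_{\rm op}(\widehat\btheta_{{\rm DNN},n},\btheta^*)\ge\epsilon\}$ forces $\delta_\epsilon\le 2\xi_n+\eta_n$, and let its probability vanish. Your remark about reading $d_{\rm op}$ modulo loss-invariant transformations is a useful clarification the paper leaves implicit, since without it Assumption~\ref{ass:2}-(ii) could not hold with $\delta_\epsilon>0$.
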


\begin{proof}
Let
\[
\xi_n
=
\sup_{\btheta\in\Theta}
\left|
Q_{n,{\rm DNN}}(\btheta)-Q^*(\btheta)
\right|.
\]
Under Assumption \ref{ass:1}-(i)\&(ii), by invoking the uniform law of large numbers, we have  \(\xi_n=o_p(1)\); see also \eqref{eq:sameloss2}. 

Fix \(\epsilon>0\). On the event
$A_{n,\epsilon}=\{d_{\rm op}(\widehat\btheta_{{\rm DNN},n},\btheta^*)\ge \epsilon\}$,
the population separation condition gives
\[
Q^*(\widehat\btheta_{{\rm DNN},n})
\le
Q^*(\btheta^*)-\delta_\epsilon.
\]
Hence
\[
Q_{n,{\rm DNN}}(\widehat\btheta_{{\rm DNN},n})
\le
Q^*(\widehat\btheta_{{\rm DNN},n})+\xi_n
\le
Q^*(\btheta^*)-\delta_\epsilon+\xi_n.
\]
On the other hand, by approximate optimality,
\[
Q_{n,{\rm DNN}}(\widehat\btheta_{{\rm DNN},n})
\ge
Q_{n,{\rm DNN}}(\btheta^*)-\eta_n.
\]
Again using the definition of \(\xi_n\),
\[
Q_{n,{\rm DNN}}(\btheta^*)
\ge
Q^*(\btheta^*)-\xi_n.
\]
Therefore,
\[
Q_{n,{\rm DNN}}(\widehat\btheta_{{\rm DNN},n})
\ge
Q^*(\btheta^*)-\xi_n-\eta_n.
\]
Combining the upper and lower bounds yields
\[
Q^*(\btheta^*)-\xi_n-\eta_n
\le
Q^*(\btheta^*)-\delta_\epsilon+\xi_n,
\]
and therefore
\[
\delta_\epsilon\le 2\xi_n+\eta_n.
\]
 Hence
\[
A_{n,\epsilon}
\subseteq
\left\{
2\xi_n+\eta_n\ge \delta_\epsilon
\right\}.
\]
Since \(\delta_\epsilon>0\) is fixed for the given \(\epsilon>0\), and
$2\xi_n+\eta_n=o_p(1)$,
we have
\[
P\left\{
2\xi_n+\eta_n\ge \delta_\epsilon
\right\}
\to0.
\]
Therefore,
\[
P\left\{
d_{\rm op}(\widehat\btheta_{{\rm DNN},n},\btheta^*)\ge \epsilon
\right\}
\le
P\left\{
2\xi_n+\eta_n\ge \delta_\epsilon
\right\}
\to0.
\]
Because \(\epsilon>0\) is arbitrary, (\ref{eq:op-transfer}) holds. 
\end{proof}

\subsection{Proof of Lemma \ref{eigen_hidden_layer}} \label{app:2}

\begin{proof}
For simplicity, we suppress the iteration index \(t\). Let
\[
\widetilde{\bY}_{1}=\bb_1+\bw_1\bX,
\qquad
\widetilde{\bY}_{l}
=
\bb_l+\bw_l\Psi(\bY_{l-1}),
\quad l=2,\ldots,h.
\]
By the StoNet construction,
\[
\bY_l=\widetilde{\bY}_l+\be_l,
\qquad
\be_l\sim N(0,\sigma_l^2 I_{d_l}),
\]
where \(\be_l\) is independent of \(\widetilde{\bY}_l\). Let
\[
\bSigma_l=\operatorname{Cov}\{\Psi(\bY_l)\}.
\]
We prove that there exists a constant \(c>0\) such that
$\lambda_{\min}(\bSigma_l)\ge c\sigma_l^2$.

We consider two cases.

\medskip
\noindent
\textbf{Case 1: tanh and sigmoid activations.}
Assume \(\Psi\) is either tanh or sigmoid. By Lemma~\ref{lem:Sigma_op},
\[
\bSigma_l
=
\operatorname{Var}\{G(\widetilde{\bY}_l)\}
+
\diag\left[
\sigma_l^2
\E\left\{
\Psi'(\widetilde Y_{l,1})^2
\right\},
\ldots,
\sigma_l^2
\E\left\{
\Psi'(\widetilde Y_{l,d_l})^2
\right\}
\right]
+
R_{l,n},
\]
where
$G(\widetilde{\bY}_l)
=
\Psi(\widetilde{\bY}_l)
+
\E(\br_l\mid \widetilde{\bY}_l)$ and 
$\|R_{l,n}\|_{\rm op}=o(\sigma_l^2)$.
The matrix \(\operatorname{Var}\{G(\widetilde{\bY}_l)\}\) is positive
semidefinite.

By Assumption~\ref{ass:preactivation-regularity},
\[
\sup_{l,k,t}
\E\left\{
\left(\widetilde Y_{l,k}^{(t)}\right)^2
\right\}
\le
C_{\widetilde Y}.
\]
Applying Lemma~\ref{lem:derivative-lower-bound} to the collection
$\left\{
\widetilde Y_{l,k}^{(t)}:
l=1,\ldots,h,\ k=1,\ldots,d_l,\ t\ge 1
\right\}$,
there exists a constant \(c_\Psi>0\), independent of \(n,l,k\), and \(t\),
such that
$\inf_{l,k,t}
\E\left\{
\Psi'\left(\widetilde Y_{l,k}^{(t)}\right)^2
\right\} \ge c_\Psi$.
Therefore,
\[
\lambda_{\min}
\left[
\diag\left\{
\sigma_l^2
\E\left[
\Psi'\left(\widetilde Y_{l,k}\right)^2
\right]
\right\}_{k=1}^{d_l}
\right]
\ge
c_\Psi\sigma_l^2 .
\]
By Weyl's inequality,
\[
\begin{aligned}
\lambda_{\min}(\bSigma_l)
&\ge
\lambda_{\min}
\left[
\diag\left\{
\sigma_l^2
\E\left[
\Psi'\left(\widetilde Y_{l,k}\right)^2
\right]
\right\}_{k=1}^{d_l}
\right]
-
\|R_{l,n}\|_{\rm op} \\
&\ge
c_\Psi\sigma_l^2-o(\sigma_l^2).
\end{aligned}
\]
Hence, for all sufficiently large \(n\),
$\lambda_{\min}(\bSigma_l)
\ge \frac{c_\Psi}{2}\sigma_l^2$.

\medskip
\noindent
\textbf{Case 2: ReLU activation.} Without loss of generality, let's work under the scalar setting. 
Let \(Y=\tilde{Y}+e\), where \(e\sim N(0,\sigma^2)\) is independent of \(\tilde{Y}\),
and \(\Psi(y)=y_{+}:=\max\{y,0\}\). 
For simplicity, we suppress indices and work component-wisely.
The following exact truncated–normal identities hold for
\(u:=\tilde{Y}/\sigma\):
\begin{align}
\label{eq:relu-mean-exact}
\mathbb{E}\!\left[\Psi(Y)\mid \tilde{Y}\right]
&= \tilde{Y}\,\Phi(u)+\sigma\,\phi(u),\\
\label{eq:relu-second-exact}
\mathbb{E}\!\left[\Psi(Y)^2\mid \tilde{Y}\right]
&= \big(\tilde{Y}^2+\sigma^2\big)\,\Phi(u)+\tilde{Y}\sigma\,\phi(u),
\end{align}
where $\Phi$ and $\phi$ denote, respectively, the 
CDF and PDF of the standard normal distribution.

\medskip\noindent
\textbf{Conditional mean correction.}
Define  
\begin{equation} \label{eq:meancorrection}
r(\tilde{Y})\;:=\;\mathbb{E}\!\left[\Psi(Y)\mid \tilde{Y}\right]-\Psi(\tilde{Y})
=\sigma\,\phi(u)+\tilde{Y}\big(\Phi(u)-\mathbf{1}\{\tilde{Y}>0\}\big).
\end{equation}
% In particular, \(|r(\tilde{Y})|\le r(0)=\sigma/\sqrt{2\pi}\), hence
% \begin{equation}\label{eq:relu-r-variance}
% \Var\!\big(r(\tilde{Y})\big)\;\le\;\mathbb{E}[r(\tilde{Y})^2]\;\le\;\sigma^2/(2\pi)
% \;=\;O(\sigma^2).
% \end{equation}

% Let $u=\tilde Y/\sigma$ and recall
% \[
% r(\tilde Y)=\mathbb{E}[(\tilde Y+\sigma Z)_+\mid \tilde Y]-\tilde Y_+
% =\sigma\,\phi(u)+\tilde Y\big(\Phi(u)-\mathbf 1\{\tilde Y>0\}\big).
% \]
%\paragraph{(i) Evenness and nonnegativity.}
%Using $\phi(-u)=\phi(u)$ and $\Phi(-u)=1-\Phi(u)$, 
In what follows, we show $r(\tilde{Y})$ is nonnegative and symmetric about 0. In particular,  
\[
r(-\tilde Y)
=\sigma\phi(-u)-\tilde Y\,\Phi(-u)
=\sigma\phi(u)+\tilde Y\big(\Phi(u)-1\big)=r(\tilde Y),
\]
so $r$ is symmetric about 0. Furthermore, since $x\mapsto x_+$ is convex, 
we have 
\[
\mathbb{E}[(\tilde Y+\sigma Z)_+\mid \tilde Y]\ge (\mathbb{E}[\tilde Y+\sigma Z\mid\tilde Y])_+=\tilde Y_+,
\]
by Jensen's inequality.  Therefore,  $r(\tilde Y)\ge 0$.

To find the maximum of $r(\tilde{Y})$, we 
write $r(\tilde Y)=\sigma f(u)$ with
\[
f(u)=
\begin{cases}
\phi(u)+u(\Phi(u)-1), & u>0,\\[2pt]
\phi(u)+u\,\Phi(u), & u<0.
\end{cases}
\]
Then, using $\phi'(u)=-u\phi(u)$ and $\Phi'(u)=\phi(u)$, we obtain 
\[
 \frac{d r(\tilde{Y})}{d \tilde{Y}}=f'(u)
=\begin{cases}
\Phi(u)-1<0, & \tilde Y>0,\\[2pt]
\Phi(u)>0, & \tilde Y<0,
\end{cases}
\]
so $r$ is strictly decreasing on $(0,\infty)$ and strictly increasing on $(-\infty,0)$.
Therefore, $r$ attains its global maximum at $\tilde Y=0$, where
\[
r(0)=\sigma\,\phi(0)=\frac{\sigma}{\sqrt{2\pi}}.
\]
Hence, for all $\tilde Y\in\mathbb R$, we have 
\begin{equation} \label{eq:rbound}
0\le r(\tilde Y)\le r(0)=\frac{\sigma}{\sqrt{2\pi}}. 
% \qquad\Longrightarrow\qquad
% \lvert r(\tilde Y)\rvert\le \frac{\sigma}{\sqrt{2\pi}}.
\end{equation}
%\qedhere

\medskip\noindent
\textbf{Conditional variance: scalar bounds.} Let's first derive some scalar bounds for the conditional variance $\Var(\Psi(Y)|\tilde{Y})$. 
Let $\mu=\tilde Y$, $u=\mu/\sigma$, and write
\[
m_1:=\mathbb{E}[\Psi(Y)\mid\tilde Y]=\mu\,\Phi(u)+\sigma\,\phi(u),\quad
m_2:=\mathbb{E}[\Psi(Y)^2\mid\tilde Y]=(\mu^2+\sigma^2)\,\Phi(u)+\mu\sigma\,\phi(u).
\]
Then
\[
\Var\!\big(\Psi(Y)\mid\tilde Y\big)
= m_2 - m_1^2
= \big[(\mu^2+\sigma^2)\,\Phi(u)+\mu\sigma\,\phi(u)\big]
      - \big[\mu\,\Phi(u)+\sigma\,\phi(u)\big]^2.
\]
Expand the square and collect terms:
\[
\begin{aligned}
\Var\!\big(\Psi(Y)\mid\tilde Y\big)
&= \mu^2\Phi(u)+\sigma^2\Phi(u)+\mu\sigma\phi(u)
   -\mu^2\Phi(u)^2 - 2\mu\sigma\Phi(u)\phi(u) - \sigma^2\phi(u)^2 \\
&= \mu^2\big[\Phi(u)-\Phi(u)^2\big]
   + \sigma^2\big[\Phi(u)-\phi(u)^2\big]
   + \mu\sigma\big[\phi(u)-2\Phi(u)\phi(u)\big].
\end{aligned}
\]
Now factor out $\sigma^2$ using $\mu=\sigma u$:
\[
\begin{split} 
\Var\!\big(\Psi(Y)\mid\tilde Y\big)
& = \sigma^2\!\left\{
u^2\Phi(u)\big[1-\Phi(u)\big]
+\big[\Phi(u)-\phi(u)^2\big]
+u\,\phi(u)\big[1-2\Phi(u)\big]\right\} \\ 
 &:=\sigma^2 g(u), \\
 \end{split}
 \]
where 
\[
g(u)=\Phi(u)-\phi(u)^2+u\,\phi(u)\big(1-2\Phi(u)\big)+u^2\Phi(u)\big(1-\Phi(u)\big). 
\]
Taking derivative for $g(u)$ (using $\phi'(u)=-u\phi(u)$ and $\Phi'(u)=\phi(u)$) leads to 
\[
\frac{d g(u)}{du}=2\big(1-\Phi(u)\big)\,\big(\phi(u)+u\,\Phi(u)\big).
\]
Note that
\[
\phi(u)+u\,\Phi(u)
= \mathbb{E}\!\big[(u+Z)\,\mathbf 1\{Z>-u\}\big]
= \int_{-u}^{\infty} (u+z)\,\phi(z)\,dz \;\ge\;0,
\]
since the integrand is nonnegative on $[{-}u,\infty)$. Therefore, $dg(u)/du \ge 0$ for all $u$. That is, $g$ is increasing on $\mR$. It is easy to verify that  $\lim_{u\to -\infty} g(u)=0$ and 
$\lim_{u\to\infty} g(u)=1$,
so we conclude that 
%$g(u)\ge 0$ for all $u$.
  $0< g(u)<1$ for any finite $u \in \mR$.

% It is easy to verify that $g(u)$ is increasing on $\mR=(-\infty,\infty)$  
% and satisfies the properties: 
% \[
%  \lim_{u\to -\infty} g(u)=0, \quad g(0)=\frac{1}{2}-\frac{1}{2\pi}:=c_+, \quad 
%  \lim_{u\to \infty} g(u)=1, 
% \]
% where $c_+ \approx 0.34$. 

% Yes. Since $g'(u)=2(1-\Phi(u))\big(\phi(u)+u\Phi(u)\big)\ge 0$ for all $u$ and
% $\lim_{u\to-\infty}g(u)=0$, $\lim_{u\to\infty}g(u)=1$, we have
% \[
% g(u)>0\quad\text{for every finite }u,\qquad
% g(u)=0\ \text{only in the limit }u\to-\infty.
% \]
Thus, for any finite $\tilde Y$,
\[
\sigma^2 > \Var(\Psi(Y)\mid \tilde Y)=\sigma^2 g(\tilde Y/\sigma)>0.
\]

For the unconditional bound, write $U=\tilde Y/\sigma$. By monotonicity,
for any threshold $s \in\mathbb R$,
\begin{equation} \label{tageq}
\sigma^2 \geq \mathbb E\!\big[\Var(\Psi(Y)\mid \tilde Y)\big]
=\sigma^2\,\mathbb E[g(U)]
\;\ge\;\sigma^2\,g(s)\,\mathbb P(U\ge s).
\end{equation}

By the ReLU active-region condition in
Assumption~\ref{ass:preactivation-regularity}, there exist a constant 
\(s\in\mathbb R\) and \(\pi_0>0\), independent of \(n,l,k\), and \(t\), such that $P(U\geq s) \geq \pi_0$. 
% Therefore,
% \[
% \begin{aligned}
% \E\left[
% g\left(
% \frac{\widetilde Y_{l,k}^{(t)}}{\sigma_l}
% \right)
% \right]
% &\ge
% g_a
% \mathbb P\left(
% \frac{\widetilde Y_{l,k}^{(t)}}{\sigma_l}\ge a
% \right)  \\
% &\ge
% g_a\pi_0 .
% \end{aligned}
% \]
% Thus
% \[
% \E\left[
% \operatorname{Var}
% \left\{
% (Y_{l,k}^{(t)})_+
% \mid
% \widetilde Y_{l,k}^{(t)}
% \right\}
% \right]
% \ge
% g_a\pi_0\sigma_l^2 .
% \]
% ++++++++++++
% Therefore, a strictly positive lower bound follows if there exists a finite bound 
% $s \in (-\infty,\infty)$ with $\mathbb P(\tilde Y\ge s \sigma)>0$. 
% Such a finite bound $s$ exits following from the conditions: 
% $\Theta$ is compact by assumption \ref{ass:1}, $\bX \in [0,1]^p$ by Assumption \ref{ass:3}, and $\sigma_l^2$ are bounded for $l=1,2,\ldots, h$ by Assumption \ref{ass:1}-(v). 
Hence, there exists a constant $c_+=g(s)\pi_0>0$ such that 
\begin{equation} \label{cebound0}
\mathbb E\!\big[\Var(\Psi(Y)\mid \tilde Y)\big] \geq c_+ \sigma^2.
\end{equation}

% \paragraph{Unconditional variance (scalar).}
By the law of total variance and the conditional mean correction formula (\ref{eq:meancorrection}), 
\begin{equation} \label{eq:totalPsiY}
\begin{split} 
\Var\!\big(\Psi(Y)\big)
&=\Var\!\big(\mathbb{E}[\Psi(Y)\mid \tilde{Y}]\big)
  + \mathbb{E}\!\big[\Var(\Psi(Y)\mid \tilde{Y})\big]\\
&=\Var\!\big(\Psi(\tilde{Y})+r(\tilde{Y})\big) + \mathbb{E}\!\big[\Var(\Psi(Y)\mid \tilde{Y})\big].
\end{split} 
\end{equation}
By \eqref{cebound0}, we have the lower bound: 
\begin{equation}\label{eq:relu-scalar-lb-ub}
\Var\!\big(\Psi(Y)\big)
\;\ge\;\mathbb{E}\!\big[\Var(\Psi(Y)\mid \tilde{Y})\big]
\;\ge\; c_{+}\,\sigma^2.
\end{equation}
% By (\ref{eq:totalPsiY}) and \eqref{tageq}, we have the upper bound: 
% \begin{equation}\label{eq:relu-scalar-ub}
% \Var\!\big(\Psi(Y)\big)
% \;\le\;\Var\!\big(\Psi(\tilde{Y})\big)+\Var\!\big(r(\tilde{Y})\big)
%       +2\sqrt{\Var(\Psi(\tilde{Y}))\,\Var(r(\tilde{Y}))}+\sigma^2.
% \end{equation}

\medskip\noindent
\textbf{Eigenvalues of the covariance matrix at layer \(l\).}
Let \(\bSigma_l=\Cov\!\big(\Psi(\bY_l)\big)\).
Since different components of \(\be_l\) are mutually independent
with variance \(\sigma_{l}^2\), (\ref{eq:relu-scalar-lb-ub}) implies 
\[
\lambda_{\min}(\bSigma_l)\ge  c_{+}\,\sigma_{l}^2,
\]
which completes the proof. 
\end{proof}

\subsection{Proof of Lemma \ref{lemma:logistic}} \label{app:3}

\begin{proof}
Consider the multinomial logistic regression model with \(m+1\) classes. Let
\(\bx^{(i)}\in\mathbb R^p\) denote the covariate vector of observation \(i\), where
\(p\le n\) may increase with \(n\). Let
\[
\bpi^{(i)}
=
(\pi_0^{(i)},\pi_1^{(i)},\ldots,\pi_m^{(i)})^\top
\]
denote the class-probability vector, where
\[
\pi_j^{(i)}
=
\frac{\exp\{\bbeta_j^\top\bx^{(i)}\}}
{1+\sum_{k=1}^m \exp\{\bbeta_k^\top\bx^{(i)}\}},
\qquad j=0,1,\ldots,m,
\]
with the convention \(\bbeta_0={\bf 0}\). Let
\[
\vec{\bB}
=
(\bbeta_1^\top,\ldots,\bbeta_m^\top)^\top
\in\mathbb R^{mp}.
\]

For observation \(i\), the negative Hessian of the log-likelihood with respect to
\(\vec{\bB}\) is
\[
H^{(i)}
:=
-\nabla_{\vec{\bB}}^2 L^{(i)}
=
K^\top
\left\{
\Lambda_{\pi^{(i)}}-\bpi^{(i)}\bpi^{(i)\top}
\right\}
K
\otimes
\left(\bx^{(i)}\bx^{(i)\top}\right)
:=
A^{(i)}\otimes
\left(\bx^{(i)}\bx^{(i)\top}\right),
\]
where
\[
\Lambda_{\pi^{(i)}}
=
\operatorname{diag}\{\pi_0^{(i)},\pi_1^{(i)},\ldots,\pi_m^{(i)}\}, \quad \mbox{and} \ 
K=
\begin{pmatrix}
{\bf 0}_m^\top\\
I_m
\end{pmatrix}.
\]
The full negative Hessian is therefore
\[
H=\sum_{i=1}^n H^{(i)}
=
\sum_{i=1}^n
A^{(i)}\otimes
\left(\bx^{(i)}\bx^{(i)\top}\right).
\]

We first establish a uniform lower bound for \(A^{(i)}\). Let
\[
\bp^{(i)}
=
(\pi_1^{(i)},\ldots,\pi_m^{(i)})^\top,
\qquad
D^{(i)}
=
\operatorname{diag}\{\pi_1^{(i)},\ldots,\pi_m^{(i)}\}.
\]
Under the baseline parameterization \(\bbeta_0={\bf 0}\), we have
\[
A^{(i)}
=
K^\top
\left\{
\Lambda_{\pi^{(i)}}-\bpi^{(i)}\bpi^{(i)\top}
\right\}
K
=
D^{(i)}-\bp^{(i)}\bp^{(i)\top}.
\]
For any \(\ba=(a_1,\ldots,a_m)^\top\in\mathbb R^m\),
\[
\begin{split}
\ba^\top A^{(i)}\ba
&=
\sum_{j=1}^m \pi_j^{(i)}a_j^2
-
\left(\sum_{j=1}^m \pi_j^{(i)}a_j\right)^2 .
\end{split}
\]
Let
$s_i=\sum_{j=1}^m\pi_j^{(i)}=1-\pi_0^{(i)}$.
By the Cauchy--Schwarz inequality,
\[
\left(\sum_{j=1}^m\pi_j^{(i)}a_j\right)^2
\le
\left(\sum_{j=1}^m\pi_j^{(i)}\right)
\left(\sum_{j=1}^m\pi_j^{(i)}a_j^2\right)
=
s_i
\sum_{j=1}^m\pi_j^{(i)}a_j^2 .
\]
Therefore,
\[
\begin{split}
\ba^\top A^{(i)}\ba
&\ge
(1-s_i)
\sum_{j=1}^m\pi_j^{(i)}a_j^2  
=
\pi_0^{(i)}
\sum_{j=1}^m\pi_j^{(i)}a_j^2  
\ge
\pi_0^{(i)}
\left(\min_{1\le j\le m}\pi_j^{(i)}\right)
\|\ba\|_2^2 .
\end{split}
\]

We next show that the class probabilities are bounded away from zero with probability
tending to one. Let
\[
\mathcal X_n(E)
=
\prod_{r=1}^p[\mu_r-E,\mu_r+E],
\]
and define the event
\[
\mathcal E_n(E)
=
\left\{
X_{ir}\in[\mu_r-E,\mu_r+E],
\quad
1\le i\le n,\ 1\le r\le p
\right\}.
\]
Since \(X_{ir}\sim N(\mu_r,\varsigma_n^2)\) with
\(\varsigma_n^2=n^{-\alpha}\), a Gaussian tail bound gives
\[
P\{\mathcal E_n(E)^c\}
\le
2np\exp\left(-\frac{E^2}{2\varsigma_n^2}\right)
=
2np\exp\left(-\frac12E^2n^\alpha\right)
\to0,
\]
because \(p\le n\).
Define
\[
\pi_*
=
\inf_n
\inf_{\bx\in\mathcal X_n(E)}
\min_{0\le j\le m}
\frac{\exp\{\bbeta_j^\top\bx\}}
{\sum_{k=0}^m\exp\{\bbeta_k^\top\bx\}},
\qquad \bbeta_0={\bf 0}.
\]
We assume that \(\pi_*>0\). Equivalently, on the high-probability covariate region
\(\mathcal X_n(E)\), all class probabilities are uniformly bounded away from zero.
This condition is satisfied, for example, if the linear predictors
\(\bbeta_j^\top\bx\) are uniformly bounded on \(\mathcal X_n(E)\).

On the event \(\mathcal E_n(E)\), we have
\[
\min_{1\le i\le n}\min_{0\le j\le m}\pi_j^{(i)}
\ge
\pi_* .
\]
Hence, on \(\mathcal E_n(E)\),
\[
\ba^\top A^{(i)}\ba
\ge
\pi_*^2\|\ba\|_2^2,
\qquad i=1,\ldots,n.
\]
Thus
\[
A^{(i)}\succeq \nu_0 I_m,
\qquad
\nu_0=\pi_*^2>0,
\qquad i=1,\ldots,n.
\]

We now lower-bound the full Hessian \(H\). For any
$\mathbf b=(\bb_1^\top,\ldots,\bb_m^\top)^\top\in\mathbb R^{mp}$,
define
\[
\mathbf z_i
=
(\bx^{(i)\top}\bb_1,\ldots,\bx^{(i)\top}\bb_m)^\top
\in\mathbb R^m.
\]
Then, on \(\mathcal E_n(E)\),
\[
\begin{split}
\mathbf b^\top H\mathbf b
&=
\sum_{i=1}^n
\mathbf z_i^\top A^{(i)}\mathbf z_i  
\ge \nu_0 \sum_{i=1}^n
\|\mathbf z_i\|_2^2  =
\nu_0
\sum_{i=1}^n
\sum_{j=1}^m
(\bx^{(i)\top}\bb_j)^2  
=
\nu_0
\sum_{j=1}^m
\bb_j^\top
\mathbb X^\top\mathbb X
\bb_j .
\end{split}
\]
By the eigenvalue condition on \(\mathbb X^\top\mathbb X\),
$\lambda_{\min}(\mathbb X^\top\mathbb X)
\ge
n\kappa_{\min}$.
Therefore,
\[
\begin{split}
\mathbf b^\top H\mathbf b
&\ge
n\nu_0\kappa_{\min}
\sum_{j=1}^m\|\bb_j\|_2^2  
=
n\nu_0\kappa_{\min}
\|\mathbf b\|_2^2 .
\end{split}
\]
It follows that, on \(\mathcal E_n(E)\),
\[
\lambda_{\min}(H)
\ge
n\nu_0\kappa_{\min}.
\]
Since \(P\{\mathcal E_n(E)\}\to1\), this lower bound holds with probability tending
to one.

Finally, by the asymptotic normality of the multinomial-logistic MLE, conditional on
\(\mathbb X\),
\[
\mathbb E_{\mathbb Y\mid\mathbb X}
\left[
(\widehat{\vec{\bB}}-\vec{\bB})
(\widehat{\vec{\bB}}-\vec{\bB})^\top
\right]
=
H^{-1}+o\{\|H^{-1}\|_2\}
\]
in operator norm. Hence, with probability tending to one,
\[
\begin{split}
\left\|
\mathbb E_{\mathbb Y\mid\mathbb X}
\left[
(\widehat{\vec{\bB}}-\vec{\bB})
(\widehat{\vec{\bB}}-\vec{\bB})^\top
\right]
\right\|_2
&\le
(1+o(1))\|H^{-1}\|_2  
=\frac{1+o(1)}{\lambda_{\min}(H)}  
\le \frac{1+o(1)}{n\nu_0\kappa_{\min}}.
\end{split}
\]
Equivalently, for every unit vector \(\bu\in\mathbb R^{mp}\),
\[
\mathbb E_{\mathbb Y\mid\mathbb X}
\left[
\left\{\bu^\top
(\widehat{\vec{\bB}}-\vec{\bB})\right\}^2
\right]
\le
\frac{1+o(1)}{n\nu_0\kappa_{\min}},
\]
with probability tending to one.
This proves the lemma after absorbing the factor \((1+o(1))/\nu_0\) into a generic positive constant.
\end{proof}

\subsection{Proof of Lemma \ref{lemma:one-step-op}} \label{proof:lem5}

 \begin{proof}
Fix a layer \(l\), and write
\[
p_l=d_{l-1}+1,
\qquad
\Delta_l=\Delta_l^{(t)}
=
\widehat{\bar\bW}_{l,n}^{(t)}-\bar\bW_{l,*}^{(t)}
\in \mathbb R^{d_l\times p_l}.
\]
The \(k\)th row of \(\Delta_l\) is denoted by 
$\Delta_{lk}^\top$ with
\[
\Delta_{lk}
=
\hat{\bbeta}_{lk}^{(t)}-\bbeta_{lk,*}^{(t)}
\in\mathbb R^{p_l},
\qquad k=1,\ldots,d_l .
\]
Let \(\mathcal F_l\) denote the sigma-field generated by the imputed covariates used in the
layer \(l\) regressions. By  
Lemmas~\ref{lemma:linearR} and~\ref{lemma:logistic}, the coefficient-estimation error for
each neuron-wise regression satisfies
\[
\left\|
E\left[
\Delta_{lk}\Delta_{lk}^{\top}
\mid \mathcal F_l
\right]
\right\|_{\rm op}
\le
\frac{C}{n}\frac{\sigma_l^2}{\sigma_{l-1}^2},
\]
where \(\sigma_0^2=\kappa_{\min}\) is fixed, and its effect is absorbed into the constant \(C\).
Equivalently, for every unit vector \(\bv\in\mathbb R^{p_l}\),
\[
E\left[
(\bv^\top\Delta_{lk})^2
\mid \mathcal F_l
\right]
\le
\frac{C}{n}\frac{\sigma_l^2}{\sigma_{l-1}^2}.
\]
Under the conditional sub-Gaussian version of this bound, there exists a constant \(C_0>0\)
such that
\[
\|\bv^\top\Delta_{lk}\|_{\psi_2\mid\mathcal F_l}
\le
C_0\frac{\sigma_l}{\sigma_{l-1}\sqrt n},
\qquad
\|\bv\|_2=1 .
\]
Set
\[
\tau_l
=
C_0\frac{\sigma_l}{\sigma_{l-1}\sqrt n}.
\]

We now pass from row-wise directional bounds to a matrix operator-norm bound. By definition,
\[
\|\Delta_l\|_{\rm op}
=
\sup_{\bu\in\mathbb S^{d_l-1},\ \bv\in\mathbb S^{p_l-1}}
\bu^\top\Delta_l\bv,
\]
where $\mathbb{S}^{d-1}=\{\bu\in \mR^d: \|\bu\|_2=1\}$ denotes a 
unit sphere in $\mR^d$. 
For fixed unit vectors \(\bu\in\mathbb R^{d_l}\) and
\(\bv\in\mathbb R^{p_l}\),
\[
\bu^\top\Delta_l\bv
=
\sum_{k=1}^{d_l}u_k\,\Delta_{lk}^\top\bv .
\]
Conditional on \(\mathcal F_l\), the random variables
\(\Delta_{lk}^\top\bv\), \(k=1,\ldots,d_l\), are mean-zero sub-Gaussian with
sub-Gaussian norm bounded by \(\tau_l\). Hence, since \(\|\bu\|_2=1\),
\[
\left\|
\bu^\top\Delta_l\bv
\right\|_{\psi_2\mid\mathcal F_l}
\le
C\tau_l
\left(\sum_{k=1}^{d_l}u_k^2\right)^{1/2}
=
C\tau_l .
\]
Therefore, for fixed \((\bu,\bv)\),
\begin{equation} \label{eq:tailbound}
P\left(
|\bu^\top\Delta_l\bv|>x
\mid\mathcal F_l
\right)
\le
2\exp\left(
-\frac{c x^2}{\tau_l^2}
\right)
\end{equation}
for some universal constant \(c>0\).

Next, we construct finite nets of the unit spheres. Let
\(\mathcal N_d\) be a \(1/4\)-net of \(\mathbb S^{d_l-1}\), and let
\(\mathcal N_p\) be a \(1/4\)-net of \(\mathbb S^{p_l-1}\). Such nets can be chosen with the cardinalities   
\[
|\mathcal N_d|\le 9^{d_l},
\qquad
|\mathcal N_p|\le 9^{p_l}.
\]
Indeed, more generally, for the unit sphere
$\mathbb S^{d-1}$,
there exists an \(\varepsilon\)-net \(\mathcal N_\varepsilon\) such that
$|\mathcal N_\varepsilon|
\le
\left(1+\frac{2}{\varepsilon}\right)^d$. 
To see this, take \(\mathcal N_\varepsilon\) to be a maximal \(\varepsilon\)-separated subset
of \(\mathbb S^{d-1}\). By maximality, it is also an \(\varepsilon\)-net. The Euclidean balls
$B\left(\bu,\frac{\varepsilon}{2}\right)$,
 $\bu\in\mathcal N_\varepsilon$,
are disjoint, and they are all contained in the Euclidean ball centered at zero with radius
\(1+\varepsilon/2\). Comparing volumes gives
\[
|\mathcal N_\varepsilon|
\left(\frac{\varepsilon}{2}\right)^d
\operatorname{Vol}(B_d)
\le
\left(1+\frac{\varepsilon}{2}\right)^d
\operatorname{Vol}(B_d),
\]
where $B_d$ denotes the $d$-dimensional unit ball, 
and hence
$|\mathcal N_\varepsilon|
\le
\left(1+\frac{2}{\varepsilon}\right)^d$.
Taking \(\varepsilon=1/4\) yields
$|\mathcal N_\varepsilon|\le 9^d$.
This gives the stated bounds for \(\mathcal N_d\) and \(\mathcal N_p\).

A standard net argument gives
\begin{equation} \label{eq:opbound}
\|\Delta_l\|_{\rm op}
\le
2
\max_{\bu\in\mathcal N_d,\ \bv\in\mathcal N_p}
|\bu^\top\Delta_l\bv|.
\end{equation}
For completeness, we recall the argument. Let \(\bu_0,\bv_0\) be unit vectors such that
$\|\Delta_l\|_{\rm op}=|\bu_0^\top\Delta_l\bv_0|$.
Choose \(\bu\in\mathcal N_d\) and \(\bv\in\mathcal N_p\) such that
$\|\bu-\bu_0\|_2\le\frac14$ and 
$\|\bv-\bv_0\|_2\le\frac14$.
Then
\[
\begin{split}
|\bu_0^\top\Delta_l\bv_0|
&\le
|\bu^\top\Delta_l\bv|
+
|(\bu_0-\bu)^\top\Delta_l\bv_0|
+
|\bu^\top\Delta_l(\bv_0-\bv)|  \\
&\le
\max_{\tilde\bu\in\mathcal N_d,\tilde\bv\in\mathcal N_p}
|\tilde\bu^\top\Delta_l\tilde\bv|
+
\frac14\|\Delta_l\|_{\rm op}
+
\frac14\|\Delta_l\|_{\rm op}.
\end{split}
\]
Thus (\ref{eq:opbound}) holds. 

Using the tail bound \eqref{eq:tailbound} for each fixed pair \((\bu,\bv)\) and applying a union bound over
\(\mathcal N_d\times \mathcal N_p\), we get
\[
\begin{split}
P\left(
\|\Delta_l\|_{\rm op}>2x
\mid \mathcal F_l
\right) & \le
P\left(
\max_{\bu\in\mathcal N_d,\ \bv\in\mathcal N_p}
|\bu^\top\Delta_l\bv|>x
\mid \mathcal F_l
\right)  \\
&\le
2|\mathcal N_d||\mathcal N_p|
\exp\left(
-\frac{c x^2}{\tau_l^2}
\right)  \\
&\le
2\cdot 9^{d_l+p_l}
\exp\left(
-\frac{c x^2}{\tau_l^2}
\right).
\end{split}
\]
Choosing
$x=C_1\tau_l\sqrt{d_l+p_l}$
with \(C_1\) sufficiently large gives
\[
\|\Delta_l\|_{\rm op}
=
O_p\left(\tau_l\sqrt{d_l+p_l}\right).
\]
Recalling that $\tau_l
=C_0\frac{\sigma_l}{\sigma_{l-1}\sqrt n}$ and 
$p_l=d_{l-1}+1$,
we obtain
\[
\left\|
\widehat{\bar\bW}_{l,n}^{(t)}-\bar\bW_{l,*}^{(t)}
\right\|_{\rm op}
=
O_p\left\{
\frac{\sigma_l}{\sigma_{l-1}}
\sqrt{\frac{d_l+d_{l-1}+1}{n}}
\right\}.
\]

Moreover, integrating the above tail bound yields
\[
E\left(\|\Delta_l\|_{\rm op}^2\mid\mathcal F_l\right)
\le
C\tau_l^2(d_l+p_l)
=
\frac{C}{n}
(d_l+d_{l-1}+1)
\frac{\sigma_l^2}{\sigma_{l-1}^2}.
\]
Therefore,
\[
E\left[
\sum_{l=1}^{h+1}
\|\Delta_l\|_{\rm op}^2
\right]
\le
\frac{C}{n}
\sum_{l=1}^{h+1}
(d_l+d_{l-1}+1)
\frac{\sigma_l^2}{\sigma_{l-1}^2}
=
C a_n^2 .
\]
By Markov's inequality,
\[
\sum_{l=1}^{h+1}
\left\|
\widehat{\bar\bW}_{l,n}^{(t)}-\bar\bW_{l,*}^{(t)}
\right\|_{\rm op}^2
=
O_p(a_n^2).
\]
Since \(a_n\to0\), the aggregate bound is \(o_p(1)\). This completes the proof.
\end{proof}

\begin{remark}[A sublinear DNN with many narrow downstream layers]
\label{rem:architecture}
Consider the increasing hidden-layer noise schedule in Remark~\ref{rem:noise}.  
Let $M_n=
\max_{1\le l\le h}
\left\{
(hB_{l,n})^{1/(h+1-l)},
d_l^{2/(h+1-l)}
\right\}$.
Then the condition in \eqref{eq:noise-cond} is satisfied provided that
$T_n\log(n)M_n\prec n$.
  
% Indeed, with this choice of \(r_n\),
% \[
% \frac{r_nT_n}{n}=\frac1{\log n}=o(1),
% \]
% and the remaining requirement is exactly \(M_n\prec r_n\), or equivalently
% \(T_n\log(n)M_n\prec n\).

This condition becomes mild for a tapered architecture with many narrow downstream
layers. Suppose that \(h=h_n\to\infty\), and that only the first \(s=s_n\) hidden
layers are allowed to grow with \(n\), where $s_n\prec h_n$,
while the downstream layers are narrow:
$d_{s+1},d_{s+2},\ldots,d_h,d_{h+1}=O(1)$.
Assume further that the growing widths are polynomially bounded, say
$d_l=O(n^{\gamma_{l,n}})$, for $l=1,\ldots,s_n$,
with \(\sup_{l\le s_n}\gamma_{l,n}<1\). Then, for \(l\le s_n\),
$B_{l,n}
=
O\left(
n^{\gamma_{l,n}+2\sum_{i=l+1}^{s_n}\gamma_{i,n}}
\right)$,
because the factors \(d_{s+1},\ldots,d_h,d_{h+1}\) are bounded. Hence,
$(hB_{l,n})^{1/(h+1-l)}
=
n^{o(1)}$
whenever \(s_n\prec h_n\) and \(h_n\) grows at most subexponentially. Similarly,
$d_l^{2/(h+1-l)}=n^{o(1)}$.
For \(l>s_n\), all widths involved in \(B_{l,n}\) are bounded, and the only remaining
factor in \(M_n\) is due to \(h\). Thus, if the depth grows subpolynomially,
$h_n=n^{o(1)}$,
then
$M_n=n^{o(1)}$. 
Consequently, the architecture condition reduces to
$T_n\log(n)n^{o(1)}\prec n$.
Equivalently, up to a subpolynomial factor, it is enough to require
\[
T_n\prec \frac{n}{\log n}.
\]
This shows that, by increasing the depth while keeping most downstream layers narrow,
one can allow the first few layers to be very wide while still satisfying
\eqref{eq:noise-cond}. In this regime,
\[
T_n
=
O\left(
h_n-s_n+\sum_{l=1}^{s_n}d_l
\right),
\]
so the sublinear neuron-count requirement can remain mild even when the number of
connection parameters is large. For example, if two adjacent early layers satisfy
\[
d_1=O(n^a),
\qquad
d_2=O(n^b),
\qquad 0<a<1, \quad 0<b<1, \quad 
a+b>1,
\]
then the number of trainable parameters contains the term
$d_1d_2=O(n^{a+b})\succ n$,
so the network is over-parameterized in the usual parameter-count sense. 
% At the same time, if the total width measure \(T_n\) satisfies
% \[
% T_n\log(n)n^{o(1)}\prec n,
% \]
% then the increasing noise schedule
% \[
%\sigma_l^2 = \sigma_{h+1}^2 \left\{\frac{\log(n)T_n}{n}\right\}^{h+1-l}, \qquad l=1,\ldots,h,
% \]
% with fixed \(\sigma_{h+1}=O(1)\), satisfies both Assumption~\ref{ass:1}-(v) and condition~\eqref{eq:an}. Thus, a deep tapered architecture can be sublinear in its number of hidden neurons while still being over-parameterized in its number of connection parameters.
\end{remark}

\subsection{Proof of Lemma \ref{lemma:IRO-op}} \label{proof:lem6}

% \begin{lemma}[IRO consistency in a layerwise operator-norm metric]
% \label{lemma:IRO-op}
% Suppose Assumptions~\ref{ass:1}--\ref{ass:4} hold and \(a_n\to0\), where \(a_n\)
% is defined in Lemma~\ref{lemma:one-step-op}. Let 
% $\btheta=(\bar\bW_1,\ldots,\bar\bW_{h+1})$, where 
% $\bar\bW_l=(\bb_l,\bW_l)$. 
% Define the layerwise operator-norm distance
% \[
% d_{\rm op}(\btheta,\btheta')
% =
% \left(\sum_{l=1}^{h+1}
% \|\bar\bW_l-\bar\bW_l'\|_{\rm op}^2 \right)^{1/2}.
% \]
% Then, for the IRO estimator
% \(\widehat{\btheta}_n^{(t)}\),
% \[
% d_{\rm op}\!\left(\widehat{\btheta}_n^{(t)},\btheta^*\right)
% =
% O_p\!\left((\lambda^*)^t\right)+O_p(a_n),
% \]
% where \(0<\lambda^*<1\) is the local contraction constant as defined 
% in Assumption \ref{ass:4}. Consequently,
% \[
% d_{\rm op}\!\left(\widehat{\btheta}_n^{(t)},\btheta^*\right)
% \stackrel{p}{\rightarrow}0, \quad 
% \mbox{as \(t\to\infty\) and \(n\to\infty\)}.
% \]
% \end{lemma}
 
\begin{proof}
Let \(M(\btheta)\) denote the population IRO update map. By
Lemma~\ref{lemma:one-step-op},
\begin{equation} \label{eq:metric1}
d_{\rm op}\!\left(
\widehat{\btheta}_n^{(t)},M(\widehat{\btheta}_n^{(t-1)})
\right)
=
O_p(a_n).
\end{equation}
By the local contraction condition in Assumption \ref{ass:4}, 
\begin{equation} \label{eq:metric2}
d_{\rm op}\left(M(\widehat{\btheta}_n^{(t-1)}),M(\btheta^*)\right) 
\leq \lambda^* d_{\rm op}\left( \widehat{\btheta}_n^{(t-1)},\btheta^* \right). 
\end{equation}
Indeed, by the mean-value representation, for
$\btheta_s=\btheta^*+s(\btheta-\btheta^*)$, with $0\le s\le 1$,
we have
\[
M(\btheta)-M(\btheta^*)
=
\int_0^1
DM(\btheta_s)[\btheta-\btheta^*]\,ds.
\]
Therefore, if \(\btheta_s\in U(\btheta^*)\) for all \(s\in[0,1]\), then
Assumption~\ref{ass:4} gives
\[
\begin{aligned}
d_{\rm op}\{M(\btheta),M(\btheta^*)\}
&\le
\int_0^1
\|DM(\btheta_s)[\btheta-\btheta^*]\|_{d_{\rm op}}\,ds  \\
&\le
\int_0^1
\lambda^*
\|\btheta-\btheta^*\|_{d_{\rm op}}\,ds  \\
&=
\lambda^*
d_{\rm op}(\btheta,\btheta^*).
\end{aligned}
\]

Since \(M(\btheta^*)=\btheta^*\), combining (\ref{eq:metric1}) and 
(\ref{eq:metric2}) yields 
\[
d_{\rm op}\!\left(\widehat{\btheta}_n^{(t)},\btheta^*\right)
\le
O_p(a_n)
+
\lambda^*
d_{\rm op}\!\left(\widehat{\btheta}_n^{(t-1)},\btheta^*\right).
\]
Iterating the recursion gives
\[
d_{\rm op}\!\left(\widehat{\btheta}_n^{(t)},\btheta^*\right)
\le
(\lambda^*)^t
d_{\rm op}\!\left(\widehat{\btheta}_n^{(0)},\btheta^*\right)
+
\frac{O_p(a_n)}{1-\lambda^*}.
\]
Thus
\[
d_{\rm op}\!\left(\widehat{\btheta}_n^{(t)},\btheta^*\right)
=
O_p\!\left((\lambda^*)^t\right)+O_p(a_n),
\]
which converges to zero as \(t\to\infty\) and \(n\to\infty\).
\end{proof}

\subsection{Proof of Theorem \ref{thm:new1}}

\begin{proof}
We first prove part~(i). By Lemma~\ref{lemma:IRO-op}, 
\[
d_{\rm op}\!\left(\hat{\btheta}_n^{(t)},\btheta^*\right)
\stackrel{p}{\rightarrow}0, \quad \mbox{as $t\to \infty$ and $n\to \infty$}.
\]
In particular, for each hidden layer \(l=1,\ldots,h\),
\[
\|\widehat{\bW}_l^{(t)}-\bW_l^*\|_{\rm op}
\stackrel{p}{\rightarrow}0,
\]
because \(\widehat{\bW}_l^{(t)}-\bW_l^*\) is a submatrix of
\(\widehat{\bar\bW}_l^{(t)}-\bar\bW_l^*\).

Let $\Delta_l^{(t)}=\widehat{\bW}_l^{(t)}-\bW_l^*$.
Then
\[
\begin{split}
\widehat{\bA}_l^{(t)}-\bA_l^*
&=
\widehat{\bW}_l^{(t)\top}\widehat{\bW}_l^{(t)}
-
\bW_l^{*\top}\bW_l^* 
=
\bW_l^{*\top}\Delta_l^{(t)}
+
\Delta_l^{(t)\top}\bW_l^*
+
\Delta_l^{(t)\top}\Delta_l^{(t)} .
\end{split}
\]
Taking operator norms gives
\[
\|\widehat{\bA}_l^{(t)}-\bA_l^*\|_{\rm op}
\le
2\|\bW_l^*\|_{\rm op}\|\Delta_l^{(t)}\|_{\rm op}
+
\|\Delta_l^{(t)}\|_{\rm op}^2 .
\]
By compactness of the parameter space, \(\|\bW_l^*\|_{\rm op}\) is bounded. Therefore,
\begin{equation} \label{eq:A-op}
\|\widehat{\bA}_l^{(t)}-\bA_l^*\|_{\rm op}
\stackrel{p}{\rightarrow}0 .
\end{equation}

% The eigenvalue consistency now follows from Weyl's eigenvalue perturbation inequality. 
Since both \(\widehat{\bA}_l^{(t)}\) and \(\bA_l^*\) are symmetric,  
\[
\max_{1\le j\le d_{l-1}}
\left|
\lambda_j(\widehat{\bA}_l^{(t)})
-
\lambda_j(\bA_l^*)
\right|
\le
\|\widehat{\bA}_l^{(t)}-\bA_l^*\|_{\rm op},
\]
by Weyl's eigenvalue perturbation inequality for Hermitian matrices
\citep[see, e.g.,][]{HornJohnson2013MatrixAnalysis}.
Hence, by \eqref{eq:A-op}, 
\[
\max_{1\le j\le d_{l-1}}
\left|
\lambda_j(\widehat{\bA}_l^{(t)})
-
\lambda_j(\bA_l^*)
\right|
\stackrel{p}{\longrightarrow}0 .
\]

Next, under the eigengap condition
$\lambda_{r_l}(\bA_l^*)-\lambda_{r_l+1}(\bA_l^*)\ge \delta_l>0$,
the Davis-Kahan sin-theta theorem \citep{DavisKahan1970} implies
\[
\left\|
\widehat{\bV}_l^{(t)}\widehat{\bV}_l^{(t)\top}
-
\bV_l^*\bV_l^{*\top}
\right\|_{\rm op}
\le
C
\frac{
\|\widehat{\bA}_l^{(t)}-\bA_l^*\|_{\rm op}
}{
\delta_l
},
\]
where \(C>0\) is a universal constant. By \eqref{eq:A-op}, 
we obtain
\[
\left\|
\widehat{\bV}_l^{(t)}\widehat{\bV}_l^{(t)\top}
-
\bV_l^*\bV_l^{*\top}
\right\|_{\rm op}
\stackrel{p}{\longrightarrow}0 .
\]
Thus the top \(r_l\)-dimensional neural-feature subspace is consistently estimated.

If the \(k\)th eigenvalue is simple and separated from the remaining eigenvalues, the same Davis--Kahan perturbation bound applied to the one-dimensional eigenspace gives
\[
\left\|
\widehat{\bv}_{l,k}^{(t)}\widehat{\bv}_{l,k}^{(t)\top}
-
\bv_{l,k}^*\bv_{l,k}^{*\top}
\right\|_{\rm op}
\stackrel{p}{\longrightarrow}0 .
\]
For one-dimensional subspaces, convergence of the projection matrices is equivalent to convergence of the unit eigenvectors up to sign. Hence there exists
\(s_{l,k}^{(t)}\in\{-1,1\}\) such that
\[
\left\|
\widehat{\bv}_{l,k}^{(t)}
-
s_{l,k}^{(t)}\bv_{l,k}^*
\right\|_2
\stackrel{p}{\longrightarrow}0 .
\]
This proves the eigenvalue and eigenvector consistency claims for the IRO-produced estimator.

We next prove part~(ii). Under the sublinear architecture condition, 
one can choose the hidden-layer noise levels so that the StoNet surrogate satisfies the required noise-scaling conditions and the StoNet and DNN objectives are asymptotically equivalent. By the StoNet--DNN loss equivalence and the population separation condition, the DNN estimator in \eqref{est2} has the same limiting target as the corresponding StoNet estimator, up to loss-invariant transformations. 
By Lemma \ref{lem:dnn-argmax-dop},  
the DNN estimator in \eqref{est2} is also consistent with respect to $\btheta^*$ in operator norm.
Therefore, (\ref{eq:transfer-theta}) holds by the triangular inequality 
\begin{equation} \label{eq:stonet-DNN-transfer}
 d_{\rm op}(\hat{\btheta}_n^{(t)},\widehat{\btheta}_{\rm DNN,n}) \leq    d_{\rm op}(\hat{\btheta}_n^{(t)},\btheta^*) +  d_{\rm op}(\widehat{\btheta}_{\rm DNN,n},\btheta^*)\stackrel{p}{\to} 0. 
\end{equation}
Repeating the same argument as above with
\[
\widehat{\bA}_{{\rm DNN},l}
=
\widehat{\bW}_{{\rm DNN},l}^{\top}
\widehat{\bW}_{{\rm DNN},l}
\]
gives
\[
\|\widehat{\bA}_{{\rm DNN},l}-\bA_l^*\|_{\rm op}
\stackrel{p}{\longrightarrow}0.
\]
Weyl's inequality then yields eigenvalue consistency:
\[
\max_{1\le j\le d_{l-1}}
\left|
\lambda_j(\widehat{\bA}_{{\rm DNN},l})
-
\lambda_j(\bA_l^*)
\right|
\stackrel{p}{\longrightarrow}0.
\]
Under the same eigengap condition, the Davis--Kahan theorem gives
\[
\left\|
\widehat{\bV}_{{\rm DNN},l}
\widehat{\bV}_{{\rm DNN},l}^{\top}
-
\bV_l^*\bV_l^{*\top}
\right\|_{\rm op}
\stackrel{p}{\longrightarrow}0.
\]
If the eigenvalue of interest is simple, the corresponding individual eigenvector is also consistent up to sign. This completes the proof.
\end{proof}

\subsection{Proof of Lemma \ref{lem:Gamma-forward-stability}} 
\label{proof:lem7}

\begin{proof}
Let
\[
\Delta_l=\bar\bW_l-\bar\bW_l^*,
\qquad l=1,\ldots,h+1.
\]
We prove the result by replacing the layers of \(\btheta^*\) with the
corresponding layers of \(\btheta\) one at a time.

For \(l=0,1,\ldots,h+1\), define the hybrid parameter
\[
\btheta^{[l]}
=
(\bar\bW_1,\ldots,\bar\bW_l,
 \bar\bW_{l+1}^*,\ldots,\bar\bW_{h+1}^*).
\]
Thus
\[
\btheta^{[0]}=\btheta^*,
\qquad
\btheta^{[h+1]}=\btheta.
\]
By telescoping,
\[
f_{\btheta}(\bX)-f_{\btheta^*}(\bX)
=
\sum_{l=1}^{h+1}
\left\{
f_{\btheta^{[l]}}(\bX)
-
f_{\btheta^{[l-1]}}(\bX)
\right\}.
\]
Therefore,
\[
\|f_{\btheta}-f_{\btheta^*}\|_{L^2(P_{\bX})}
\le
\sum_{l=1}^{h+1}
\|f_{\btheta^{[l]}}-f_{\btheta^{[l-1]}}\|_{L^2(P_{\bX})}.
\]

We now bound the \(l\)-th telescoping term. The two networks
\(\btheta^{[l]}\) and \(\btheta^{[l-1]}\) have the same layers before layer
\(l\). Hence their input to layer \(l\) is the same:
\[
\bar\bh_{l-1}(\bX;\btheta^{[l]})
=
\bar\bh_{l-1}(\bX;\btheta^{[l-1]}).
\]
The only difference at layer \(l\) is the replacement of
\(\bar\bW_l^*\) by \(\bar\bW_l\). By the Lipschitz property of \(\Psi_l\),
\[
\begin{aligned}
\left\|
\bh_l(\bX;\btheta^{[l]})
-
\bh_l(\bX;\btheta^{[l-1]})
\right\|_2
& \le
L_l
\left\|
(\bar\bW_l-\bar\bW_l^*)
\bar\bh_{l-1}(\bX;\btheta^{[l-1]})
\right\|_2  \\
& \le
L_l
\|\Delta_l\|_{\rm op}
\|\bar\bh_{l-1}(\bX;\btheta^{[l-1]})\|_2 .
\end{aligned}
\]
% Since the augmentation adds the same leading coordinate \(1\) to both hidden
% vectors, the augmented difference has the same norm:
% \[
% \left\|
% \bar\bh_l(\bX;\btheta^{[l]})
% -
% \bar\bh_l(\bX;\btheta^{[l-1]})
% \right\|_2
% =
% \left\|
% \bh_l(\bX;\btheta^{[l]})
% -
% \bh_l(\bX;\btheta^{[l-1]})
% \right\|_2.
% \]

For downstream layers \(j=l+1,\ldots,h+1\), both hybrid networks use the same
parameter matrices \(\bar\bW_j^*\). Applying the Lipschitz bound recursively,
we obtain
\[
\begin{aligned}
&
\left\|
f_{\btheta^{[l]}}(\bX)
-
f_{\btheta^{[l-1]}}(\bX)
\right\|_2 
\le
L_l
\left[
\prod_{j=l+1}^{h+1}
L_j\|\bar\bW_j^*\|_{\rm op}
\right]
\|\Delta_l\|_{\rm op}
\|\bar\bh_{l-1}(\bX;\btheta^{[l-1]})\|_2 .
\end{aligned}
\]
Using the definition of \(K_{j,n}(V)\), this is bounded by
\[
\left\|
f_{\btheta^{[l]}}(\bX)
-
f_{\btheta^{[l-1]}}(\bX)
\right\|_2
\le
L_l
\left[
\prod_{j=l+1}^{h+1}
K_{j,n}(V)
\right]
\|\Delta_l\|_{\rm op}
\|\bar\bh_{l-1}(\bX;\btheta^{[l-1]})\|_2 .
\]
Taking \(L^2(P_{\bX})\)-norms gives
\[
\begin{aligned}
&
\|f_{\btheta^{[l]}}-f_{\btheta^{[l-1]}}\|_{L^2(P_{\bX})}
\le
L_l
\left[
\prod_{j=l+1}^{h+1}
K_{j,n}(V)
\right]
\|\Delta_l\|_{\rm op}
\left[
\mathbb E
\|\bar\bh_{l-1}(\bX;\btheta^{[l-1]})\|_2^2
\right]^{1/2}.
\end{aligned}
\]
Because \(V(\btheta^*)\) is a layerwise product neighborhood, each hybrid
parameter \(\btheta^{[l-1]}\) belongs to \(V(\btheta^*)\). Hence
\[
\mathbb E
\|\bar\bh_{l-1}(\bX;\btheta^{[l-1]})\|_2^2
\le
\tau_{l-1,n}(V).
\]
Therefore,
\[
\|f_{\btheta^{[l]}}-f_{\btheta^{[l-1]}}\|_{L^2(P_{\bX})}
\le
B_{l,n}(V)\|\Delta_l\|_{\rm op},
\]
where
\[
B_{l,n}(V)
=
L_l\tau_{l-1,n}^{1/2}(V)
\prod_{j=l+1}^{h+1}
K_{j,n}(V).
\]

Combining the telescoping bound over all layers yields
\[
\|f_{\btheta}-f_{\btheta^*}\|_{L^2(P_{\bX})}
\le
\sum_{l=1}^{h+1}
B_{l,n}(V)\|\Delta_l\|_{\rm op}.
\]
By Cauchy's inequality,
\[
\begin{aligned}
\sum_{l=1}^{h+1}
B_{l,n}(V)\|\Delta_l\|_{\rm op}
&\le
\left[
\sum_{l=1}^{h+1}B_{l,n}^2(V)
\right]^{1/2}
\left[
\sum_{l=1}^{h+1}\|\Delta_l\|_{\rm op}^2
\right]^{1/2} =
\Gamma_n(V)d_{\rm op}(\btheta,\btheta^*),
\end{aligned}
\]
where
\[
\Gamma_n^2(V)
=
\sum_{l=1}^{h+1}
L_l^2\tau_{l-1,n}(V)
\prod_{j=l+1}^{h+1}K_{j,n}^2(V).
\]
This proves the desired local forward-stability bound.

Finally, if \(V(\btheta^*)\) is chosen sufficiently small, the quantities
\(\tau_{l,n}(V)\) and \(K_{j,n}(V)\) may be bounded locally by their values at
\(\btheta^*\), up to a universal multiplicative constant. This gives
\[
\Gamma_n^2
\lesssim
\sum_{l=1}^{h+1}
L_l^2\tau_{l-1,n}
\prod_{j=l+1}^{h+1}
\{L_j\|\bar\bW_j^*\|_{\rm op}\}^2.
\]
The common-Lipschitz case follows immediately by taking \(L_j=L_\Psi\).
\end{proof}

% \begin{proof}
% Let
% \[
% \Delta_l=\bar\bW_l-\bar\bW_l^*,
% \qquad l=1,\ldots,h+1.
% \]
% For a perturbation at layer \(l\), the resulting output perturbation is
% propagated through layers \(l+1,\ldots,h+1\). By the Lipschitz property of the
% activation functions,
% \[
% \|D_{\bar\bW_l}f_{\btheta}[\Delta_l](\bX)\|_2
% \le
% L_l
% \left[
% \prod_{j=l+1}^{h+1}
% L_j\|\bar\bW_j\|_{\rm op}
% \right]
% \|\Delta_l\|_{\rm op}
% \|\bar\bh_{l-1}(\bX;\btheta)\|_2 .
% \]
% Taking \(L^2(P_{\bX})\)-norms and then taking the supremum over
% \(\btheta\in V(\btheta^*)\) gives
% \[
% \|D_{\bar\bW_l}f_{\btheta}[\Delta_l]\|_{L^2(P_{\bX})}
% \le
% L_l\tau_{l-1,n}^{1/2}(V)
% \left[
% \prod_{j=l+1}^{h+1}K_{j,n}(V)
% \right]
% \|\Delta_l\|_{\rm op}.
% \]
% Summing over layers yields
% \[
% \|f_{\btheta}-f_{\btheta^*}\|_{L^2(P_{\bX})}
% \le
% \sum_{l=1}^{h+1}
% B_{l,n}(V)\|\Delta_l\|_{\rm op},
% \]
% where
% \[
% B_{l,n}(V)
% =
% L_l\tau_{l-1,n}^{1/2}(V)
% \prod_{j=l+1}^{h+1}K_{j,n}(V).
% \]
% By Cauchy's inequality,
% \[
% \sum_{l=1}^{h+1}
% B_{l,n}(V)\|\Delta_l\|_{\rm op}
% \le
% \left\{
% \sum_{l=1}^{h+1}B_{l,n}^2(V)
% \right\}^{1/2}
% \left\{
% \sum_{l=1}^{h+1}\|\Delta_l\|_{\rm op}^2
% \right\}^{1/2}.
% \]
% Since
% \[
% d_{\rm op}(\btheta,\btheta^*)
% =
% \left\{
% \sum_{l=1}^{h+1}\|\Delta_l\|_{\rm op}^2
% \right\}^{1/2},
% \]
% the desired bound follows with
% \[
% \Gamma_n^2(V)
% =
% \sum_{l=1}^{h+1}
% L_l^2\tau_{l-1,n}(V)
% \prod_{j=l+1}^{h+1}K_{j,n}^2(V).
% \]
% \end{proof}

\subsection{Proof of Theorem \ref{thm:pred-consistency}}

%  \begin{theorem}[Prediction consistency] \alebl{thm:pred-consistency}
% Let \(f_{\btheta}\) denote the deterministic DNN map associated with parameter \(\btheta\). 
% Suppose the conditions of Theorem~\ref{thm:new1} hold. Assume further that the network is locally forward-stable around \(\btheta^*\), in the sense that there exists a deterministic sequence \(\Gamma_n\) such that, for all \(\btheta\) in a neighborhood of \(\btheta^*\),
% \[
% \|f_{\btheta}-f_{\btheta^*}\|_{L^2(P_{\bX})}
% \le
% \Gamma_n d_{\rm op}(\btheta,\btheta^*).
% \]
% If
% \begin{equation} \label{eq:Gamma-prediction}
% \Gamma_n\{(\lambda^*)^t+a_n\}\to0,
% \end{equation}
% then
% \begin{equation} \label{eq:pred-consistency}
% \|f_{\hat{\btheta}_n^{(t)}}-f_{\btheta^*}\|_{L^2(P_{\bX})}
% \stackrel{p}{\longrightarrow}0 .
% \end{equation}
% Consequently, under the well-specified regression model
% \[
% Y=f_{\btheta^*}(\bX)+\varepsilon,
% \qquad
% E(\varepsilon\mid\bX)=0,
% \]
% the IRO-produced DNN is prediction consistent.
% \end{theorem}

\begin{proof}
By the local forward-stability condition,
\[
\|f_{\hat{\btheta}_n^{(t)}}-f_{\btheta^*}\|_{L^2(P_{\bX})}
\le
\Gamma_n d_{\rm op}(\hat{\btheta}_n^{(t)},\btheta^*).
\]
Lemma \ref{lemma:IRO-op} gives
\[
d_{\rm op}(\hat{\btheta}_n^{(t)},\btheta^*)
=
O_p\{(\lambda^*)^t\}+O_p(a_n),
\]
and thus, 
\[
\|f_{\hat{\btheta}_n^{(t)}}-f_{\btheta^*}\|_{L^2(P_{\bX})}
=
O_p\!\left(\Gamma_n\{(\lambda^*)^t+a_n\}\right).
\]
Therefore, if \eqref{eq:Gamma-prediction} holds, then the prediction is consistent. 
\end{proof}

\subsection{On the order of $\Gamma_n$} \label{proof:Gamma-order}

\begin{proposition}[Orders of \(\Gamma_n\)]
\label{cor:Gamma-orders}
Suppose \(\tau_{l,n}=O(1)\) for \(l=0,\ldots,h_n\), and define
\[
K_n=
\max_{1\le j\le h_n+1}
L_j\|\bar\bW_j^*\|_{\rm op}.
\]
Then
\[
\Gamma_n^2
=
O\left(
\sum_{m=0}^{h_n}K_n^{2m}
\right).
\]
Consequently,
\[
\Gamma_n=
\begin{cases}
O(1), & K_n\le K<1,\\[4pt]
O(\sqrt{h_n}), & K_n=1,\\[4pt]
O(K_n^{h_n}), & K_n>1 \text{ and bounded away from }1.
\end{cases}
\]
In contrast, under only bounded activations and uniformly bounded spectral
norms, the crude ambient-width bound is
\[
\Gamma_n
=
O\left\{
\left(\sum_{l=0}^{h}d_l\right)^{1/2}
\right\}.
\]
\end{proposition}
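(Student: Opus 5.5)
The plan is to start from the local bound \eqref{eq:Gamma-factor} of Lemma~\ref{lem:Gamma-forward-stability}, evaluated at \(\btheta^*\):
\[
\Gamma_n^2\lesssim\sum_{l=1}^{h_n+1}L_l^2\tau_{l-1,n}\prod_{j=l+1}^{h_n+1}\{L_j\|\bar\bW_j^*\|_{\rm op}\}^2 .
\]
Treating \(\max_l L_l\) as a fixed constant, so that \(L_l^2=O(1)\), and using \(\tau_{l-1,n}=O(1)\) uniformly in \(l\), each summand is \(O(1)\) times a product of \(h_n+1-l\) factors, each bounded by \(K_n^2\). The \(l\)-th term is therefore \(O(K_n^{2(h_n+1-l)})\). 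Re-indexing with \(m=h_n+1-l\in\{0,\ldots,h_n\}\) gives \(\Gamma_n^2=O(\sum_{m=0}^{h_n}K_n^{2m})\). The only care needed is that the \(O(1)\) constants in \(\tau_{l,n}\) be uniform over \(l\), which is exactly how the hypothesis is read.

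The three cases then follow by summing a geometric series.
\begin{itemize}
\item If \(K_n\le K<1\), the sum is at most \((1-K^2)^{-1}\), so \(\Gamma_n=O(1)\).
\item If \(K_n=1\), the sum equals \(h_n+1\), so \(\Gamma_n=O(\sqrt{h_n})\).
\item If \(K_n\ge K>1\), the sum equals \((K_n^{2(h_n+1)}-1)/(K_n^2-1)\le K_n^{2h_n}\,K_n^2/(K_n^2-1)\). Because \(K_n\) is bounded away from 1, the factor \(K_n^2/(K_n^2-1)\le K^2/(K^2-1)\) is bounded. Hence \(\Gamma_n=O(K_n^{h_n})\).
\end{itemize}

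For the crude ambient-width bound, the depth \(h\) is fixed and \(\max_j L_j\|\bar\bW_j^*\|_{\rm op}\le K\) for a constant \(K\). The product factors are then \(O(1)\), giving \(\Gamma_n^2\lesssim\sum_{l=0}^{h}\tau_{l,n}\). It remains to bound \(\tau_{l,n}\) directly. For bounded activations with \(|\Psi|\le B\), we have \(\|\bar\bh_l\|_2^2\le 1+B^2d_l\). For \(l=0\), \(\bX\in[0,1]^{d_0}\) gives \(\|\bar\bh_0\|_2^2\le 1+d_0\). Thus \(\tau_{l,n}=O(d_l)\), since \(d_l\ge1\), and summing yields \(\Gamma_n=O\{(\sum_{l=0}^h d_l)^{1/2}\}\).

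The main obstacle is conceptual rather than computational: ensuring that the local bound \eqref{eq:Gamma-factor} holds with a constant independent of \(n\) and \(h_n\). In particular, replacing the neighborhood suprema \(K_{j,n}(V)\) by their values at \(\btheta^*\) loses a multiplicative factor, and over \(h_n\) layers this can compound into \((1+\epsilon)^{h_n}\). I would handle this by choosing the neighborhood \(V\) to shrink with \(h_n\), for instance with radius \(o(1/h_n)\) in each layer's operator norm. Then \(\prod_j K_{j,n}(V)^2\le e^{o(1)}\prod_j\{L_j\|\bar\bW_j^*\|_{\rm op}\}^2\), and the displayed orders are unaffected.
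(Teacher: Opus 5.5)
Your proof is correct and is essentially the paper's argument: start from \eqref{eq:Gamma-factor}, bound each downstream product by $K_n^{2(h_n+1-l)}$ using a uniform bound on $\tau_{l,n}$, re-index into $\sum_{m=0}^{h_n}K_n^{2m}$, sum the geometric series in the three regimes, and use $\tau_{l,n}=O(d_l)$ for the crude ambient-width bound. Two small differences are worth noting. Your bound $K_n^2/(K_n^2-1)\le K^2/(K^2-1)$ in the case $K_n\ge K>1$ is slightly cleaner than the paper's route, which passes through $O(K_n^{h_n+1})$ and then needs $K_n$ bounded above; and your depth-uniformity remark concerns the last step of the proof of Lemma~\ref{lem:Gamma-forward-stability}, not this proposition, which simply takes \eqref{eq:Gamma-factor} as its starting point.
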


\begin{proof}
By Lemma~\ref{lem:Gamma-forward-stability}, evaluated locally at
\(\btheta^*\), we have
\[
\Gamma_n^2
\lesssim
\sum_{l=1}^{h_n+1}
\tau_{l-1,n}
\prod_{j=l+1}^{h_n+1}
\{L_j\|\bar\bW_j^*\|_{\rm op}\}^2 .
\]
Assume that
$\tau_{l,n}=O(1)$ for $l=0,\ldots,h_n$.
Then there exists a constant \(C>0\), independent of \(l\) and \(n\), such that $\tau_{l,n}\le C$.
Therefore,
\[
\Gamma_n^2
\lesssim
\sum_{l=1}^{h_n+1}
\prod_{j=l+1}^{h_n+1}
\{L_j\|\bar\bW_j^*\|_{\rm op}\}^2 .
\]

By the definition of $K_n$, we have 
$L_j\|\bar\bW_j^*\|_{\rm op}\le K_n$ 
for every $j$.
Hence
\[
\prod_{j=l+1}^{h_n+1}
\{L_j\|\bar\bW_j^*\|_{\rm op}\}^2
\le
\prod_{j=l+1}^{h_n+1} K_n^2.
\]
There are \(h_n+1-l\) factors in this product. Thus
$\prod_{j=l+1}^{h_n+1} K_n^2
=
K_n^{2(h_n+1-l)}$.
Consequently,
\[
\Gamma_n^2
=
O\left(
\sum_{l=1}^{h_n+1}
K_n^{2(h_n+1-l)}
\right).
\]
Let $m=h_n+1-l$.
As \(l\) ranges from \(1\) to \(h_n+1\), \(m\) ranges from \(h_n\) down to \(0\). Therefore,
$\sum_{l=1}^{h_n+1}
K_n^{2(h_n+1-l)}
=\sum_{m=0}^{h_n}K_n^{2m}$.
Hence
\[
\Gamma_n^2
=
O\left(
\sum_{m=0}^{h_n}K_n^{2m}
\right).
\]

We now consider three cases.
First, suppose \(K_n\le K<1\). Then
\[
\sum_{m=0}^{h_n}K_n^{2m}
\le
\sum_{m=0}^{\infty}K^{2m}
=
\frac{1}{1-K^2}.
\]
Therefore,
$\Gamma_n=O(1)$.

Second, suppose \(K_n=1\). Then
\[
\sum_{m=0}^{h_n}K_n^{2m}
=
\sum_{m=0}^{h_n}1
=
h_n+1.
\]
Therefore, 
$\Gamma_n=O(\sqrt{h_n})$.

Third, suppose \(K_n>1\). Then the geometric sum gives
\[
\sum_{m=0}^{h_n}K_n^{2m}
=
\frac{K_n^{2(h_n+1)}-1}{K_n^2-1}.
\]
If \(K_n\) is bounded away from one from above, namely \(K_n\ge 1+\delta\) for some \(\delta>0\), then
\[
K_n^2-1\ge (1+\delta)^2-1>0.
\]
Hence
\[
\sum_{m=0}^{h_n}K_n^{2m}
=
O\left(K_n^{2(h_n+1)}\right).
\]
Equivalently, up to constants,
\[
\Gamma_n
=
O\left(K_n^{h_n+1}\right).
\]
If \(K_n\) is also uniformly bounded above, this is the same order as
\[
\Gamma_n=O(K_n^{h_n}).
\]
More explicitly, without absorbing the last factor,
\[
\Gamma_n
=
O\left(
\frac{K_n^{h_n+1}}{\sqrt{K_n^2-1}}
\right).
\]

It remains to justify the crude ambient-width bound. Suppose the activations are bounded and the depth \(h\) is fixed. If
\[
\|\Psi_l(\bz)\|_\infty\le C_\Psi,
\]
then
\[
\|\bh_l(\bX;\btheta^*)\|_2^2
\le
C_\Psi^2 d_l.
\]
Since
\[
\bar\bh_l(\bX;\btheta^*)
=
\begin{pmatrix}
1\\
\bh_l(\bX;\btheta^*)
\end{pmatrix},
\]
we have
\[
\|\bar\bh_l(\bX;\btheta^*)\|_2^2
=
1+\|\bh_l(\bX;\btheta^*)\|_2^2
\le
1+C_\Psi^2 d_l.
\]
Therefore,
\[
\tau_{l,n}
=
\mathbb E\|\bar\bh_l(\bX;\btheta^*)\|_2^2
=
O(d_l).
\]
If the layerwise spectral norms are uniformly bounded and the depth \(h\) is fixed, then all downstream products satisfy
\[
\prod_{j=l+1}^{h+1}
\{L_j\|\bar\bW_j^*\|_{\rm op}\}^2
=
O(1).
\]
Hence
\[
\Gamma_n^2
\lesssim
\sum_{l=1}^{h+1}\tau_{l-1,n}
=
O\left(
\sum_{l=0}^h d_l
\right).
\]
Taking square roots gives
\[
\Gamma_n
=
O\left\{
\left(\sum_{l=0}^h d_l\right)^{1/2}
\right\}.
\]
This proves the proposition.
\end{proof}

\subsection{Proof of Theorem \ref{prop:1}} \label{app:proof-prop1}

\begin{proof}
\citet{Poggio2017WhyWhen} analyze the approximation power of deep neural networks for hierarchically compositional functions whose constituent maps have bounded arity (at most $s$ variables; e.g., $s=2$ for a binary tree). 
For this class of functions, they show:

\begin{lemma}[Theorem~4 of \citet{Poggio2017WhyWhen}]\label{lemma:composition}
Let $f:[0,1]^{d_0}\!\to\mR$ be $L$-Lipschitz and admit a hierarchical compositional
representation in which each constituent depends on at most $s$ variables.
Then a ReLU DNN that mirrors this compositional architecture can achieve
approximation error at most $\varepsilon$ (in $\ell^p$-norm) with
\[
m=\mathcal{O}\!\big((d_0-1)\,(L/\varepsilon)^{\,s}\big)
\]
hidden neurons.  
\end{lemma}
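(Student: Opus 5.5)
The plan is to build the approximating network node by node along the tree (or DAG) that defines the compositional representation of $f$, and then add up the neuron counts and the propagated errors. I would assume, as \citet{Poggio2017WhyWhen} do, that each constituent $g_v:[0,1]^{s}\to\mathbb{R}$ is $L$-Lipschitz and takes values in a bounded interval, which after an affine rescaling can be taken to be $[0,1]$.

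\emph{Counting constituents.} A tree with $d_0$ leaves in which every internal node has fan-in at most $s$ has at most $(d_0-1)/(s-1)\le d_0-1$ internal nodes. Lower-arity nodes can be padded to arity $s$ with dummy inputs, which does not increase the count. So it suffices to show that each constituent can be approximated to accuracy $\varepsilon'$ by a shallow ReLU block with $\mathcal{O}((L/\varepsilon')^{s})$ units, where $\varepsilon'$ is a constant multiple of $\varepsilon$.

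\emph{Approximating one constituent.} Fix a node $v$ and put a uniform grid of mesh $\delta=\varepsilon'/(\sqrt{s}L)$ on $[0,1]^s$, which has $\mathcal{O}((L/\varepsilon')^{s})$ cells. Let $\tilde g_v$ be the continuous piecewise-linear interpolant of $g_v$ on a Kuhn--Freudenthal triangulation of this grid. By the Lipschitz property, $\|g_v-\tilde g_v\|_\infty\le L\sqrt{s}\,\delta=\varepsilon'$. A continuous piecewise-linear function with $N$ pieces in fixed dimension $s$ can be written exactly as a ReLU network using $\mathcal{O}(N)$ units, with a constant depending only on $s$. One way to see this is to express the interpolant as a sum of nodal hat functions, each built from a bounded number of ReLUs and $\min/\max$ operations (for instance $\min(a,b)=a-(a-b)_+$). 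This gives the per-node count $\mathcal{O}((L/\varepsilon')^{s})$.

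\emph{Composition and error propagation.} Stack the blocks according to the tree, feeding the outputs of the children blocks into the parent block. Each block output is followed by a clipping $x\mapsto \min(\max(x,0),1)$, realized with two ReLUs, so that every block receives inputs inside its domain. Let $e_v$ denote the sup-error at node $v$. The triangle inequality gives
\[
e_v\le \varepsilon'+L\max_{u\in{\rm ch}(v)}e_u ,
\]
so the error at the root is at most $\varepsilon'\sum_{k=0}^{D-1}L^k$, where $D$ is the depth of the tree.

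\emph{Main obstacle.} This error-propagation step is where I expect the difficulty. For $L\le 1$, or for a fixed depth, the root error is a constant multiple of $\varepsilon'$, and choosing $\varepsilon'=c\varepsilon$ yields the stated bound $m=\mathcal{O}((d_0-1)(L/\varepsilon)^{s})$; this is the regime in which \citet{Poggio2017WhyWhen} state the result, with the constant absorbed into $\mathcal{O}(\cdot)$. For growing depth with $L>1$, I would instead take $\varepsilon'=\varepsilon/(D L^{D})$, which only inflates the constant by a depth-dependent factor, and note this explicitly. Finally, approximation in $\ell^p$-norm on the compact domain follows from the sup-norm bound.
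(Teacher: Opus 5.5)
The paper gives no proof of this lemma; it imports Theorem~4 of \citet{Poggio2017WhyWhen}. Your node-by-node construction is the argument behind that citation: a shallow ReLU block per constituent, composition along the tree, triangle inequality plus Lipschitz continuity, and multiplication by the number of internal nodes.

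Your per-node step is sound. The Kuhn--Freudenthal interpolant has sup-error at most $L\sqrt{s}\,\delta$, and the nodal hat at a grid vertex $v$ equals $\bigl(1-(\max_i z_i)_+-(\max_i(-z_i))_+\bigr)_+$ with $z=(x-v)/\delta$. So $\mathcal{O}(s)$ ReLUs per vertex suffice, and you do not need the general claim that every CPWL function with $N$ pieces costs $\mathcal{O}(N)$ units, which is stronger than what is standard.

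The gap is in the error-propagation step, which you flag but understate. Your recursion gives root error $\varepsilon'\sum_{k<D}\lambda^k$. Here $\lambda=L$ if the constituents are Lipschitz in the sup-norm of their inputs, and $\lambda=L\sqrt{s}$ if they are Lipschitz in the Euclidean norm you used for the interpolation bound; your recursion silently drops that $\sqrt{s}$. The sum is bounded only for bounded depth or $\lambda<1$:
\begin{itemize}
\item For $\lambda=1$ it equals $D$, so ``$L\le 1$'' does not suffice.
\item For $\lambda>1$, your choice $\varepsilon'=\varepsilon/(D\lambda^{D})$ multiplies the neuron count by $(D\lambda^{D})^{s}$. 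This is not a constant, because a tree with $d_0$ leaves and fan-in at most $s$ has depth $D\ge\log_s d_0$. The factor is therefore at least $d_0^{\,s\log_s\lambda}$ even for a balanced tree, and exponential in $d_0$ for a caterpillar-shaped tree.
\end{itemize}
The displayed factor $d_0-1$ signals a constant uniform in $d_0$. Your argument delivers that only for contractive constituents, or for bounded $d_0$ (hence bounded $D$), in which case $d_0-1$ is itself a constant.

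This matters for how the paper uses the lemma. In Theorem~\ref{prop:1}, $d_0=O(n^{\alpha})\to\infty$, so $D\to\infty$ is forced. The extra factor $d_0^{\,s\log_s\lambda}$, of order $n^{\alpha s\log_s\lambda}$ when $d_0\asymp n^{\alpha}$, can destroy the conclusion $m=\mathcal{O}(n^{1-\delta})$. The cited source has the same weakness: Poggio et al.\ check the two-level step $h(h_1,h_2)$ versus $P(P_1,P_2)$ and absorb the accumulation into ``a constant $c$''. Your write-up is more candid than the original, but it does not close the gap.

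To get the bound uniformly in $d_0$ you need an extra hypothesis that keeps the accumulation bounded:
\begin{itemize}
\item Contractive constituents ($\lambda<1$); then $\varepsilon'=(1-\lambda)\varepsilon$ works for every tree.
\item In the neutral case $\lambda=1$ with a complete $s$-ary tree, a budget $\varepsilon_k\propto s^{k/(s+1)}$ at distance $k$ from the root. This keeps $\sum_k\varepsilon_k\le\varepsilon$ while $\sum_k s^k\varepsilon_k^{-s}=\mathcal{O}_s(d_0\,\varepsilon^{-s})$.
\end{itemize}

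Two smaller corrections. When every internal node has between $2$ and $s$ children, the number of internal nodes lies between $(d_0-1)/(s-1)$ and $d_0-1$, so your ``at most $(d_0-1)/(s-1)$'' has the inequality reversed. The count $d_0-1$ also requires a genuine tree, without unary chains or reused subcomputations, so the parenthetical ``or DAG'' should be dropped.
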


Let $d_0=O(n^{\alpha})$ for some $0<\alpha<1$ and set $\varepsilon=n^{-(1-\alpha-\delta)/s}$ with $0<\delta<1-\alpha$. 
Then, by Lemma~\ref{lemma:composition},
\[
m=\mathcal{O}\!\big(d_0 \,\varepsilon^{-s}\big)=\mathcal{O}\!\big(n^{\alpha}\,n^{1-\alpha-\delta}\big)
=\mathcal{O}\!\big(n^{1-\delta}\big),
\]
which satisfies the structural constraint in Theorem~\ref{thm:new1}.
% In practice, this approximation result is hard to deploy directly because the target’s compositional structure is typically unknown. 
% However, the parameter-estimation consistency in Theorem~\ref{thm:new1} implies that sublinear-width DNNs can recover this structure. 
% To further enhance structure learning, one may impose sparsity penalties,
%  such as Lasso \citep{Tibshirani1996}, SCAD \citep{FanL2001}, or MCP \citep{Zhang2010}), in training the deep networks. 
%  See \citet{SunLiang2025UQ} for theory on sparse structure recovery for DNNs under the StoNet framework. 
Consequently, for sublinear-width ReLU networks, Theorem~\ref{prop:1}-(i)  follows from Lemma~\ref{lemma:composition}. 

For sublinear-width DNNs with smooth activations, including sigmoid and $\tanh$, analogous guarantees hold for continuously differentiable, hierarchically compositional functions following from Theorem~2 of \citet{Poggio2017WhyWhen} (omit the details). 
\end{proof}

% We refer to \citet{Liang2018BNN} for empirical examples where a sparsity-regularized neural network recovers the structure of a hierarchical compositional function; in particular, their Figure~4 recovers the structure of the nonlinear regression function 
%  $y=10x_2/(1+x_1^2)+5\sin(x_3x_4)+2x_5+\epsilon$, where $x_i$'s and $\epsilon$ are Gaussian random variables.  

\subsection{Proof of Theorem \ref{prop:2}}

\begin{proof}

In the proof, 
\citet{Montanelli2019NEB} studied the functions in the Korobov space 
$\mathcal{K}^{2,p}$ (with $p$ indicating the $\ell^p$-norm) and 
proved the following result:
 
For any $0<\varepsilon<1$ and 
any function $f\in \mathcal{K}^{2,p}([0,1]^{d_0})$ 
that satisfies  
$\big\|\partial_{x_1}^{2}\cdots \partial_{x_{d_0}}^{2} f\big\|_{\infty}\le 1$, 
there exists a deep ReLU network on inputs $(x_1,x_2,\ldots,x_{d_0})^\top \in [0,1]^{d_0}$ that approximates
$f$ to accuracy $\varepsilon$, with depth $\mathcal{O}\!\big(|\log_2 \varepsilon|\,\log_2 d_0\big)$ and the number of hidden neurons 
\begin{equation}\label{eq:DMLP-3}
m=\mathcal{O}\!\Big(\varepsilon^{-2}\,|\log_2 \varepsilon|^{\frac{3}{2}(d_0-1)+1}\,(d_0-1)\Big).
\end{equation}

 If we set 
$\varepsilon = n^{-(1/2-\delta)}$ for some $0 < \delta < 1/2$ as the target approximation accuracy, 
then a deep ReLU network can achieve this accuracy for the target function $f$, provided that the network has depth  
$O\!\big((\tfrac{1}{2}-\delta)\log_2 n \log_2 d_0\big)$
and the number of hidden neurons  
\begin{equation} \label{eq:DMLP-4}
m \;=\; O\!\Big(
  n^{\,1-2\delta}\,
  (\log_2 n)^{\tfrac{3}{2}(d_0-1)+1}\,
  \bigl(\tfrac{1}{2}-\delta\bigr)^{\tfrac{3}{2}(d_0-1)+1}\,
  (d_0-1)
\Big)
= o \big(n^{\,1-\delta}\big),
\end{equation}
Here the input dimension $d_0$ is fixed or grows with $n$ at the rate 
$d_0=o\left({\log n}/{\log \log n}\right)$.
% by noting $\bigl(\tfrac{1}{2}-\delta\bigr)^{\tfrac{3}{2}(d_0-1)+1}(d_0-1) \prec 1$ when $d_0$ is reasonably large.  
% If the input dimension is allowed to grow with \(n\), say \(d=d_n\), assume $d_n=o\left({\log n}/{\log\log n}\right)$.
\end{proof}

% 

% %\end{document}

% \newpage
% \appendix
% %\onecolumn
% \begin{center} 
% {\LARGE \bf Appendix}
% \end{center}

% \setcounter{assumption}{0}
% \renewcommand{\theassumption}{A\arabic{assumption}}
% \setcounter{table}{0}
% \renewcommand{\thetable}{A\arabic{table}}
\providecommand{\theHtable}{}
\renewcommand{\theHtable}{A.\arabic{table}}
\providecommand{\theHequation}{}
\renewcommand{\theHequation}{A.\arabic{equation}}
\providecommand{\theHfigure}{}
\renewcommand{\theHfigure}{A.\arabic{figure}}
\providecommand{\theHlemma}{}
\renewcommand{\theHlemma}{A.\arabic{lemma}}

\clearpage
\begingroup
\appendix
\setcounter{section}{0}
\renewcommand{\thesection}{S\arabic{section}}
\renewcommand{\thesubsection}{S\arabic{section}.\arabic{subsection}}
\renewcommand{\thesubsubsection}{S\arabic{section}.\arabic{subsection}.\arabic{subsubsection}}
\providecommand{\theHsection}{}
\renewcommand{\theHsection}{S.\arabic{section}}
\providecommand{\theHsubsection}{}
\renewcommand{\theHsubsection}{S.\arabic{section}.\arabic{subsection}}
\setcounter{assumption}{0}
\renewcommand{\theassumption}{S\arabic{assumption}}
\setcounter{table}{0}
\renewcommand{\thetable}{S\arabic{table}}
\providecommand{\theHtable}{}
\renewcommand{\theHtable}{S.\arabic{table}}
\setcounter{equation}{0}
\renewcommand{\theequation}{S.\arabic{equation}}
\providecommand{\theHequation}{}
\renewcommand{\theHequation}{S.\arabic{equation}}
\setcounter{figure}{0}
\renewcommand{\thefigure}{S\arabic{figure}}
\providecommand{\theHfigure}{}
\renewcommand{\theHfigure}{S.\arabic{figure}}
\setcounter{lemma}{0}
\renewcommand{\thelemma}{S\arabic{lemma}}
\providecommand{\theHlemma}{}
\renewcommand{\theHlemma}{S.\arabic{lemma}}

\begin{center}
{\bf \large SUPPLEMENTARY MATERIAL}
\end{center}

\section{Additional Numerical Results}\label{ref.Numerical}

 This subsection presents numerical results that supplement those in the main body of the paper. The narrow DNN models were trained with a Tesla T4, while the wider DNN model were trained with an NVIDIA A100-SXM4-40GB.

 \subsection{Experimental setting}
 Table \ref{TabSetting} shows the detailed hyper parameter settings described in the manuscripts. All models were trained using SGD with a momentum coefficient of $0.9$, except for the MNIST  for which a momentum coefficient of $0.95$ was used. 

\begin{table}[!htbp]
\caption{Learning schedule for experiments, where $L$ denotes the width of hidden layers.}
\label{TabSetting}
\centering
\begin{adjustbox}{width=1.0\textwidth}
\begin{tabular}{cccccccc}
\toprule
Dataset & MNIST & Simulated Data & Boston & Yacht & Energy & Protein & CelebA \\
\midrule
Learning rate & 0.01 $(L\leq 2000)$ & 0.005 $(L\leq 10000)$, 0.001  $(10000<L\leq 30000)$ & 0.0005 & 0.0002 & 0.0005 & 0.0002 &0.05\\  
              & 0.001 $(L>2000)$ &  0.0005 $(30000<L\leq 70000)$, 0.0001 $(L>70000)$ &                    \\
Total epochs & 4000 & 12000 & 10000 & 10000 & 10000& 2000& 100 \\
Mini-batch size & 128 & 100 & 50 & 50 & 50 & 300&  64\\
\bottomrule
\end{tabular}
\end{adjustbox}
\end{table}

\subsection{A Test for the IRO Algorithm} \label{supp:IRO}

We simulate data from the following model:  
\begin{equation} \label{simueq}
\begin{split}
  x_{ij} &\sim Unif[-2,2],\quad j=1,\dots,5, \\
  y_i & = \frac{5x_{i1}}{1+x_{i2}^2}+5sin(x_{i3}x_{i4})+2x_{i5}+e_i, \\
\end{split}
\end{equation}
where $e_i \sim N(0,1)$. We set the training sample size $n_{train}=500$ and the test sample size $n_{test}=250$. 
We trained a one-hidden-layer DNN: $p$-L-1, with $p=5$ and  $L\in{2,4,8,16,32,50,100,120, 150}$,  and the ReLu activation function.  We train the models using both  SGD and IRO.
 Using SGD, we train the model for 12000 epochs with a learning rate of 0.005, a momentum coefficient of 0.9, and a batch size of 100. 
 Using IRO, we train the model for 6000 steps with the model initialized by the model at the 6000 epochs of SGD training. 
 For IRO, the imputation step is conducted by one step Langevin Dynamics update with step size $1e-6$, $\sigma_1^2 = 1e-2, \sigma_2^2=1e-3$. 
 
 Figure \ref{IRO_vs_SGD} shows that the performance of IRO and SGD are similar, see also Table \ref{backup_table} for numerical details. In practice, the IRO algorithm needs to solve a series of regressions on the entire data set for every iteration, it could be slow for large data sets and networks. So we use SGD in all of our experiments, while using the StoNet together with IRO as a bridge for 
transferring some properties of linear models to deep neural networks.  
    
    \begin{figure}[htbp]
        \centering
        \includegraphics[scale=0.5]{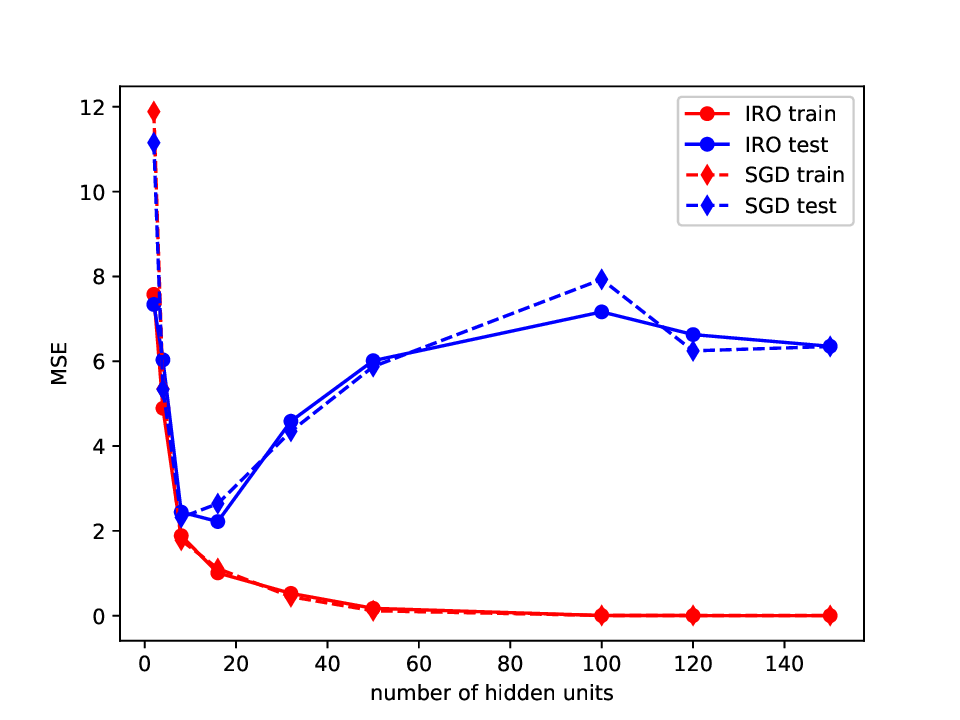}
        \caption{Training MSE and Testing MSE of neural network trained with SGD and IRO}
        \label{IRO_vs_SGD}
    \end{figure}
    
    \begin{table}[H]
        \centering
         \caption{Comparison of the training MSE and test MSEs produced by the neural networks trained using SGD and IRO 
          for the simulated nonlinear regression example.}
          \label{backup_table}
        \begin{adjustbox}{width=1.0\textwidth}
        \begin{tabular}{cccccccccc} \toprule
             & L = 2 &  L = 4 & L = 8 & L = 16 & L = 32 & L = 50 & L = 100 & L = 120 & L = 150 \\ \midrule
            IRO train &  7.577 &  4.891 &  1.886 &  1.009 &  0.526 &  0.174 &  0.004 &  0.002 &  0.002 \\
            IRO test & 7.339 &  6.033 &  2.443 &  2.219 &  4.585 &  6.013 &  7.164 &  6.629 &  6.352  \\ \midrule
            SGD train & 11.887 &  5.349 &  1.779 &  1.112 &  0.446 &  0.115 &  0.005 &  0.005 &  0.002  \\
            SGD test & 11.153 &  5.337 &  2.316 &  2.642 &  4.347 &  5.873 &  7.931 &  6.245 &  6.348  \\ \bottomrule
        \end{tabular}
        \end{adjustbox}
    \end{table}

\subsection{A nonlinear regression example} \label{suppRegression}

%We used batch normization for every layer.
% The data were generated from the following model:  
% \begin{equation} \label{simueq}
% \begin{split}
%   x_{ij} &\sim Unif[-2,2],\quad j=1,\dots,5, \\
%   y_i & = \frac{5x_{i1}}{1+x_{i2}^2}+5sin(x_{i3}x_{i4})+2x_{i5}+e_i, \\
% \end{split}
% \end{equation}
% where $e_i \sim N(0,1)$. We set the training sample size $n_{train}=500$ 
% and the test sample size $n_{test}=250$. 
% We trained a one-hidden-layer DNN: 5-L-1, with $L$ ranging from 2 to 256,000 and the ReLu activation function. 
% Figure \ref{fig:nonlinearReg} shows the plots for the resulting training and test errors. 

The datasets were simulated from model (\ref{simueq}) as in Section \ref{supp:IRO}. We 
trained a one-hidden-layer DNN: $p$-L-1, with $L$ ranging from 2 to 256,000 and the ReLu activation function. 
Figure \ref{fig:nonlinearReg} shows the plots for the training and test errors. 
Additionally, we fixed the weights from 
the input layer to the first hidden layer, and trained those from the second hidden layer to the output layer only. 
In this case, the neural network can still achieve zero training 
errors, but the test errors are very large.  The results are 
also summarized in Figure \ref{fig:nonlinearReg}.

\begin{figure}[htbp]
    \centering
    \includegraphics[width=0.45\textwidth]{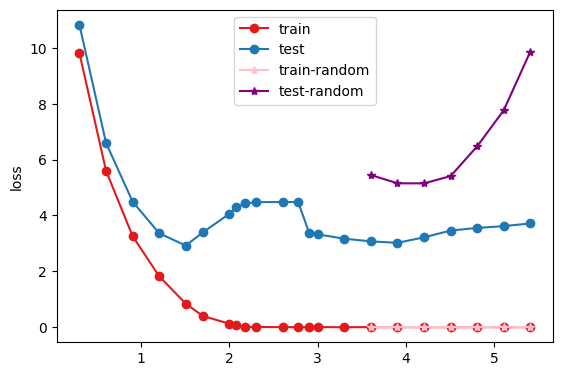}
    \caption{A simulated example for model (\ref{simueq}) with the network 
    architecture $p$-L-1, 
    where the $x$-axis represents $\log_{10}(L)$ and 
    the $y$-axis represents the mean squared error (MSE) averaged over 5 different datasets;  and the `train random' and `test random' represent, respectively, the training and test errors obtained 
    by the networks with the first layer weights being fixed to 
    random numbers.}
    \label{fig:nonlinearReg}
\end{figure}

In our experience, the performance of SGD is primarily sensitive to  learning rates. 
To explore this issue, 
we re-ran the  experiment with different learning rates. 
Specifically, we set the learning rate in the form \(\alpha\,\gamma_t\), where \(\{\gamma_t\}\) denotes the baseline (“standard”) schedule employed previously and reported in Table~\ref{TabSetting}, and \(\alpha \in \{2/3,\, 4/5,\, 5/4,\, 3/2\}\). 
Figure~\ref{fig:double-descent-lr} summarizes 
the training and test errors across different values of $\alpha$. 
The comparison shows: 
\begin{itemize} 
\item[(i)] The network training errors are fairly 
 robust to learning rates 
 (see red curves): When \(L\) is reasonably large, say \(L\ge 100\) (equivalently, \(\log_{10} L \ge 2\)), the training errors 
  consistently converge to 0.

\item[(ii)] For sublinear-width networks, the test errors (blue curves)
 are fairly robust to learning rates: In the sublinear regime (with $L<500$ or, equivalently,
\(\log_{10}(L)\le 2.7\)), the network test-error
curves are nearly unchanged as \(\alpha\) varies. 

\item[(iii)] For wide networks, the test errors are  sensitive to the learning rate: In the wide regime (with $L\geq 500$ or, equivalently, 
 \(\log_{10}(L) \geq 2.7\)), the network test-error trajectories differ noticeably when $\alpha$ is large.
 \end{itemize}
% \textcolor{black}{As indicated by Figure \ref{fig:double-descent-lr}, 
% our experiments also show that there exist sublinear-width networks that achieve near-optimal test error across a broad range of hidden-layer widths.}

\begin{figure}[H]
    \centering
    \begin{subfigure}{0.4\linewidth}
        \centering
        \includegraphics[width=\linewidth]{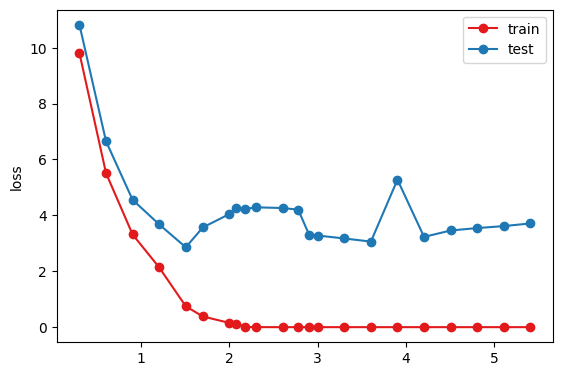}
        \caption*{(a) Larger learning rate with $\alpha=\tfrac{5}{4}$}\label{fig:big125}
    \end{subfigure}\hfill
    \begin{subfigure}{0.4\linewidth}
        \centering
        \includegraphics[width=\linewidth]{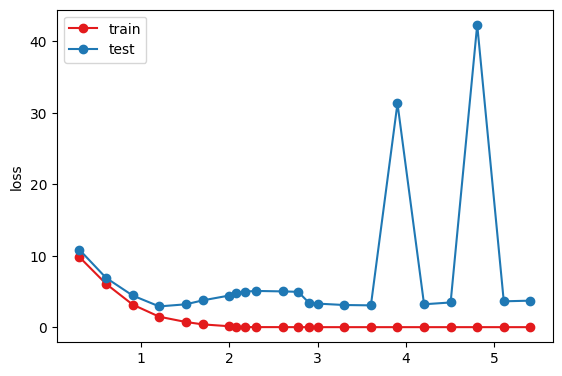}
        \caption*{(b)Larger learning rate with $\alpha=\tfrac{3}{2}$}\label{fig:big150}
    \end{subfigure}
    %\vspace{0.6em}
    \begin{subfigure}{0.4\linewidth}
        \centering
        \includegraphics[width=\linewidth]{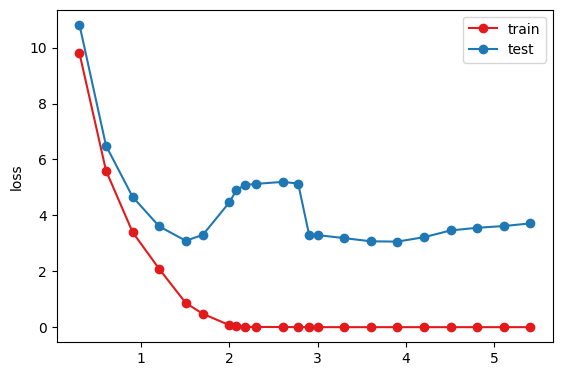}
        \caption*{(c) Smaller learning rate with $\alpha=\tfrac{4}{5}$}\label{fig:small080}
    \end{subfigure}\hfill
    \begin{subfigure}{0.4\linewidth}
        \centering
        \includegraphics[width=\linewidth]{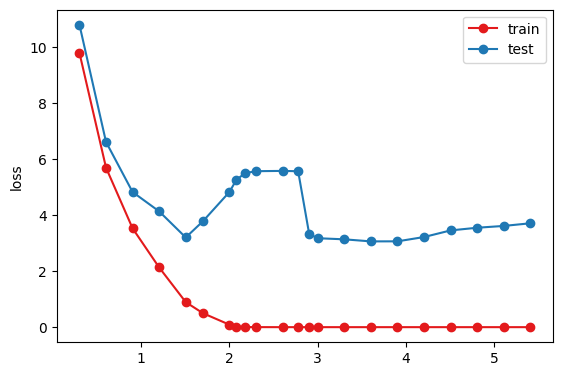}
        \caption*{(d)Smaller learning rate with $\alpha=\tfrac{2}{3}$}\label{fig:small066}
    \end{subfigure}
    \caption{
   An ablation study (with the sample size $n=500$) 
    for learning rates, where the horizontal axis is \(\log_{10}(L)\); the vertical axis is the mean squared error (MSE),
averaged over five independent datasets. 
%In particular, the region with\(\log_{10}(L)\le 2.7\) corresponds to sublinear-width DNNs, while \(\log_{10}(L)\ge 2.7\) corresponds to wide DNNs.
    % Ablation study for the learning rate of SGD, where the sample size 
    % $n=500$, the $x$-axis represents $\log_{10}(L)$, and 
    % the $y$-axis represents the mean squared error (MSE) averaged over 5 different datasets. In particular, 
    % the region with $\log(L) \leq 2.7$ corresponds to sublinear DNNs, and that with $\log(L) \geq 2.7$ corresponds to wide DNNs. 
    }
    \label{fig:double-descent-lr}
\end{figure}

\subsection{Comparison of sublinear-width and wide DNNs for nonlinear regression} \label{result:sublinear}

The datasets were simulated from model (\ref{simueq}) as in Section \ref{supp:IRO}. 
We trained neural networks of different architectures: $p$-L-1, $p$-L-L-1, and $p$-L-L-L-L-1.  
The training parameter settings are given in  Table~\ref{TabSetting}, except that a learning rate of 0.002 was used 
for the architecture $p$-L-L-L-L-1 with $L<1000$. 
For the  architecture  $p$-L-L-L-L-1 with $L=1000$, the learning rate was further reduced to 0.0001 in order to prevent gradient explosion.
Table~\ref{tab:narrow-wide comparison} reports the mean squared error (MSE) (with standard deviations given in parentheses), averaged over five independent runs, for both training and test sets. 
The comparison strongly suggests that sublinear-width networks can even outperform wide networks in prediction 
for this example. 

\begin{table}[htbp]
\centering
\caption{Training and test errors, measured in  MSE averaged over 5 independent runs (with standard deviations given in parentheses), 
 for nonlinear regression (\ref{simueq}) with network  architecture 
     ``$p$-L-$\cdots$-L-1'', where $h$ represents the number of
      hidden layers. The cases with the test MSE$<3.10$  
       are highlighted in red.}
\label{tab:narrow-wide comparison}
%\resizebox{\textwidth}{!}{%
\begin{adjustbox}{width=0.85\textwidth}
\begin{tabular}{ccccccccc} \hline
\multirow{2}{*}{Regime} & \multirow{2}{*}{Width (\textbf{$L$})} & 
\multicolumn{2}{c}{h=1} & 
\multicolumn{2}{c}{h=2} & 
\multicolumn{2}{c}{h=4} \\
\cline{3-8}
& & Train & Test & Train & Test & Train & Test \\
\hline
\multirow{20}{*}{Sublinear} & 
 2   & 9.80 (2.43) & 10.82 (3.00) & 12.51 (7.46) & 13.56 (6.10)  
& 20.23 (9.83) & 20.78 (10.25) \\
&16  & 2.02 (0.11) & 3.75 (0.50) & 0.35 (0.10) & 3.43 (0.20) 
%& 0.15 (0.05) & 3.53 (0.33) 
& 0.06 (0.02) & 4.21 (0.83) \\
&50  & 0.57 (0.05) & 4.13 (0.27) & 0.00 (0.00) & 3.78 (0.29) 
%& 0.00 (0.00) & 3.75 (0.49) 
& 0.00 (0.00) & 3.33 (0.47) \\
&100 & 0.15 (0.05) & 4.95 (0.56) & 0.00 (0.00) & 3.21 (0.33)  
%0.00 (0.00) & 3.14 (0.28) 
& {\bfseries\color{red} 0.00 (0.00)} & {\bfseries\color{red} 3.05 (0.41)} \\
&125 & 0.06 (0.07) & 4.54 (0.60) & {\bfseries\color{red} 0.00 (0.00)} & {\bfseries\color{red} 3.00 (0.29)} 
%& 0.00 (0.00) & 3.14 (0.40) 
& 0.00 (0.00) & 3.24 (0.33) \\
& 150 & 0.01 (0.00) & 4.98 (0.89) & 0.00 (0.00) & 3.25 (0.47) 
%0.00 (0.00) & 3.26 (0.42) 
& {\bfseries\color{red} 0.00 (0.00)} & {\bfseries\color{red} 2.99 (0.33)} \\
& 175 & 0.00 (0.00) & 5.00 (0.95) & {\bfseries\color{red} 0.00 (0.00)} & {\bfseries\color{red}  2.97 (0.32)} 
%& 0.00 (0.00) & 3.22 (0.30) 
& {\bfseries\color{red} 0.00 (0.00)} & {\bfseries\color{red} 2.96 (0.29)} \\
& 200 & 0.00 (0.00) & 4.41 (0.55) & 0.00 (0.00) & 3.26 (0.36) 
%& 0.00 (0.00) & 3.11 (0.23) 
& {\bfseries\color{red} 0.00 (0.00)} & {\bfseries\color{red} 3.01 (0.31)} \\
& 225 & 0.00 (0.00) & 4.29 (0.55) & 0.00 (0.00) & 3.11 (0.28) 
%& 0.00 (0.00) & 3.17 (0.40) 
& {\bfseries\color{red} 0.00 (0.00)} & {\bfseries\color{red} 3.08 (0.29)} \\
& 250 & 0.00 (0.00) & 4.27 (0.53) & 0.00 (0.00) & 3.18 (0.37) 
%& 0.00 (0.00) & 3.16 (0.30) 
& 0.00 (0.00) &  3.10 (0.31) \\
& 275 & 0.00 (0.00) & 4.17 (0.59) & 0.00 (0.00) & 3.18 (0.36) 
%& 0.00 (0.00) & 3.29 (0.38) 
& {\bfseries\color{red} 0.00 (0.00)} & {\bfseries\color{red} 3.05 (0.34)} \\
& 300 & 0.00 (0.00) & 4.10 (0.67) & {\bfseries\color{red}0.00 (0.00)} & {\bfseries\color{red} 3.07 (0.31)} 
%& 0.00 (0.00) & 3.23 (0.40) 
& {\bfseries\color{red}0.00 (0.00)} & {\bfseries\color{red} 3.05 (0.31)} \\
& 325 & 0.00 (0.00) & 3.74 (0.56) & 0.00 (0.00) & 3.11 (0.36) 
%& 0.00 (0.00) & 3.31 (0.39) 
& 0.00 (0.00) & 3.15 (0.32) \\
& 350 & 0.00 (0.00) & 3.93 (0.51) & 0.00 (0.00) & 3.14 (0.38) 
%& 0.00 (0.00) & 3.24 (0.34) 
& {\bfseries\color{red} 0.00 (0.00)} & {\bfseries\color{red} 3.05 (0.32)} \\
& 375 & 0.00 (0.00) & 3.82 (0.51) & {\bfseries\color{red} 0.00 (0.00)} & {\bfseries\color{red} 3.01 (0.30)} 
%& 0.00 (0.00) & 3.21 (0.35) 
& 0.00 (0.00) & 3.12 (0.35) \\
& 400 & 0.00 (0.00) & 3.76 (0.52) & 0.00 (0.00) & 3.16 (0.30) 
%& 0.00 (0.00) & 3.23 (0.42) 
& 0.00 (0.00) & 3.18 (0.40) \\
& 425 & 0.00 (0.00) & 3.70 (0.54) & 0.00 (0.00) & 3.10 (0.26) 
%& 0.00 (0.00) & 3.12 (0.29) 
& 0.00 (0.00) & 3.16 (0.31) \\
& 450 & 0.00 (0.00) & 3.74 (0.52) & 0.00 (0.00) & 3.13 (0.40) 
%& 0.00 (0.00) & 3.29 (0.37) 
& {\bfseries\color{red}0.00 (0.00)} & {\bfseries\color{red} 3.05 (0.31)} \\
& 475 & 0.00 (0.00) & 3.57 (0.45) & {\bfseries\color{red}0.00 (0.00)} & {\bfseries\color{red} 3.08 (0.37)} 
%& 0.00 (0.00) & 3.23 (0.29) 
& 0.00 (0.00) & 3.19 (0.34) \\ \midrule 
\multirow{6}{*}{Wide} 
& 500 & 0.00 (0.00) & 3.56 (0.52) & 0.00 (0.00) & 3.22 (0.37) 
%& 0.00 (0.00) & 3.32 (0.33) 
& 0.00 (0.00) &  3.10 (0.27) \\  
& 600 & 0.00 (0.00) & 3.78 (0.64) & {\bfseries\color{red}0.00 (0.00)} & {\bfseries\color{red} 3.06 (0.37)} 
%& 0.00 (0.00) & 3.29 (0.33) 
& 0.00 (0.00) & 3.25 (0.42) \\
& 700 & 0.00 (0.00) & 3.62 (0.54) & 0.00 (0.00) & 3.13 (0.41) 
%& 0.00 (0.00) & 3.35 (0.50) 
& 0.00 (0.00) & 3.24 (0.45) \\
& 800 & 0.00 (0.00) & 3.49 (0.49) & {\bfseries\color{red}0.00 (0.00)} & {\bfseries\color{red} 3.05 (0.31)} 
%& 0.00 (0.00) & 3.56 (0.45) 
& 0.00 (0.00) & 3.36 (0.42) \\
& 900 & 0.00 (0.00) & 3.47 (0.55) & 0.00 (0.00) & 3.20 (0.41) 
%& 0.00 (0.00) & 3.65 (0.54) 
& 0.00 (0.00) & 3.12 (0.33) \\
& 1000 & 0.00 (0.00) & 3.32 (0.49) & 0.00 (0.00) & 3.17 (0.44) 
%& 0.00 (0.00) & 3.51 (0.68) 
& 0.00 (0.00) & 3.83 (0.52) \\
\hline
\end{tabular} 
\end{adjustbox}
\end{table}

\newpage

\subsection{Feature Learning Consistency}

See Table \ref{CCtab:supp} and  Table \ref{CCtab:eigenvalue}.

\begin{table}[!ht]
\centering
\caption{
Canonical correlations $\rho_{4,1:k'}$ and $\rho_{5,1:k'}$ achieved by the network $p$-5-5-1 with $n=50,000$ for the simulated example in Section \ref{sect:feature}, 
where the canonical correlation and its standard deviation (in the parenthesis) are calculated by averaging over 5 independent datasets.}
\label{CCtab:supp}
\begin{adjustbox}{max width=1.0\textwidth} 
\begin{tabular}{cccccc} \toprule 
 $k$ & $\rho_{k,1:1}$ & $\rho_{k,1:2}$ & $\rho_{k,1:3}$ & $\rho_{k,1:4}$ & $\rho_{k,1:5}$  \\ \midrule 
 4& 0.03(0.01) & 0.05(0.01) & 0.05(0.01) & 1.00(0.00) & 1.00(0.00)
\\ 
5 & 0.14(0.07) & 0.17(0.07) & 0.18(0.07) & 0.19(0.07) & 1.00(0.00)

\\ \bottomrule
\end{tabular}
\end{adjustbox}
\end{table}

\begin{table}[!ht]
\centering
\caption{
Eigenvalues of $\bw_1^T \bw_1$ (denoted by ``true'') and those of 
$\hat{\bw}_1^T\hat{\bw}_1$ obtained by the network $p$-5-5-1 with 
different sample sizes for the simulated example in Section \ref{sect:feature}, where the mean and standard deviation (in the parenthesis) are calculated by averaging over 5 independent datasets.  
} 
\label{CCtab:eigenvalue}
\begin{adjustbox}{max width=1.0\textwidth} 
\begin{tabular}{ccccccc} \toprule 
 $n$ &  Model  & $\lambda_1$ & $\lambda_2$ & $\lambda_3$ & 
 $\lambda_4$ &  $\lambda_5$ \\ \midrule 
 --- & True & 16.86(0.55)& 13.97(0.24)& 10.32(0.46)& 6.44(0.73)& 4.56(0.33)\\ \midrule 
 500& p-5-5-1 &20.43(1.72)&12.34(1.26)&8.69(1.16)&5.01(0.97)&
 3.13(0.63)\\
%  & p-1000-1000-1 & 21.67(0.12) & 20.76(0.05) & 19.96(0.21) & 19.41(0.19)&18.93(0.16)\\
%  & p-10-10-1 & 10.87(1.36)  & 7.69(1.34)  & 5.63(1.1)  &4.06(0.88)  & 1.91(0.18)
% \\
%  & p-10-10-10-1 & 9.35(1.22) & 6.65(1.23) & 4.87(1.07) & 3.25(0.91) & 1.80(0.18) 
% \\
%  & p-10-10-10-10-1 & 7.51(0.72) & 5.25(0.67) & 3.95(0.77) & 3.18(0.67) & 1.83(0.11)\\
%  & p-10-10-10-10-10-1 & 6.79(0.71) & 5.17(0.61) & 3.49(0.42) & 2.75(0.46) & 1.81(0.17) \\
 \midrule 
 50000 & p-5-5-1 & 17.62(0.58) & 14.13(0.27) & 10.24(0.51) &  6.61(0.79) & 4.56(0.35) \\
%  & p-1000-1000-1 & 37.88(2.31) & 34.59(1.96) & 31.87(2.21) & 26.40(1.18) & 23.92(1.08)\\
%  & p-10-10-1 & 19.32(1.85) & 15.03(0.77) & 10.89(0.91) & 6.67(0.89) & 4.57(0.39)\\
%  & p-10-10-10-1 & 14.25(0.99) & 12.10(0.54) &  8.38(0.93) & 5.38(0.79)  & 3.69(0.42) \\
%  & p-10-10-10-10-1 & 13.72(0.19) & 11.15(0.34) & 8.52(0.72) & 4.68(0.49) & 3.70(0.39) \\
% & p-10-10-10-10-10-1 &  12.56(1.5) & 10.48(0.93) & 7.20(1.02) &  4.46(0.83) &  3.19(0.34)
%  \\
 \bottomrule
\end{tabular}
\end{adjustbox}
\end{table}

\subsection{UCI} 

See Table \ref{boston}, Table \ref{yacht}, and Table \ref{energy}.

\begin{table}[!ht]
    \caption{Training and test  errors, measured in MSE,
     for the Boston Housing Dataset ($n=506$, $p=13$) with network  
     ``$p$-L-L-$\cdots$-L-1'', where $h$ represents of the number of
      hidden layers, and five random splits were done with 
      $(n_{\rm train},n_{\rm test})=(400,106)$. he best test errors
are highlighted in bold.
      }
      \label{boston}
    \centering
    \begin{adjustbox}{max width=1.0\textwidth}
    \begin{tabular}{ccccccccc}
        \toprule
         Regime & Width (L) & & $h=2$ & $h=3$ & $h=4$ & $h=5$ & $h=6$ & $h=7$ \\
        \midrule
                \multirow{4}{*}{Sublinear} 
    %     & 20 & Train  &  \\
    %     &  &  Test  &  \\ \cline{2-9}
    % &  50 & Train  & 0.12(0.02)& 0.05(0.02)& 0.01(0.00) & 0.00(0.00) & 0.00(0.00) & 0.00(0.00) \\
    %     &  &  Test  &  4.36(0.56) & 3.93(0.16) & 3.38(0.17) & 3.44(0.14) & 3.32(0.15) & 3.42(0.23)\\ \cline{2-9}
             & 100 & Train  & 0.04(0.01) & 0.00(0.00) & 0.00(0.00) & 0.00(0.00) & 0.00(0.00) & 0.00(0.00) \\
        &  &  Test  & 3.86(0.23) & 3.59(0.25) & 3.30(0.20) & 3.21(0.13) & 3.02(0.15) &  3.00(0.14) \\ \cline{2-9}
         & 200 & Train  & 0.03(0.01) & 0.00(0.00) & 0.00(0.00) & 0.00(0.00) & 0.00(0.00) & 0.00(0.00) \\ 
        &  &  Test  & 3.59(0.30) & 3.29(0.15) & 3.23(0.19) & 3.11(0.16) & 3.09(0.19) & {\bf 2.97(0.12)} \\
        \midrule
     \multirow{6}{*}{Wide}    & 500 & Train  & 0.02(0.01) & 0.00(0.00) & 0.00(0.00) & 0.00(0.00) & 0.00(0.00) & 0.00(0.00) \\
        &  &  Test  & 3.52(0.17) & 3.27(0.12) & 3.16(0.17) & 3.07(0.16) & 3.01(0.17) & {\bf 2.97(0.20)} \\
                \cline{2-9}
         & 1000 & Train  & 0.02(0.00) & 0.00(0.00) & 0.00(0.00) & 0.00(0.00) & 0.00(0.00) & 0.00(0.00) \\
        &  &  Test  & 3.23(0.16) & 3.07(0.11) & 3.20(0.14) & 3.07(0.15) &  3.01(0.15) & 3.03(0.16) \\
                \cline{2-9}
        & 2000 & Train  & 0.02(0.00) & 0.00(0.00) & 0.00(0.00) & 0.00(0.00) & 0.00(0.00) & 0.00(0.00) \\
        &  &  Test  & 3.29(0.18) & 3.09(0.11) & 3.16(0.16) & 3.00(0.13) &  2.98(0.18) & 3.05(0.19) \\
        \bottomrule
    \end{tabular}
    \end{adjustbox}
\end{table}

\begin{table}[!htbp]
\caption{Training and test  errors, measured in MSE,
     for the Yacht Dataset ($n=308$, $p=6$) with network structure
``$p$-L-L-$\cdots$-L-1'', where $h$ represents the number of
hidden layers, and five random splits were done with $(n_{\rm train},n_{\rm test})=(270,38)$.
 The best test errors are highlighted in bold.}
\label{yacht}
\centering
\begin{adjustbox}{width=1.0\textwidth}
\begin{tabular}{ccccccccc}
\toprule
Regime & Width (L) & & $h=2$ & $h=3$ & $h=4$ & $h=5$ & $h=6$ & $h=7$ \\
\midrule
\multirow{4}{*}{Sublinear} & 100 & Train  & 0.07(0.00) & 0.04(0.00) & 0.03(0.00) & {\bf 0.03(0.00)} & 0.03(0.00) & 0.02(0.00) \\
& & Test  & 0.49(0.08) & 0.37(0.07) & 0.36(0.09) & {\bf 0.29(0.05)} & 0.34(0.07) & 0.44(0.07) \\
\cline{2-9}
& 200 & Train  & 0.05(0.00) & 0.03(0.00) & 0.02(0.00) & 0.02(0.00) & 0.02(0.00) & 0.02(0.00) \\
& & Test  & 0.46(0.07) & 0.36(0.08) & 0.36(0.08) & 0.32(0.04) & 0.36(0.07) & 0.31(0.05) \\
\midrule
\multirow{6}{*}{Wide} & 500 & Train Error & 0.05(0.00) & 0.03(0.00) & 0.02(0.00) & 0.01(0.00) & {\bf 0.01(0.00)} & 0.01(0.00) \\
& & Test  & 0.39(0.06) & 0.33(0.08) & 0.32(0.07) & 0.30(0.05) & {\bf 0.29(0.05)} & 0.35(0.05) \\
\cline{2-9}
& 1000 & Train & 0.06(0.00) & 0.03(0.00) & {\bf 0.02(0.00)} & 
{\bf 0.02(0.00)} & 0.02(0.00) & 0.02(0.00) \\
& & Test & 0.35(0.09) & 0.32(0.07) & {\bf 0.29(0.05)} & {\bf 0.29(0.06)} & 0.32(0.06) & 0.37(0.07) \\
\cline{2-9} 
& 2000 & Train  & 0.06(0.00) & 0.03(0.00) & 0.02(0.00) & 0.02(0.00) & 0.02(0.00) & 0.03(0.00) \\
& & Test & 0.34(0.07) & 0.33(0.07) &  0.30(0.06) & 0.32(0.06) & 0.32(0.06) & 0.32(0.07) \\
\bottomrule
\end{tabular}
\end{adjustbox}
\end{table}

% \begin{table}[!htbp]
% \caption{Training and test  errors, measured in MSE,
%      for the Energy Dataset ($N=768$, $p=8$)  with network structure
% ``$p$-L-L-$\cdots$-L-1'', where $H$ represents the number of
% hidden layers}
% \label{energy}
% \centering
% \begin{adjustbox}{width=1.0\textwidth}
% \begin{tabular}{ccccccccc}
% \toprule
% Regime & Width (L) & & $H=2$ & $H=3$ & $H=4$ & $H=5$ & $H=6$ & $H=7$ \\
% \midrule
% \multirow{6}{*}{Narrow} & 100 & Train  & 0.14(0.01) & 0.03(0.01) & 0.02(0.00) & 0.01(0.00) & 0.01(0.00) & 0.01(0.00) \\
% & & Test  & 0.63(0.05) & 0.55(0.04) & 0.56(0.04) & 0.66(0.04) & {\bf 0.50(0.03)} & 0.57(0.04) \\
% \cline{2-9}
% & 200 & Train  & 0.06(0.00) & 0.01(0.00) & 0.00(0.00) & 0.00(0.00) & 0.00(0.00) & 0.00(0.00) \\
% & & Test  & 0.54(0.03) & {\bf 0.51(0.03)} & 0.54(0.04) & 0.54(0.07) & 0.55(0.04) & 0.57(0.05) \\
% \cline{2-9}
% & 500 & Train  & 0.04(0.00) & 0.00(0.00) & 0.00(0.00) & 0.00(0.00) & 0.00(0.00) & 0.00(0.00) \\
% & & Test  & {\bf 0.51(0.02)} & 0.55(0.03) & 0.53(0.02) & 0.55(0.04) & 0.52(0.05) & 0.54(0.03) \\
% \midrule
% \multirow{4}{*}{Wide} & 1000 & Train & 0.03(0.00) & 0.00(0.00) & 0.00(0.00) & 0.00(0.00) & 0.00(0.00) & 0.00(0.00) \\
% & & Test  & 0.55(0.03) & 0.50(0.01) & 0.50(0.03) & {\bf 0.46(0.02)} & 0.51(0.03) & 0.49(0.02) \\
% \cline{2-9}
% & 2000 & Train  & 0.03(0.00) & 0.00(0.00) & 0.00(0.00) & 0.00(0.00) & 0.00(0.00) & 0.00(0.00) \\
% & & Test  & 0.53(0.03) & 0.53(0.04) & {\bf 0.47(0.03)} & 0.54(0.04) & 0.57(0.06) & 0.58(0.05) \\
% \bottomrule
% \end{tabular}
% \end{adjustbox}
% \end{table}

\begin{table}[!htbp]
\centering
\caption{Training and test errors (MSE) for the Energy Dataset ($n=768$, $p=8$) with network structure ``$p$-L-L-$\cdots$-L-1'', where $h$ represents the number of hidden layers, and five random splits were done with $(n_{\rm train},n_{\rm test})=(650,118)$.
The best test errors are highlighted in bold.
}
\label{energy}
\begin{adjustbox}{width=1.0\textwidth}
\begin{tabular}{ccccccccc}
\toprule
Regime & Width (L) & & $h=2$ & $h=3$ & $h=4$ & $h=5$ & $h=6$ & $h=7$ \\
\midrule
\multirow{6}{*}{Sublinear} & 100 & Train  & 0.16(0.02) & 0.03(0.00) & 0.02(0.01) & 0.00(0.00) & 0.01(0.00) & 0.01(0.00) \\
& & Test  & 0.71(0.01) & 0.68(0.04) & 0.72(0.04) & 0.66(0.05) & { 0.64(0.05)} & 0.65(0.03) \\
\cline{2-9}
& 200 & Train  & 0.06(0.01) & 0.00(0.00) & 0.00(0.00) & 0.00(0.00) & 0.00(0.00) & 0.00(0.00) \\
& & Test  & 0.67(0.05) & 0.66(0.03) & 0.67(0.03) & 0.64(0.03) & 0.68(0.05) & { 0.64(0.05)} \\
\cline{2-9}
& 500 & Train  & 0.05(0.00) & 0.00(0.00) & 0.00(0.00) & 0.00(0.00) & {\bf 0.00(0.00)} & 0.00(0.00) \\
& & Test  & 0.63(0.04) & 0.59(0.02) & 0.61(0.02) & 0.66(0.03) & {\bf 0.58(0.03)} & 0.65(0.03) \\
\midrule
\multirow{4}{*}{Wide} & 1000 & Train & 0.03(0.00) & 0.00(0.00) & 0.00(0.00) & {\bf 0.00(0.00)} & 0.00(0.00) & 0.00(0.00) \\
& & Test  & 0.65(0.03) & 0.61(0.03) & 0.61(0.02) & {\bf 0.58(0.03)} & 0.62(0.03) & 0.60(0.03) \\
\cline{2-9}
& 2000 & Train  & 0.02(0.00) & 0.00(0.00) & {\bf 0.00(0.00)} & 0.00(0.00) & 0.00(0.00) & 0.00(0.00) \\
& & Test  & 0.65(0.03) & 0.61(0.03) & {\bf 0.58(0.03)} & 0.60(0.05) & 0.61(0.03) & 0.68(0.03) \\
\bottomrule
\end{tabular}
\end{adjustbox}
\end{table}

\endgroup

\end{document}